\newtheorem{theorem}{Theorem}
\newtheorem{lemma}{Lemma}
\newtheorem{definition}{Definition}
\newtheorem{observation}{Observation}
\newtheorem{remark}{Remark}
\definecolor{mydarkblue}{rgb}{0,0.08,0.45}
\title{Does Privacy Always Harm Fairness? Data-Dependent Trade-offs via Chernoff Information Neural Estimation}
\author{Arjun Nichani\textsuperscript{1}, Hsiang Hsu\textsuperscript{2}, Chun-Fu (Richard)  Chen\textsuperscript{2}, Haewon Jeong\textsuperscript{1} \\
\textsuperscript{1}University of California, Santa Barbara\\
\textsuperscript{2}JP Morgan Chase Global Technology Applied Research\\
\texttt{\{anichani, haewon\}@ucsb.edu}\\
\texttt{\{hsiang.hsu, richard.cf.chen\}@jpmchase.com }
}
\begin{document}

\maketitle

\begin{abstract}
    Fairness and privacy are two vital pillars of trustworthy machine learning. Despite extensive research on these individual topics, their relationship has received significantly less attention. In this paper, we utilize an information-theoretic measure \textit{Chernoff Information} to characterize the fundamental trade-off between fairness, privacy, and accuracy, as induced by the input data distribution. We first propose \textit{Chernoff Difference}, a notion of data fairness, along with its noisy variant, \textit{Noisy Chernoff Difference}, which allows us to analyze both fairness and privacy simultaneously. Through simple Gaussian examples, we show that Noisy Chernoff Difference exhibits three qualitatively distinct behaviors depending on the underlying data distribution. To extend this analysis beyond synthetic settings, we develop the Chernoff Information Neural Estimator (CINE), the first neural network–based estimator of Chernoff Information for unknown distributions. We apply CINE to analyze the Noisy Chernoff Difference on real-world datasets. Together, this work fills a critical gap in the literature by providing a principled, data-dependent characterization of the fairness–privacy interaction.
\end{abstract}

\section{Introduction}\label{sec:intro}

As machine learning (ML) systems are increasingly deployed in high-stakes settings, ensuring their trustworthiness has become critical. 
Among the many facets of trustworthy ML, fairness and privacy stand out as core concerns, yet both are frequently violated in practice. 
Unfairness has been documented across domains such as facial recognition \citep{garvie2016facial}, text-to-image generation \citep{friedrich2023fairdiffusioninstructingtexttoimage, Bianchi_2023}, and predictive tasks like recidivism \citep{barocas2016big, chouldechova2016fairpredictiondisparateimpact} and loan approvals \citep{das2021fairness}, prompting a broad literature on fair learning \citep{hardt2016equality, agarwal2018reductions, alghamdi2022beyond}. 
Privacy risks, highlighted by attacks that recover sensitive training data \citep{shokri2017membership, carlini2021extracting} and amplified by the rise of large-scale AI models \citep{Gomstyn2024}, have likewise spurred extensive research on differentially private (DP) learning \citep{Abadi_2016, papernot2018scalable, kang2020inputperturbationnewparadigm}.

\begin{figure}
    \centering
    \includegraphics[width=\linewidth]{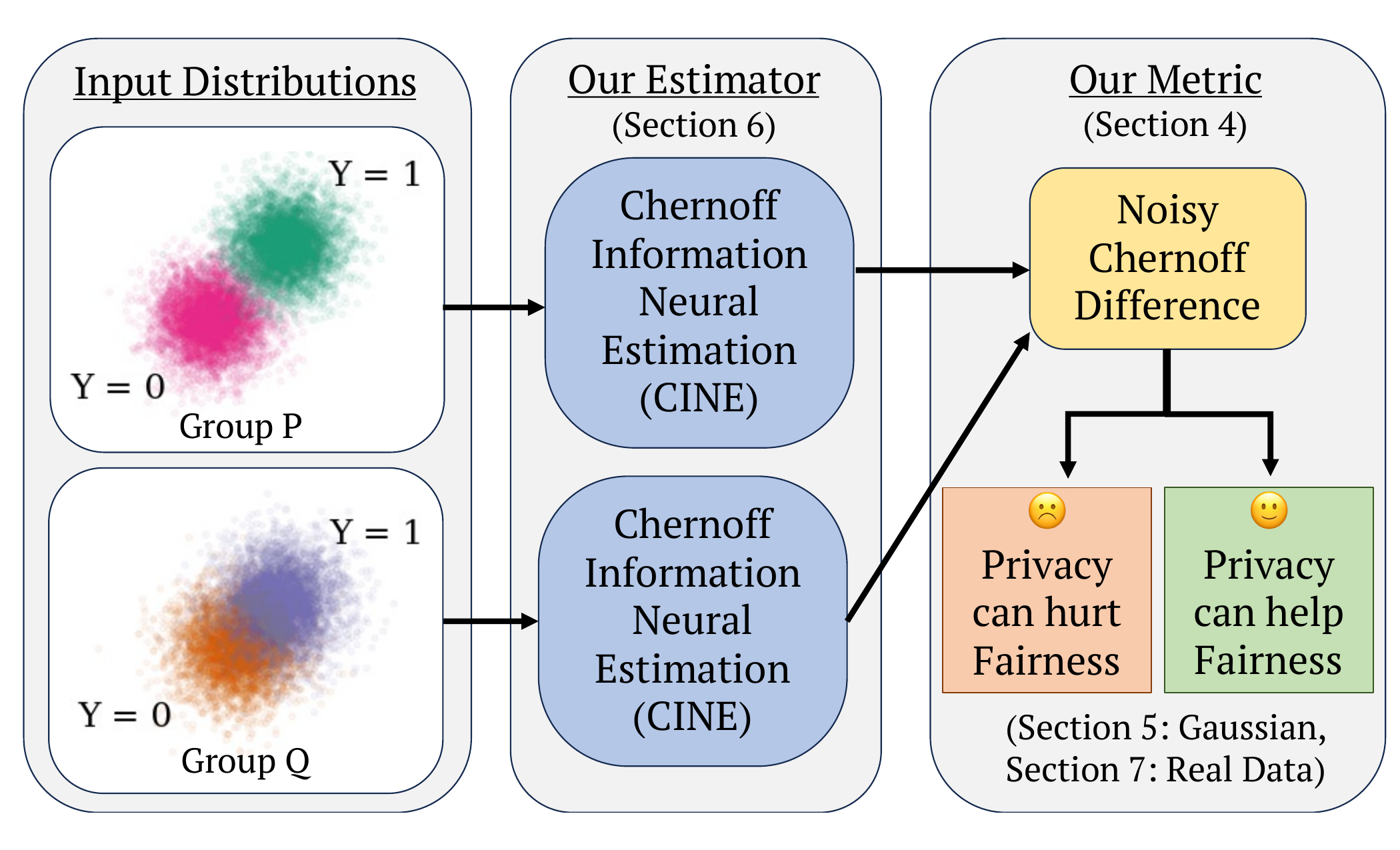}
    \caption{ \textbf{Summary of our contributions.} Starting from group-specific class-conditional distributions (left), we use CINE to estimate their Chernoff Information (middle), and compute the Chernoff Difference to diagnose whether privacy can help or hurt fairness (right).} \vspace{-3mm}
    \label{fig:placeholder}
\end{figure}

Yet while fairness and privacy have been deeply studied in isolation, their interaction remains less understood. 
Can models satisfy both simultaneously, or does enforcing one inevitably compromise the other? 
Recent work points to both incompatibility, exposing trade-offs between the two \citep{bagdasaryan2019differential, chang2021privacy, tran2021decision, sanyal2022unfair}, and potential compatibility under specific assumptions \citep{cummings2019compatibility, mangold2023differential, shahamsurvey}. In this work, we show that the answer can lie in the structure of the data itself.  We make the following contributions:

\begin{itemize}
    \item We introduce \textbf{Chernoff Difference (CD)}, a metric grounded in Chernoff Information, which serves as a notion of \textit{data fairness} by characterizing how groups differ in their classification separability. We then propose \textbf{Noisy Chernoff Difference}, an extension of CD that captures the fairness--privacy--accuracy triad by modeling how group separability evolves under privacy-preserving noise.
    \item Under Gaussian assumptions, we prove that fairness–privacy interactions fall into three distinct regimes: privacy may erode fairness, leave it unaffected, or even improve it, effectively granting \textit{``fairness for free''} when distributional conditions align.

    \item To bridge theory with practice, we develop, to our knowledge, the first algorithm to estimate Chernoff Information directly from data, leveraging advances in density ratio estimation via neural networks \citep{choi2022density}, named \textbf{Chernoff Information Neural Estimation} (CINE in Section~\ref{sec:estimation}).
    \item We apply CINE on more complex datasets, mixture of Gaussians, MNIST \citep{lecun1998gradient}, and UCI Adult \citep{adult_2}, and demonstrate that CD can capture the fairness–privacy dynamics in a wide range of settings.
    
\end{itemize}


\vspace{-2mm}
To the best of our knowledge, our findings provide the first concrete evidence that whether fairness and privacy are in conflict or in harmony is not a universal law, but a property of the underlying data distribution, 
thereby building the core foundation of a data-dependent understanding of the complex relationships between fairness, privacy, and accuracy.



\section{Related Works} \label{sec:related}

\textbf{Privacy and Fairness.}
The relationship between fairness and privacy has attracted increasing attention in recent years. \citet{bagdasaryan2019differential} show that DP disproportionately harms the accuracy of underrepresented groups, while \citet{chang2021privacy} demonstrate that minority groups are more vulnerable to membership inference attacks.  \citet{tran2021differentially} demonstrate that privacy can amplify disparities for groups closer to the decision boundary.
\citet{cummings2019compatibility} prove the impossibility of achieving \textit{exact} fairness and DP with nontrivial accuracy, but propose an algorithm that satisfies DP with approximate fairness under relaxed constraints. \citet{sanyal2022unfair} show that under long-tailed, imbalanced distributions and strict privacy, enforcing both fairness and privacy worsens accuracy, whereas \citet{mangold2023differential} establish that fairness disparities between private and non-private models decay inversely with sample size.
Several works also develop algorithms jointly ensuring fairness, privacy, and accuracy \citep{lyu2020differentiallyprivaterepresentationnlp, lowy2023stochasticoptimizationframeworkfair, ghoukasian2024differentially, jagielski2019differentially}. 
For a survey, see \citet{fioretto2022differential} and \citet{shahamsurvey}. Despite these advances, there is still a limited understanding of how the input data distribution shapes this relationship, as highlighted in a recent review paper \citep{yao2025sok}. Our work addresses this gap by providing a distribution-level characterization of how privacy reshapes group separability, offering a principled explanation of when privacy may harm or help fairness.
\paragraph{Chernoff Information.}
Chernoff Information, introduced by \citet{chernoff1952measure}, determines the optimal error exponent of Bayes-optimal classification. 
 \citet{nielsen2011, Nielsen_2013, nielsen2022revisiting} provides key properties we exploit throughout this work. 
Although information-theoretic tools have been applied to fairness and privacy \citep{ghassami2018fairnesssupervisedlearninginformation}, Chernoff Information itself has rarely been used in these domains. 
\citet{dutta2020there} show that disparities in Chernoff Information imply a trade-off between equal opportunity fairness and accuracy, while \citet{unsal2024} define Chernoff DP, where output distributions for neighboring inputs must have bounded Chernoff Information.
To our knowledge, our work is the first to connect Chernoff Information with the joint analysis of accuracy, fairness, and privacy, and to provide an estimator beyond synthetic settings.
\section{Problem Setting and Background}


 Consider a binary classification setting in which our data is defined by continuous, non-sensitive features $X$, sensitive attributes $S = \{0,1\}$, and labels $Y = \{0, 1\}$. Using these parameters, we can define our data distribution as a mixture of conditional distributions $P_0(x) = \Pr(X|S = 0, Y=0)$, $P_1(x) = \Pr(X|S = 0, Y=1)$, $Q_0(x) = \Pr(X|S = 1, Y=0)$, and $Q_1(x) = \Pr(X|S = 1, Y=1)$. From this definition, we can refer to our groups as $P$, to be the group where $S=0$ and $Q$, to be the group where $S=1$. Further, we make an equal prior assumption ($P(Y=0| S = s) = P(Y=1|S = s)$ for all $s$). We discuss a relaxation of this assumption in Section \ref{sec:real_data}.  For our model, we consider a split classifier setting where we train a different classifier for $P$ and $Q$. Under the assumption of infinite model complexity (and sufficient group information), any model will converge towards the split classifier setting \citep{wang2021split}.

 \textbf{Goal.} Our central aim is to examine how different underlying data distributions ($P_0, P_1, Q_0,$ and $Q_1$) affect the fundamental trade-off between fairness, privacy, and accuracy for a classifier.

\textbf{Privacy.} We consider differential privacy in this paper as defined below:

\begin{definition}
     A randomized learning algorithm \(\mathcal{A}\) is \((\varepsilon,\delta)\)-DP if, for every pair of neighboring datasets \(D\) and \(D'\) that differ in at most one element, and for every measurable subset \(S\) of the output space of \(\mathcal{A}\),
\begin{equation}
\Pr\!\bigl[\mathcal{A}(D) \in S\bigr]
\;\le\;
e^{\varepsilon}\,
\Pr\!\bigl[\mathcal{A}(D') \in S\bigr]
+
\delta.
\end{equation}
\end{definition}
Differentially private machine learning methods are commonly divided into three categories~\citep{jarin2022dp}: (1) input perturbation, (2) gradient/objective perturbation, and (3) output perturbation. Here, we focus on \textit{input perturbation}~\citep{fukuchi2017differentially,kang2020inputperturbationnewparadigm}, as it aligns with our goal of examining the effect of input data distribution on fairness and privacy. Empirical results further suggest that input perturbation offers advantages in regimes requiring high privacy constraints~\citep{zhao2020not}.

A standard approach to achieve $(\varepsilon, \delta)$-DP for input perturbation privacy is the Gaussian mechanism~\citep{dwork2014algorithmic}, which adds independent Gaussian noise $\sim \mathcal{N}(0, \eta^2)$ to each coordinate. Here, the noise parameter $\eta^2$ scales inversely to $\epsilon$ and $\delta$ \citep{dwork2014algorithmic}. Prior work on input perturbation attempts to determine, under specific conditions, the minimal $\eta^2$ required to achieve $(\varepsilon, \delta)-$DP, such as strong convexity of the loss function~\citep{kang2020inputperturbationnewparadigm} or a quadratic loss~\citep{fukuchi2017differentially}. However, the fundamental relationship between $\eta^2$ and $(\varepsilon, \delta)$ remains unchanged: larger $\eta^2$ corresponds to smaller $\varepsilon$ and $\delta$, indicating stronger privacy guarantees. Since our goal is to study \textit{whether} adding privacy tends to help or hurt fairness, rather than to quantify a precise privacy–fairness tradeoff for a specific mechanism, we abstract away these assumptions and adopt $\eta^2$ as our measure of privacy for the rest of the paper. For completeness, we provide an overview of the input perturbation framework and characterize the relationship between $\eta^2$ and $\varepsilon$ under different sets of assumptions in Appendix \ref{sec:fairpriv}.

\section{Chernoff Difference (CD)}

Here we introduce Chernoff Information and its relevant properties building towards Chernoff Difference and its noisy variant. For more discussion, please refer to Appendix \ref{sec:ci}. Recent work \citep{dutta2020there} has demonstrated that Chernoff Information can characterize the fundamental trade-off between fairness and accuracy. Building on this, we will leverage Chernoff Information as a tool to quantify the relationship between fairness, \textit{privacy,} and accuracy. We begin with the formal definition below.

\begin{definition}
    [Chernoff Information \citep{chernoff1952measure}] For two distributions $P_0(x)$ and $P_1(x)$ the Chernoff Information is given by:
    \begin{equation}
        C(P_0, P_1) = -\inf_{u\in(0,1)} \log(\int P_0(x)^{1-u}P_1(x)^udx).    
    \end{equation}
\end{definition}
Chernoff Information (CI) can be interpreted as a divergence that quantifies the \textit{separability} between $P_0$ and $P_1$ \citep{dutta2020there, nielsen2011}. For large values of CI, two hypotheses $P_0$ and $P_1$ are more separable, which indicates easier classification. Conversely, smaller values of CI imply less separability, and thus a more difficult classification setting. This is demonstrated by the role of CI in bounding the error exponent of the Bayes optimal classifier.

\begin{lemma}[\citep{nielsen2011}] \label{lem:bound}
    For hypotheses $P_0(x)$ under $Y=0$ and $P_1(x)$ under $Y=1$, CI bounds the error of the Bayes Optimal Classifier $H$:
    \begin{equation} \label{eq:CI_def}
        \Pr[H(X) \neq Y] \leq e^{-C(P_0, P_1)}.    
    \end{equation}
    
\end{lemma}
Asymptotically, \eqref{eq:CI_def} establishes an equivalence between Chernoff Information and the error of the Bayes-optimal classifier. Thus, CI provides a tight characterization of how well the best possible classifier performs given sufficiently large data (Appendix \ref{sec:tightness}). We next define Chernoff Difference (CD).

\begin{definition}
    [Chernoff Difference] The Chernoff Difference between group $P$ and group $Q$ is defined as follows:
    \begin{equation} \text{CD} = \left|C(P_0, P_1) - C(Q_0, Q_1)\right|.\end{equation}
\end{definition}
CD serves as a metric that compares the classification difficulty between two groups. A smaller CD indicates that the groups have similar separability between their positive and negative classes, whereas a larger CD implies a large disparity in separability. A recent work~\citep{dutta2020there} used CD to show an impossibility result: if CD is not zero, it is impossible to achieve fairness without sacrificing accuracy. While \citet{dutta2020there} draws a connection between CD and fairness, their analysis was limited to whether CD equals zero or not. In this work, we go one step further by relating the value of CD to the steepness of the fairness–accuracy curves. Below we clarify this relation, how CD operates as a notion of fairness, and how it relates to traditional notions of fairness like equal opportunity (EO).
\subsection{Chernoff Difference as a Data Fairness Measure}

Unlike model fairness metrics (e.g. EO), which quantify disparities induced by a particular trained classifier, we can think of CD as a  \textbf{data fairness} metric. The role of CD is not to evaluate a learned predictor, but to characterize the best achievable error behavior permitted by the data distributions $(P_0,P_1)$ and $(Q_0,Q_1)$\footnote{This can be extended to multi-group settings, Appendix \ref{ssec:multigroup}.}. 


This can be more formally examined by relating CD to error rates in the asymptotic regime. Formally, CI is equivalent to the error exponent of the Bayes-optimal classifier,
\begin{equation}
C(P_0,P_1) = \min\{E_{\mathrm{FP}}^P, E_{\mathrm{FN}}^P\},
\end{equation}
where $E_{\mathrm{FP}}$ and $E_{\mathrm{FN}}$ are the false positive and false negative error exponents. Consequently, CD is
\begin{equation}
\mathrm{CD}
=
\bigl|
\min\{E_{\mathrm{FP}}^P,E_{\mathrm{FN}}^P\}
-
\min\{E_{\mathrm{FP}}^Q,E_{\mathrm{FN}}^Q\}
\bigr|.
\end{equation}
Using the asymptotic relationships $E_{FP} \approx -\frac{1}{n}\log \text{FPR}$ and $E_{FN} \approx -\frac{1}{n}\log \text{FNR}$, CD corresponds to a disparity in the \textbf{dominant error rates on a log scale}:
\begin{equation}
\label{eq:log-error}
\mathrm{CD}
\approx
\frac{1}{n}\log \left|\frac{\max\{\mathrm{FPR}(P),\mathrm{FNR}(P)\}}{\max\{\mathrm{FPR}(Q),\mathrm{FNR}(Q)\}}\right|.
\end{equation}

Thus, CD captures \textit{a geometric difference} in the error rates.

\begin{remark} [Connection to existing fairness notions] CD captures \textit{a geometric difference} in the error rates, as opposed to  the arithmetic difference, commonly used in existing fairness metrics (e.g., EO). If the maximum error for both $P$ and $Q$ occurs at the FNR, then CD effectively captures the best possible FNR gap achievable by any classifier given the underlying data. Thus, CD plays a role analogous to Equal Opportunity. Both quantify a FNR gap, but while Equal Opportunity measures an additive difference, CD measures a multiplicative one. While these notions are not equivalent, they are related in meaningful ways. In the case of a fixed classifier for a single group, the relationship between the additive and multiplicative notion of fairness is monotonic. In the more general case, CD can serve as a rough lower and upper bound for the arithmetic-difference fairness metric (Theorem \ref{thm:lipschitz}) which can be tightened with knowledge of the error rates. In the following section, we empirically demonstrate that CD provides insight into the behavior of the EO fairness accuracy curve.
\end{remark}
\subsection{Noisy Chernoff Difference}

Finally, to incorporate privacy, we define a noisy variant of Chernoff Difference.

\begin{definition}[Noisy Chernoff Difference] \label{def:GNCD}
     For $\eta^2\geq 0$, we define the Noisy Chernoff Difference as:
    \begin{equation}
        \widetilde{\text{CD}}_{\eta^2} = \left|C(\widetilde{P}_0, \widetilde{P}_1) - C(\widetilde{Q}_0, \widetilde{Q}_1)\right|,
    \end{equation}
     where $\widetilde{P}_i = P_i + \mathcal{N}(0, \eta^2\mathbf{I})$ and $\widetilde{Q}_i = Q_i + \mathcal{N}(0, \eta^2\mathbf{I})$.

\end{definition}

While CD provides insight into the relationship between fairness and accuracy, $\widetilde{CD}_{\eta^2}$ provides a single value that can capture the relationship between all three: fairness, accuracy, \textit{and privacy}. More specifically, it allows us to analyze how adding noise (stronger privacy) affects both fairness and accuracy. We illustrate this relationship in the following section using Gaussian distributions.

\section{Gaussian Example: Privacy can provably help fairness}\label{sec:chernoff_info}

While we later show how to estimate CD from various distributions, including synthetic Gaussian mixtures and real datasets, we first provide clean insights into how different input distributions affect the fairness–privacy–accuracy trade-off using simple isotropic Gaussian examples. 

We define the conditional data distributions as follows:
 \begin{align}
     P_0 = \mathcal{N}(\mu_0, \sigma^2\mathbf{I}), 
     P_1 = \mathcal{N}(\mu_1, \sigma^2\mathbf{I}) \\ Q_0 = \mathcal{N}(\zeta_0, \tau^2\mathbf{I}), Q_1 = \mathcal{N}(\zeta_1, \tau^2\mathbf{I}),
 \end{align}  

This allows us to extend a result from \cite{dutta2020there} to derive a closed form expression for the Chernoff Difference.

\begin{lemma} \label{lemma} 
    When $P_0(x)\sim\mathcal{N}(\mu_0,\sigma^2\mathbf{I})$, $P_1(x)\sim\mathcal{N}(\mu_1,\sigma^2\mathbf{I})$, $Q_0(x)\sim\mathcal{N}(\zeta_0,\tau^2\mathbf{I})$, and $Q_1(x)\sim\mathcal{N}(\zeta_1,\tau^2\mathbf{I})$, the Chernoff Difference is given as the absolute difference of Bhattacharyya distances:
    \begin{equation}
        \text{CD} = \left|\frac{\|\mu_0 - \mu_1\|_2^2}{8\sigma^2} - \frac{\|\zeta_0 - \zeta_1\|_2^2}{8\tau^2}\right|.
    \end{equation}
     
\end{lemma}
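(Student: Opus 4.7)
The plan is to compute each of the two Chernoff Information terms $C(P_0,P_1)$ and $C(Q_0,Q_1)$ in closed form, then subtract and take the absolute value as dictated by the definition of CD. Since the two pairs have the same isotropic structure, the work reduces to a single computation for Gaussians sharing a common covariance $\sigma^2 \mathbf{I}$, which can then be instantiated twice.

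First I would exploit the exponential-family structure. Writing out the integrand $P_0(x)^{1-u}P_1(x)^u$ for two equal-covariance Gaussians, the quadratic in $x$ inside the exponent can be completed to the square to yield a (subnormalized) Gaussian integral. A direct calculation produces the identity
\begin{equation}
\int P_0(x)^{1-u} P_1(x)^u\, dx \;=\; \exp\!\Bigl(-\tfrac{u(1-u)}{2\sigma^2}\,\|\mu_0-\mu_1\|_2^2\Bigr),
\end{equation}
so that $-\log$ of this integral is $\tfrac{u(1-u)}{2\sigma^{2}}\|\mu_0-\mu_1\|_2^2$. Because the two covariances coincide, the dependence on $u$ factors completely through the concave parabola $u(1-u)$; the infimum over $u \in (0,1)$ in the Chernoff definition becomes a supremum of $u(1-u)$, attained uniquely at $u=1/2$ with value $1/4$. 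This gives
\begin{equation}
C(P_0,P_1) \;=\; \frac{\|\mu_0-\mu_1\|_2^{2}}{8\sigma^{2}},
\end{equation}
which is precisely the Bhattacharyya distance for equal-covariance Gaussians.

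Next I would repeat the identical computation with $(\mu_0,\mu_1,\sigma)$ replaced by $(\zeta_0,\zeta_1,\tau)$ to obtain $C(Q_0,Q_1)=\|\zeta_0-\zeta_1\|_2^{2}/(8\tau^{2})$. Substituting both expressions into the definition $\mathrm{CD}=|C(P_0,P_1)-C(Q_0,Q_1)|$ yields the claimed formula. The connection to \citet{dutta2020there} enters here: they already provide the closed form for a single $C(P_0,P_1)$ in the equal-covariance Gaussian case, so formally one could invoke their result directly and then simply take the absolute difference, reducing the proof to an application of the definition of CD.

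The only nontrivial step is justifying that the infimum is attained at $u=1/2$, which is the point at which having equal covariances is essential (otherwise the $u$-dependence would not reduce to a single concave parabola and the optimizer would shift). I expect this to be the main, though still routine, obstacle; once the shared covariance is exploited, everything else is an integration and an algebraic subtraction.
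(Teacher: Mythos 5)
Your proposal is correct and follows essentially the same route as the paper's proof: compute $\log\int P_0^{1-u}P_1^u\,dx = -\tfrac{u(1-u)}{2\sigma^2}\|\mu_0-\mu_1\|_2^2$ for the equal-covariance pair (the paper does this via the Gaussian moment generating function under $P_0$, you via completing the square — the same calculation), observe the optimum sits at $u=1/2$, instantiate twice, and take the absolute difference. No gaps.
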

The derivation of this closed-form expression is provided in Appendix \ref{sec:proofs} and discussed in \cite{nielsen2022revisiting}. This closed-form expression allows us to directly compute the CD from the definition of the distributions. To incorporate our notion of privacy, we obtain the noisy variant of CD by leveraging the normal-sum theorem \citep{lemons2002introduction}.

\begin{lemma} 
    Under the same distributions as Lemma \ref{lemma}, for $\eta^2\geq 0$, the Noisy Chernoff Difference is:
    \begin{equation}
        \widetilde{\text{CD}}_{\eta^2} = \left|\frac{\|\mu_0 - \mu_1\|_2^2}{8(\sigma^2+\eta^2)} - \frac{\|\zeta_0 - \zeta_1\|_2^2}{8(\tau^2+\eta^2)}\right|.
    \end{equation}
\end{lemma}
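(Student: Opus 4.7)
The plan is to reduce the noisy statement directly to the previous lemma via the normal-sum theorem. By Definition \ref{def:GNCD}, the noisy conditional distributions $\widetilde{P}_i$ and $\widetilde{Q}_i$ are obtained by convolving each original Gaussian with an independent isotropic Gaussian $\mathcal{N}(\mathbf{0}, \eta^2\mathbf{I})$. Once I identify the resulting Gaussians explicitly, the closed-form expression in Lemma \ref{lemma} applies verbatim to the new parameters.

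First, I would apply the normal-sum theorem \citep{lemons2002introduction} to each of the four distributions. Since the sum of two independent multivariate Gaussians is Gaussian with mean equal to the sum of means and covariance equal to the sum of covariances, we immediately obtain
\begin{equation}
\widetilde{P}_0 \sim \mathcal{N}\!\bigl(\mu_0,(\sigma^2+\eta^2)\mathbf{I}\bigr),\quad \widetilde{P}_1 \sim \mathcal{N}\!\bigl(\mu_1,(\sigma^2+\eta^2)\mathbf{I}\bigr),
\end{equation}
\begin{equation}
\widetilde{Q}_0 \sim \mathcal{N}\!\bigl(\zeta_0,(\tau^2+\eta^2)\mathbf{I}\bigr),\quad \widetilde{Q}_1 \sim \mathcal{N}\!\bigl(\zeta_1,(\tau^2+\eta^2)\mathbf{I}\bigr).
\end{equation}
Crucially, the noise is applied with the same covariance $\eta^2\mathbf{I}$ to both conditionals within each group, so the equal-covariance structure that Lemma \ref{lemma} requires is preserved.

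Second, I would invoke Lemma \ref{lemma} with the substitutions $\sigma^2 \mapsto \sigma^2+\eta^2$ and $\tau^2 \mapsto \tau^2+\eta^2$. Since $\widetilde{\mathrm{CD}}_{\eta^2}$ is defined identically to CD but with the noisy distributions plugged in, the closed form follows immediately by replacing the denominators. There is essentially no obstacle here: the argument is a one-line reduction once the normal-sum theorem is applied, and the only thing to verify is the applicability of Lemma \ref{lemma}, which is automatic because additive isotropic noise preserves both the Gaussian family and the shared within-group covariance assumption.
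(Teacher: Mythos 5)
Your proposal is correct and matches the paper's own argument: the paper likewise obtains this lemma by applying the normal-sum theorem to conclude $\widetilde{P}_i \sim \mathcal{N}(\mu_i,(\sigma^2+\eta^2)\mathbf{I})$ and $\widetilde{Q}_i \sim \mathcal{N}(\zeta_i,(\tau^2+\eta^2)\mathbf{I})$, then substitutes into the closed form of Lemma \ref{lemma}. Your explicit check that the within-group equal-covariance assumption is preserved is a nice touch, but the route is the same.
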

\vspace{-2mm}
To analyze this relationship in more depth, we examine the behavior of $\widetilde{\text{CD}}_{\eta^2}$ as we vary the privacy parameter $\eta^2$. This leads to a central result of this work.

\begin{theorem} \label{thm:central}
Suppose $P_0(x)\sim\mathcal{N}(\mu_0,\sigma^2\mathbf{I})$, $P_1(x)\sim\mathcal{N}(\mu_1,\sigma^2\mathbf{I})$, $Q_0(x)\sim\mathcal{N}(\zeta_0,\tau^2\mathbf{I})$, and $Q_1(x)\sim\mathcal{N}(\zeta_1,\tau^2\mathbf{I})$. Without loss of generality, we assume that $\|\mu_0 - \mu_1\|_2 \geq \|\zeta_0 - \zeta_1\|_2$. There are three behaviors of the Noisy Chernoff Difference ($\widetilde{\text{CD}}_{\eta^2}$): (i) $\widetilde{\text{CD}}_{\eta^2}$ has a maximum point, (ii) $\widetilde{\text{CD}}_{\eta^2}$ has a maximum point \underline{and} a reflection point (where $\widetilde{\text{CD}}_{\eta^2} = 0$), (iii) $\widetilde{\text{CD}}_{\eta^2}$ is non-increasing.\footnote{When $\|\mu_0 - \mu_1\|_2 = \|\zeta_0 - \zeta_1\|_2$, $\widetilde{\text{CD}}_{\eta^2}$ will always fall into this case.} The conditions for these three cases are given as follows:
    \begin{enumerate}
        \item[(i)] $\frac{\|\zeta_0 - \zeta_1\|^2_2}{\|\mu_0 - \mu_1\|^2_2} <\frac{\tau^2}{\sigma^2} < \frac{\|\zeta_0 - \zeta_1\|_2}{\|\mu_0 - \mu_1\|_2} < 1$,  \hfill \textbf{(Case 1)}
        \item [(ii)] $\frac{\tau^2}{\sigma^2} < \frac{\|\zeta_0 - \zeta_1\|^2_2}{\|\mu_0 - \mu_1\|^2_2} < 1$, \hfill \textbf{(Case 2)}
        \item [(iii)] Neither condition (i) or (ii) hold. \hfill \textbf{(Case 3)}
    \end{enumerate}    
\end{theorem}
We provide a proof for this theorem in Appendix \ref{sec:proofs}. 
The shape of the noisy CD curves for the three cases are shown in the top panels of Figure~\ref{fig:case123}.
These cases correspond to different scenarios where we have distinct relationships between fairness, privacy, and accuracy, which we illustrate through the examples below.

\begin{figure*}[t]
    \centering
    
    \begin{subfigure}[t]{0.32\textwidth}
        \centering
        \begin{subfigure}[t]{\textwidth}
            \centering
            \includegraphics[width=\textwidth]{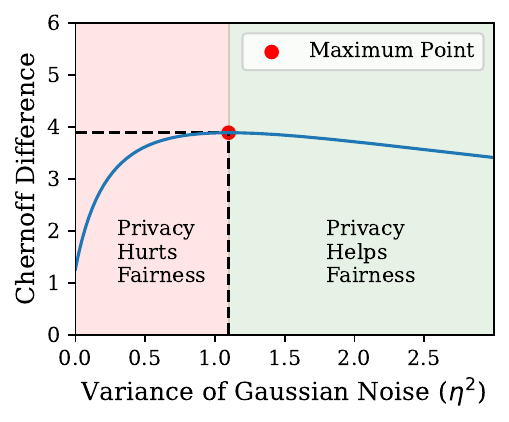}
            \caption{\footnotesize\textbf{Case 1:} Chernoff Difference}
            \label{fig:1a}
        \end{subfigure}
        
        \begin{subfigure}[t]{\textwidth}
            \centering
            \includegraphics[width=\textwidth]{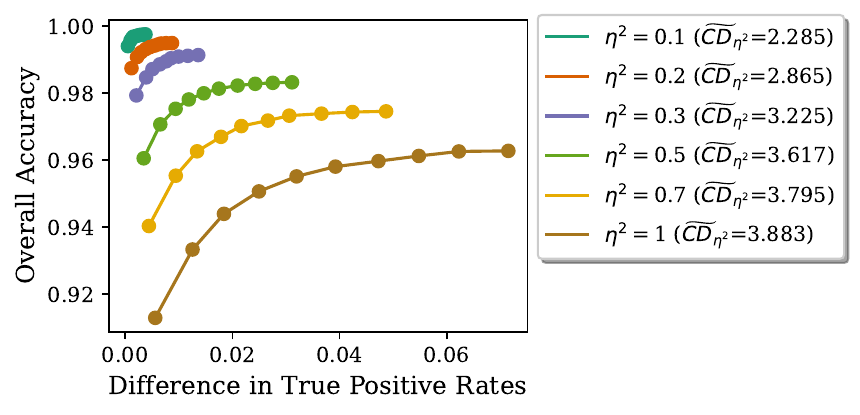}
            \caption{ \footnotesize\textbf{Case 1:} Fairness Accuracy}
            \label{fig:1b}
        \end{subfigure}
        
    \end{subfigure}
    \hfill
    \begin{subfigure}[t]{0.32\textwidth}
        \centering
        \begin{subfigure}[t]{\textwidth}
            \centering
            \includegraphics[width=\textwidth]{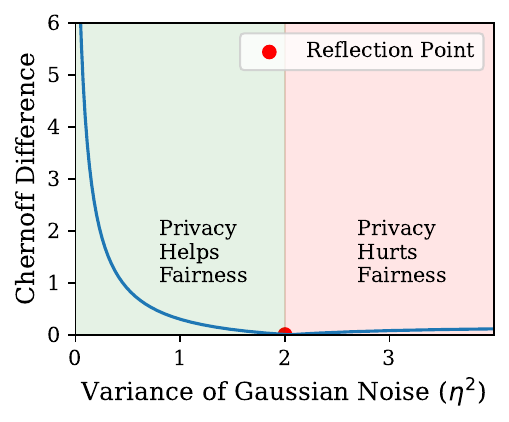}
            \caption{ \footnotesize\textbf{Case 2:} Chernoff Difference}
            \label{fig:1c}
        \end{subfigure}
        
        \begin{subfigure}[t]{\textwidth}
            \centering
            \includegraphics[width=\textwidth]{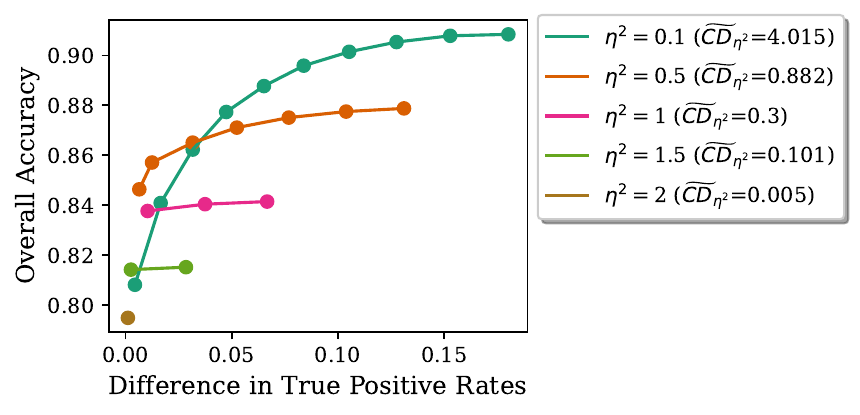}
            \caption{ \footnotesize\textbf{Case 2:} Fairness Accuracy}
            \label{fig:1d}
        \end{subfigure}
        
    \end{subfigure}
    \hfill
    \begin{subfigure}[t]{0.32\textwidth}
        \centering
        \begin{subfigure}[t]{\textwidth}
            \centering
            \includegraphics[width=\textwidth]{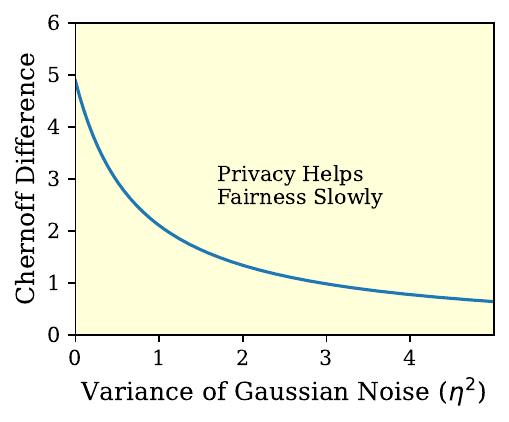}
            \caption{ \footnotesize\textbf{Case 3:} Chernoff Difference}
            \label{fig:1e}
        \end{subfigure}
        
        \begin{subfigure}[t]{\textwidth}
            \centering
            \includegraphics[width=\textwidth]{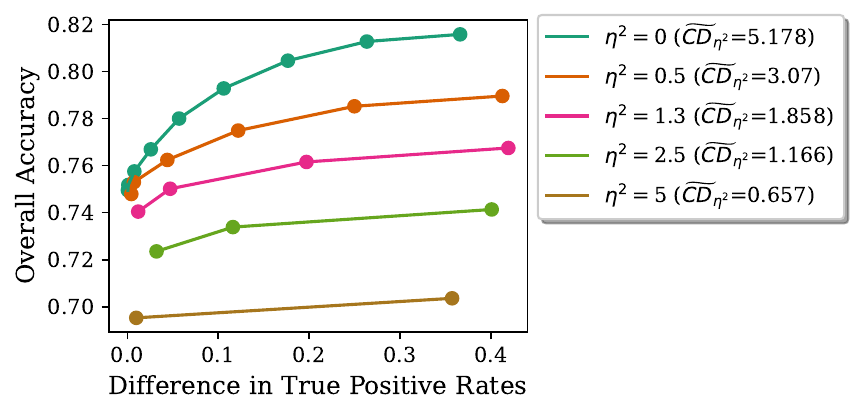}
            \caption{ \footnotesize\textbf{Case 3:} Fairness Accuracy}
            \label{fig:1f}
        \end{subfigure}
        
    \end{subfigure}
    
    \caption{\footnotesize \textbf{Illustration of three distinct cases of Gaussian given in Theorem~\ref{thm:central}. (Case 1)} (a) $\widetilde{\text{CD}}_{\eta^2}$ increases until the maximum point, which represents the worst fairness-accuracy trade-off we encounter by adding noise. (b) The slopes of the fairness-accuracy plots increase as we add noise until the $\widetilde{\text{CD}}_{\eta^2}$ reaches the maximum. 
    Parameters used here are: $\mu_0 = 0, \mu_1=16.5,\sigma = 2.43$ and $\zeta_0 = 0.5, \zeta_1 =3.8, \tau = 0.55$. 
    \textbf{(Case 2)} (c) $\widetilde{\text{CD}}_{\eta^2}$ decays until it reaches 0. It then reflects back and starts to increase, albeit with a near-zero slope. The maximum occurs at a large $\eta^2$ (6.86) value and thus is not plotted (see Appendix \ref{ssec:case2_moredetail}). (d) The steepness of the fairness-accuracy plots decreases as we add noise and CD decreases. The intersecting lines show that in some cases, we can achieve better fairness for the same accuracy when we add noise. 
    Parameters used here are: $\mu_0 = -4.2, \mu_1=1.3,\sigma = 3$ and $\zeta_0 = 0.3, \zeta_1 =2.7, \tau = 0.25$. 
    \textbf{(Case 3)} (e) $\widetilde{\text{CD}}_{\eta^2}$ decays steadily over the entire positive $\eta^2$ regime. (f) Fairness-accuracy curves flatten as $\widetilde{\text{CD}}_{\eta^2}$ decreases with the increasing noise, but the rate of decay is too slow to produce a clear crossover point, unlike in Case 2. 
    Parameters used here are: $\mu_0 = -4.2, \mu_1=1.3,\sigma = 0.85$ and $\zeta_0 = 0.6, \zeta_1 =1.6, \tau = 0.6$.
    } \vspace{-3mm}
    \label{fig:case123}
\end{figure*}

\textbf{Three Cases of Gaussian Distributions.} 
We now construct 1D Gaussian distributions that satisfy the conditions for each case in Theorem~\ref{thm:central}, and examine how different behaviors in noisy CD curves translate to different fairness-privacy relationships.  We generate 100,000 samples from each distribution, and then train Na\"{i}ve Bayes classifiers with different priors to obtain fairness-accuracy curves. See Appendix \ref{ssec:gauss_det} for more experimental details.

\textbf{Case 1: Privacy Can Hurt Fairness.} In Case 1, $\widetilde{\text{CD}}_{\eta^2}$ grows as we increase Gaussian noise until it reaches a maximum at around $\eta^2 = 1$. As $\widetilde{\text{CD}}_{\eta^2}$ increases, we observe the slopes of the fairness-accuracy curves becoming steeper (Figure \ref{fig:1b}), indicating that adding privacy worsens the fairness-accuracy trade-off. We provide log fairness-accuracy plots in Appendix \ref{appx:figures}, where this change in slope is more clearly visible. 
After the maximum point, additional noise leads to a gradual decay in $\widetilde{\text{CD}}_{\eta^2}$; however, the decay is very slow and its effect on the fairness-accuracy curves are nearly imperceptible (see Appendix~\ref{ssec:case1ext}).  

To provide intuition for Case 1, note that the group $Q$ has closer group means than the group $P$. 
At the same time, the variance of group $Q$'s distribution must fall within a specific range: it must be large enough (compared to $P$) so that $Q$ would be the unprivileged group (i.e., less separable), 
but also small enough that modest noise degrades $Q$'s separability more than $P$'s. In this regime, because $Q$ is already less separable, the gap in Chernoff Information between the groups increases as small noise is added. Once the noise becomes sufficiently large, it begins to affect $P$'s separability slightly more, marking a transition where the Chernoff Difference starts to decrease.

\textbf{Case 2 and Case 3: Privacy Can Help Fairness.} In Case 2, the noisy CD decays very rapidly to 0 as we increase $\eta^2$ (Figure \ref{fig:1c}). As $\widetilde{\text{CD}}_{\eta^2}$ decreases, the fairness-accuracy curves clearly flatten (Figure \ref{fig:1d}). In fact, the curves flatten so quickly that the noisy curves begin to overlap with the clean fairness-accuracy curves. This shows that by adding privacy, we can achieve better fairness for the same accuracy---\textit{privacy gives free fairness!} 

To explain the mechanism behind this ``free fairness,'' note that the class means of group $Q$ are still closer than those of group $P$'s, but the variance of group $Q$ is much smaller. As a result, $Q$  becomes the privileged group (i.e., better separability). As we add noise, the separability of $P$ and $Q$ both decreases, but $Q$ much faster than $P$, due to its smaller mean separation and variance. Because the privileged group loses separability faster, the Chernoff Difference decreases, effectively reducing unfairness.
This trend continues until the reflection point is reached where $\widetilde{\text{CD}}_{\eta^2} = 0$. Beyond this point, $\widetilde{\text{CD}}_{\eta^2}$ starts to increase until the maximum and then decrease again, but the slope remains near zero after the reflection point (see Appendix~\ref{ssec:case2_moredetail}).


 In Case 3, when neither condition (i) nor (ii) in Theorem \ref{thm:central} holds, $\widetilde{\text{CD}}_{\eta^2}$ smoothly decays as we increase $\eta^2$ (Figure \ref{fig:1e}). Again, this is reflected in the flattening trend of the fairness-accuracy curve in Figure~\ref{fig:1f}. However, unlike Case 2, we do not observe any crossing of the curves, as the decay of CD is slower. Improvements in fairness cannot keep pace with the accuracy degradation induced by privacy.
 Here, group $Q$ exhibits a larger variance in addition to its smaller mean separation, leading to significantly less separability compared to group $P$. As noise is added, the separability of $P$ is reduced at a rate faster than $Q$, which closes the gap in Chernoff Information. However, the separability of $Q$ is initially too small for CD to ever reach zero.

\section{Chernoff Information Neural Estimation (CINE)}\label{sec:estimation}

Beyond isotropic Gaussian distributions, closed form solutions for Chernoff Information (CI) are only available for single-parameter members of the exponential family \citep{Nielsen_2013}. On real-world data, computing CI requires density estimation, a long-standing research challenge in ML and statistics \citep{bishop2006pattern, vapnik2006estimation}. 
Here, we introduce Chernoff Information Neural Estimation (CINE), a novel algorithm that estimates CI from arbitrary distributions by 
recasting it as a density ratio estimation (DRE) problem. To the best of our knowledge, \textit{this is the first method for estimating CI directly from real-world datasets.} In order to develop our CI estimation algorithm, we first make the following crucial observation:
\begin{observation}
    Chernoff Information can be written as an optimization of an expectation 
    \begin{equation} \label{eq:expci}
    C(P_0, P_1) = -\inf_{u \in (0,1)} \log\mathbb{E}_{x\sim P_0}\left[ \left(\frac{P_1(x)}{P_0(x)} \right)^u \right].
\end{equation}

\end{observation}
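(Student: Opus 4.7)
The plan is to derive the identity by a direct algebraic manipulation of the integrand in the definition of Chernoff Information, recasting the integral as an expectation under $P_0$. Since the outer structure $-\inf_{u \in (0,1)} \log(\cdot)$ is identical on both sides, it suffices to establish, for each fixed $u \in (0,1)$, the identity
$$\int P_0(x)^{1-u} P_1(x)^u\,dx \;=\; \mathbb{E}_{x\sim P_0}\!\left[\left(\frac{P_1(x)}{P_0(x)}\right)^u\right].$$

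First I would factor a single copy of $P_0(x)$ out of the integrand, writing $P_0(x)^{1-u} P_1(x)^u = P_0(x) \cdot (P_1(x)/P_0(x))^u$ pointwise on the set where $P_0(x) > 0$. The integral of $P_0(x) \cdot f(x)$ against Lebesgue measure is, by definition of expectation with respect to the density $P_0$, exactly $\mathbb{E}_{x\sim P_0}[f(x)]$; substituting $f(x) = (P_1(x)/P_0(x))^u$ yields the claim. Plugging the result back into the definition of CI gives the observation as stated.

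The only step requiring any care is the measure-theoretic handling of the set $\{x : P_0(x) = 0\}$. Adopting the standard convention that the expectation is taken over $\mathrm{supp}(P_0)$ (equivalently, $0 \cdot a = 0$ inside the integral), any contribution from this set vanishes in both expressions; in the original integrand, $P_0(x)^{1-u} = 0$ for $u \in (0,1)$ wherever $P_0(x) = 0$, so the two sides agree. The restriction $u \in (0,1)$ precisely rules out ambiguous exponents such as $0^0$. I expect no substantive obstacle: the observation is a rewriting identity rather than a deep result, and its value lies in exposing the density ratio $P_1/P_0$ explicitly, which is what permits the subsequent CINE algorithm to replace full density estimation with density-ratio estimation.
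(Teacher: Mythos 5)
Your proof is correct and is exactly the rewriting the paper intends: the paper states this as an Observation without a separate proof, treating $\int P_0^{1-u}P_1^u\,dx = \mathbb{E}_{x\sim P_0}[(P_1/P_0)^u]$ as the immediate factoring you carry out. Your added care about the set $\{P_0 = 0\}$ is sound and is consistent with the common-support assumption the paper later imposes in its consistency analysis.
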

 This yields \textit{a density ratio} within the expectation. Accurately estimating the density ratio $\frac{P_1(x)}{P_0(x)}$ is easier than directly estimating $P_1(x)$ and $P_0(x)$ \citep{sugiyama2012density}, as shown theoretically in \citet{kanamori2010theoretical}. We leverage this insight to develop our algorithm.

\begin{algorithm}[t]
\caption{\textbf{CINE via DRE-$\infty$ \citep{choi2022density}}} 
\label{alg:CINE} 
\begin{algorithmic}[1]
    \State \textbf{Input:} Distributions $\boldsymbol{P_0}, \boldsymbol{P_1}$
    \State $s_\theta \gets \text{Density Ratio Estimation}(P_1, P_0)$ \Comment{Compute score function}
    \State $\{\log r_i\}_{i=1}^n \gets \int_1^0s_\theta(x_i,t)[t]dt$ \Comment{Using $x_i \sim P_0$}
    \State $f(u) \gets \log\left(\frac{1}{n} \sum_{i=1}^n \exp(u \cdot \log r_i)\right)$ \Comment{Monte Carlo}
    \State $CI \gets -\inf_{u \in (0,1)} f(u)$ \Comment{Convex Optimization}
    
    \State \textbf{Return:} $CI$
\end{algorithmic}
\label{alg}
\end{algorithm}

\begin{figure*}[t]
  \centering
  \begin{subfigure}{0.305\linewidth}
    \centering
\includegraphics[width=\linewidth]{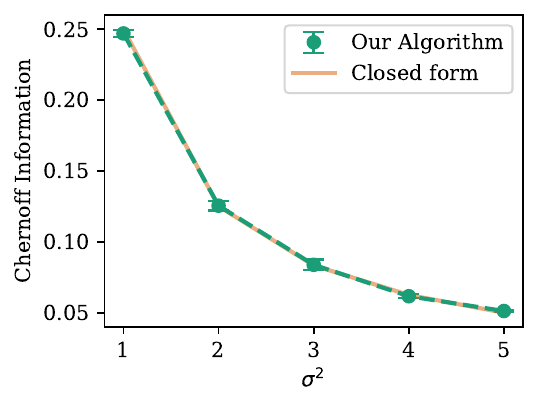}
    \caption{CINE vs Closed form CI (2-D) }
    \label{fig:gaussa}
  \end{subfigure}
  \begin{subfigure}{0.305\linewidth}
    \centering    \includegraphics[width=\linewidth]{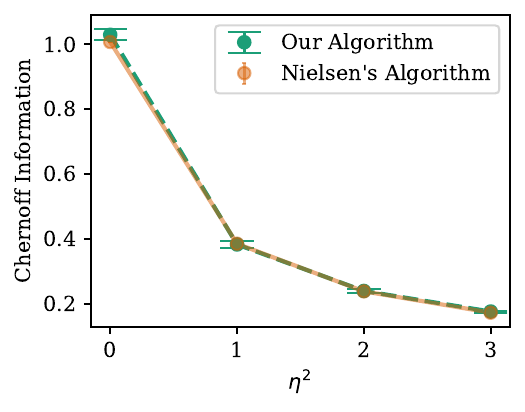}
    \caption{CINE vs Nielsen's Estimator (5-D)}
    \label{fig:gaussb}
  \end{subfigure}
  \begin{subfigure}{0.305\linewidth}
    \centering    \includegraphics[width=\linewidth]{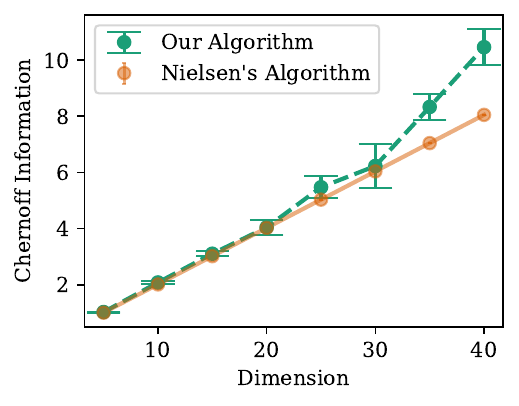}
    \caption{CINE vs Nielsen's Estimator (n-D)}
    \label{fig:gaussc}
  \end{subfigure}
  \caption{{\footnotesize 
   \textbf{Empirical evaluation of CINE.} (a) In 2D, using distributions $\mathcal{N}(\textbf{0},  \sigma^2\mathbf{I})$ and $\mathcal{N}(\textbf{1},  \sigma^2\mathbf{I})$ Chernoff Information estimates remain accurate for different values of $\sigma^2$. (b) In 5D, using distributions $\mathcal{N}(\textbf{0}, \frac{1}{2}\mathbf{I})$ and $\mathcal{N}(\textbf{1},\textbf{I})$ Chernoff Information estimation remains accurate, even as noise $(\eta^2)$ is added. (c) Using distributions $\mathcal{N}(\textbf{0}, \frac{1}{2}\mathbf{I})$ and $\mathcal{N}(\textbf{1},\textbf{I})$}, estimations are accurate in lower dimensions before degrading.
  } 
\end{figure*}

\paragraph{Background on neural density ratio estimation.} To estimate the density ratio, we adopt a state-of-the-art method known as the  \textit{telescoping approach}~\citep{choi2022density, rhodes2020telescoping}, which expresses the ratio $r(x) = \frac{P_1(x)}{P_0(x)}$ as a product of ratios between intermediate bridge distributions $p_\lambda(x)$ \footnote{While we adopt a current state-of-the-art method for DRE, Algorithm~\ref{alg:CINE} can plug in any DRE, and CINE can benefit from future developments in this area.}. These bridges, indexed by $\lambda$, are created via an interpolation scheme between the original distributions (e.g., $p_\lambda(x) = \lambda P_0(x) + \sqrt{1 - \lambda^2} P_1(x)$). This approach improves estimation accuracy by decomposing the overall ratio into smaller steps between distributions that are closer to each other and thus easier to estimate.
\citet{choi2022density} consider the case where the number of bridges approaches infinity, allowing the log-density ratio to be expressed as an integral: $\log r(x) = \int_1^0\frac{\partial}{\partial \lambda}\log p_\lambda(x)\partial\lambda$.  Observing the similarity between this telescoping process and diffusion models~\citep{nichol2021improved, song2021maximum}, they propose \textit{training a neural network to recover the time score function} $\frac{\partial}{\partial \lambda}p_\lambda(x)$, while the integral is computed using standard numerical methods. We refer the reader to the original work~\citep{choi2022density} for full details of the estimator and Appendix \ref{ssec:estexpdet} for details regarding our implementation.

Once the neural density ratio estimator is trained, we compute the expectation 
$\mathbb{E}_{x\sim P_0}[(\frac{P_1(x)}{P_0(x)})^u]$ using a Monte Carlo method, and denote it as a function $g(u)$. Finally, the last step of estimating Chernoff Information is solving $-\inf_{u \in (0,1)}g(u)$ in \eqref{eq:expci}. This optimization can be performed easily and efficiently due to the convex nature of $g$. Convexity follows from a result by \citet{nielsen2022revisiting}:
\begin{lemma} [Section 2.1 \citep{nielsen2022revisiting}] \label{lemma:convex}
The skewed Bhattacharyya distance, $\log(\int P_0(x)^{1-u}P_1(x)^udx)$ is convex with respect to $u$.
\end{lemma}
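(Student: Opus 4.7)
The plan is to recast the skewed Bhattacharyya distance as the logarithm of a moment-generating function (i.e., a cumulant-generating function) and invoke H\"older's inequality, which is the standard route for establishing convexity of such objects.

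First, I would rewrite the integral as an expectation under $P_0$. Define $Z(x) = \log\bigl(P_1(x)/P_0(x)\bigr)$ on $\mathrm{supp}(P_0)\cap\mathrm{supp}(P_1)$; on the complementary regions the integrand $P_0(x)^{1-u}P_1(x)^{u}$ vanishes because $u\in(0,1)$, so these contribute nothing. This gives
\begin{equation}
M(u) \;:=\; \int P_0(x)^{1-u}P_1(x)^{u}\,dx \;=\; \mathbb{E}_{x\sim P_0}\!\bigl[\exp(uZ(x))\bigr],
\end{equation}
reducing the claim to convexity of $f(u)=\log M(u)$ on $(0,1)$.

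Next, I would apply H\"older's inequality. For $u_1,u_2\in(0,1)$ and $\lambda\in[0,1]$, set $u=\lambda u_1+(1-\lambda)u_2$ and use conjugate exponents $1/\lambda$ and $1/(1-\lambda)$:
\begin{equation}
M(u) \;=\; \mathbb{E}_{P_0}\!\bigl[e^{\lambda u_1 Z}\,e^{(1-\lambda)u_2 Z}\bigr] \;\le\; \mathbb{E}_{P_0}\!\bigl[e^{u_1 Z}\bigr]^{\lambda}\,\mathbb{E}_{P_0}\!\bigl[e^{u_2 Z}\bigr]^{1-\lambda} \;=\; M(u_1)^{\lambda}M(u_2)^{1-\lambda}.
\end{equation}
Taking logarithms gives $f(u)\le \lambda f(u_1)+(1-\lambda)f(u_2)$, which is exactly the convexity claim. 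As an equivalent alternative, I could differentiate twice under the integral and identify $f''(u) = \mathrm{Var}_{P_u}(Z)\ge 0$, where $P_u(x)\propto P_0(x)^{1-u}P_1(x)^{u}$ is the natural exponential tilt of $P_0$ by $uZ$; nonnegativity of variance then yields convexity.

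There is no real obstacle here, since the convexity of cumulant-generating functions is classical. The only technicality worth flagging is the handling of support mismatch between $P_0$ and $P_1$: $Z(x)$ is ill-defined where either density is zero, but for $u\in(0,1)$ the integrand is identically zero outside $\mathrm{supp}(P_0)\cap\mathrm{supp}(P_1)$, so the reduction to an expectation under $P_0$ restricted to that intersection is lossless and H\"older applies without modification.
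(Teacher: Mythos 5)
Your proof is correct. Note that the paper does not actually prove this lemma---it is stated as an imported result, citing Section~2.1 of \citet{nielsen2022revisiting}---so there is no in-paper argument to compare against. Your H\"older-inequality derivation is the classical proof that a cumulant-generating function is convex (equivalently, that $u \mapsto \int P_0^{1-u}P_1^{u}\,dx$ is log-convex), and it is essentially the same argument underlying the cited reference; the exponential-tilting identity $f''(u)=\mathrm{Var}_{P_u}(Z)\ge 0$ you mention as an alternative is likewise standard. Your handling of the support mismatch is also right: for $u\in(0,1)$ the integrand vanishes off $\mathrm{supp}(P_0)\cap\mathrm{supp}(P_1)$, so the reduction to an expectation under $P_0$ is lossless (and in the estimation section the paper assumes common support and a bounded density ratio anyway, which sidesteps the only degenerate case, namely disjoint supports where $f\equiv-\infty$). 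In short, you have supplied a correct, self-contained proof of a fact the paper only cites.
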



We provide a full overview of CINE in Algorithm \ref{alg}.
The key insight of CINE is that by reformulating the definition of CI, we can leverage both state-of-the-art methods for density ratio estimation~\citep{choi2022density} and recent theoretical advances on CI~\citep{nielsen2022revisiting}. Finally, we prove the consistency of the proposed CINE algorithm (Appendix \ref{sec:consist}):

\begin{theorem}
    Let $P_0$ and $P_1$ be distributions with common support and bounded density ratio. Then the Chernoff Information estimator, constructed using a consistent density ratio estimator, is itself consistent.
\end{theorem}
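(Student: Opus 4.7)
The plan is to establish uniform convergence of the empirical objective on a compact interval and then transfer this to convergence of the infima via convexity. Write $r(x) = P_1(x)/P_0(x)$, let $g(u) = \log \mathbb{E}_{x\sim P_0}[r(x)^u]$, and denote the empirical estimate $\hat{g}_n(u) = \log\bigl(\tfrac{1}{n}\sum_{i=1}^n \hat{r}_n(x_i)^u\bigr)$ with $x_i \stackrel{\mathrm{iid}}{\sim} P_0$ and $\hat{r}_n$ the consistent neural density ratio estimator. Observe that $g(0) = 0$ and $g(1) = \log \int P_1(x)\,dx = 0$, so $g$ extends continuously to $[0,1]$, and since $g$ is convex and non-positive on $[0,1]$ by Lemma~\ref{lemma:convex}, the infimum over the open interval $(0,1)$ coincides with the infimum over the compact interval $[0,1]$. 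Working on this compact domain is what makes uniform arguments tractable.

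First I would split the error into a Monte Carlo term and a plug-in term,
$$|\hat{g}_n(u) - g(u)| \;\leq\; |\hat{g}_n(u) - \tilde{g}_n(u)| \;+\; |\tilde{g}_n(u) - g(u)|,$$
where $\tilde{g}_n(u) = \log\bigl(\tfrac{1}{n}\sum_i r(x_i)^u\bigr)$ uses the true ratio. Under the bounded-ratio hypothesis, the class $\{u \mapsto r(x)^u : x\}$ is uniformly bounded and equi-Lipschitz in $u \in [0,1]$, so a standard uniform law of large numbers (via a bracketing argument over the one-dimensional parameter $u$) yields $\sup_{u \in [0,1]} |\tilde{g}_n(u) - g(u)| \to 0$ in probability. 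The outer logarithm preserves this convergence because the empirical averages concentrate around the strictly positive value $\mathbb{E}[r^u]$.

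Second, I would control the plug-in term using the elementary inequality $|a^u - b^u| \leq u \cdot \min(a,b)^{u-1} |a-b|$ for $a, b > 0$ and $u \in (0,1)$. Combined with the consistency of $\hat{r}_n$ (which gives $\tfrac{1}{n}\sum_i |\hat{r}_n(x_i) - r(x_i)| \to 0$) and the boundedness of both $r$ and $\hat{r}_n$, this yields $\sup_{u \in [0,1]} |\hat{g}_n(u) - \tilde{g}_n(u)| \to 0$. Combining the two steps gives the uniform convergence $\sup_{u \in [0,1]} |\hat{g}_n(u) - g(u)| \to 0$. Since $g$ is continuous on the compact set $[0,1]$ its infimum is attained, and uniform convergence of the criterion immediately implies $\inf_u \hat{g}_n(u) \to \inf_u g(u)$; hence the estimator $-\inf_u \hat{g}_n(u)$ converges to $C(P_0, P_1)$, proving consistency.

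The hardest part will be ensuring that the boundedness assumption on the true ratio transfers to the learned estimator: the plug-in bound above requires $\min(\hat{r}_n(x), r(x))$ to stay bounded away from zero on the sample, otherwise the Lipschitz constant $u \cdot \min(a,b)^{u-1}$ degrades near $u \to 0^{+}$ and uniform control fails. This is cleanly handled either by clipping $\hat{r}_n$ into the interval $[\epsilon, M]$ containing $r$ (a standard safeguard that preserves consistency) or by invoking a high-probability uniform deviation bound on $\hat{r}_n$ so that the same range is respected eventually. A secondary subtlety is that the samples $x_i$ used for the Monte Carlo step may overlap with those used to train $\hat{r}_n$; this is best addressed by a sample-splitting device, after which the two error terms above become independent and the argument goes through unchanged.
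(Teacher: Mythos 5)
Your proof is correct and reaches the same destination by a genuinely different route. Both arguments compactify the optimization to $[0,1]$ (you via convexity and $g(0)=g(1)=0$; the paper via a boundary-condition lemma) and both reduce the problem to uniform convergence of the empirical objective, but the decompositions differ. You freeze the \emph{true} ratio in the Monte Carlo term, handle it with a bracketing ULLN over the one-dimensional class $\{u\mapsto r(\cdot)^u\}$, and then control the plug-in term $\hat g_n-\tilde g_n$ by a Lipschitz perturbation bound $|a^u-b^u|\le u\min(a,b)^{u-1}|a-b|$; the paper instead freezes the \emph{estimated} ratio for its Monte Carlo term and handles the resulting expectation bias $\mathbb{E}[r_n(X)^u]-\mathbb{E}[r(X)^u]$ via Skorohod representation and dominated convergence, obtaining uniformity from pointwise convergence plus stochastic equicontinuity and Newey's (1991) theorem, and then concluding through the Newey--McFadden extremum-estimator theorem. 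Your route is more elementary and more direct for the quantity actually claimed: $|\inf_u\hat g_n-\inf_u g|\le\sup_u|\hat g_n-g|$ gives consistency of the value without any argmax machinery, whereas the paper's route additionally delivers consistency of the optimizer $\hat u$ (using uniqueness of the minimizer). You also correctly identify the two safeguards the paper builds into its assumptions --- clipping $\hat r_n$ into $[c,C]$ and training the estimator on samples independent of the evaluation points. The one step you gloss is the assertion that consistency of $\hat r_n$ ``gives'' $\tfrac1n\sum_i|\hat r_n(x_i)-r(x_i)|\to0$: from mere pointwise consistency in probability this requires exactly the boundedness-plus-dominated-convergence argument (or an explicit $L^1(P_0)$-consistency hypothesis) that the paper spends its Skorohod step on, together with the sample splitting you mention; spelling that out would close the only real gap between your sketch and a complete proof.
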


\subsection{Empirical Evaluation of CINE}
Next, we validate CINE by comparing its results with both the closed-form solutions of CI and the algorithm proposed by \citet{nielsen2022revisiting}. This evaluation demonstrates the accuracy of our method and its scalability with input dimension. In all experiments, we used 10,000 samples per class to train CINE. More details about the setting are in Appendix \ref{ssec:gaussexpdet}. 

We first test our algorithm on Gaussian distributions that have a closed form solution for Chernoff Information (see Lemma \ref{lemma}). We construct 2D Gaussian distributions with the equal variance: $\mathcal{N}(\textbf{0},  \sigma^2\mathbf{I})$ and $\mathcal{N}(\textbf{1}, \sigma^2\mathbf{I})$. As observed in Figure \ref{fig:gaussa}, our estimator closely matches the closed form solution across a wide range of variances ($\sigma^2$).

Next, to evaluate on distributions with no closed form solution, we compare our estimator to the multidimensional Gaussian estimator from \citet{nielsen2022revisiting}. This estimator relies on the fact that the CI between high dimensional Gaussians, can be written as a skewed KL-divergence where the optimal skewing parameter can be found with a binary search algorithm\footnote{This relies on knowledge that the distributions are Gaussian.}. For experiments, we construct 5D Gaussian distributions $N(\mathbf{0}, (\frac{1}{2} +\eta^2)\mathbf{I})$ and $\mathcal{N}(\mathbf{1}, \eta^2\mathbf{I})$. We then sweep across various noise values $\eta^2$ and show that our algorithm closely matches Nielsen's estimator (Figure \ref{fig:gaussb}).

Finally, in Figure~\ref{fig:gaussc}, we examine how the performance of CINE scales with input dimension. We construct Gaussian distributions, $N(\mathbf{0}, \frac{1}{2}\mathbf{I})$ and $\mathcal{N}(\mathbf{1}, \mathbf{I})$, with dimensions ranging from 5 to 40. With 10,000 samples per class, our estimator is nearly indistinguishable from Nielsen’s up to 20 dimensions. CINE begins to show modest variance and small estimation errors between 20 and 30 dimensions, and beyond 30 dimensions the estimator starts to clearly overestimate CI compared to Nielsen's. 
This higher dimensional degradation is a common problem when estimating information-theoretic measures from samples \citep{belghazi2018mine}, and we discuss these limitations in Section \ref{sec:conclusion}. 

\section{Fairness-Privacy Dynamic via CINE} \label{sec:real_data}
To extend analysis beyond the simple Gaussian setting, we apply CINE to  more complex distributions: mixture of Gaussians, and real-world datasets (Adult and MNIST).

\textbf{Mixture of Gaussians.} We extend beyond the isotropic Gaussian setting by utilizing a more complex mixture of Gaussian setting. For each conditional distribution, we utilize two 2D Gaussian distributions to build the mixture. We consider 2 mixtures, which we name Mixture 1 and Mixture 2 and repeat the experiments from Section \ref{sec:chernoff_info} (for more details on these mixtures and experiments, please refer to Appendix \ref{sec:exp_det}). Using Mixture 1, we demonstrate that the CD spikes as we increase the variance of the noise (Figure \ref{fig:3a}), similar to the trend we observed in Case 1 with isotropic Gaussians. We also observe a steepening in the fairness accuracy curve, as predicted by increasing CD (Figure \ref{fig:3b}). Using Mixture 2, we show that CD can quickly decay towards 0 (Figure \ref{fig:3c}), similar to Case 2 in the isotropic Gaussian setting. Further, we see that the fairness accuracy curves flatten as noise increases and we can obtain free fairness (Figure \ref{fig:3d}).

\begin{figure*}[t]
    \centering
    
    \begin{subfigure}[t]{0.24\textwidth}
        \centering
        \begin{subfigure}[t]{\textwidth}
            \centering
            \includegraphics[width=\textwidth]{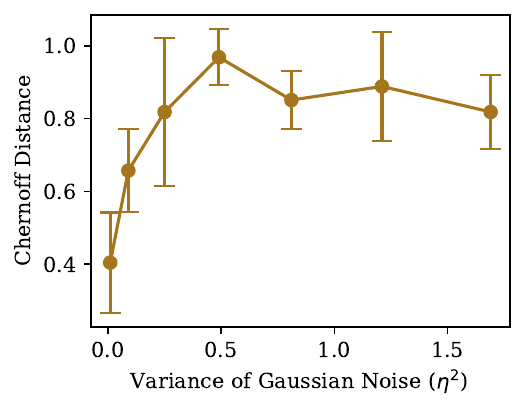}
            \caption{\footnotesize Mixture 1: CD}
        \label{fig:3a}
        \end{subfigure}
        
        \begin{subfigure}[t]{\textwidth}
            \centering
            \includegraphics[width=\textwidth]{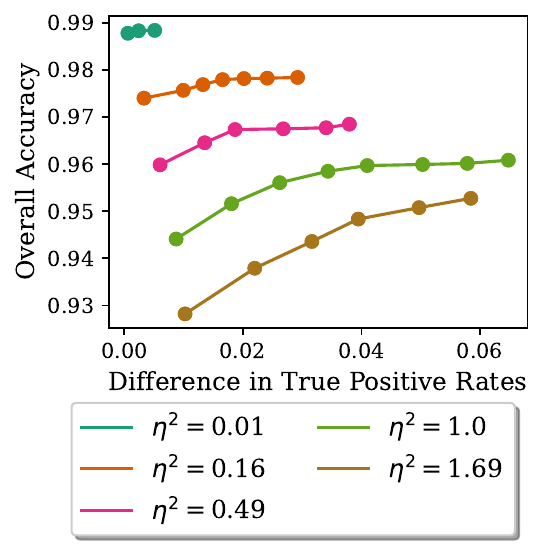}
            \caption{\footnotesize Mixture 1: Fairness-Acc}
        \label{fig:3b}
        \end{subfigure}
        
    \end{subfigure}
    \hfill
    \begin{subfigure}[t]{0.24\textwidth}
        \centering
        \begin{subfigure}[t]{\textwidth}
            \centering
            \includegraphics[width=\textwidth]{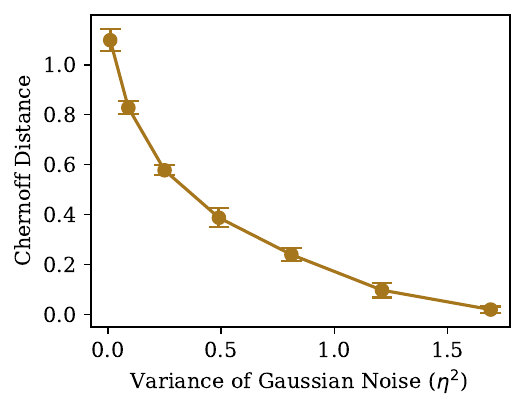}
            \caption{\footnotesize Mixture 2: CD}
        \label{fig:3c}
        \end{subfigure}
        
        \begin{subfigure}[t]{\textwidth}
            \centering
            \includegraphics[width=\textwidth]{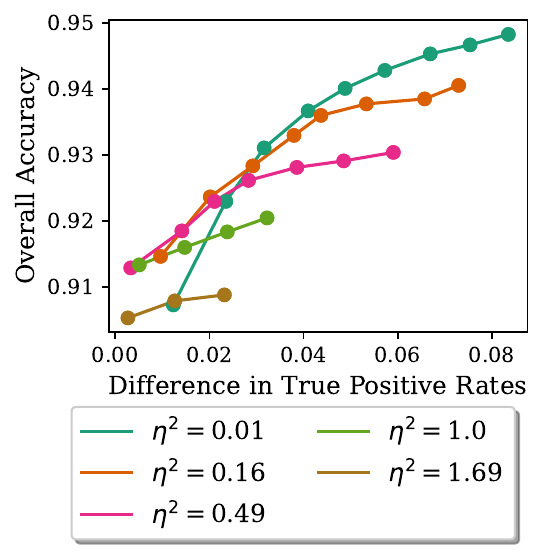}
            \caption{\footnotesize Mixture 2: Fairness-Acc}
        \label{fig:3d}
        \end{subfigure}
        
    \end{subfigure}
    \hfill
    \begin{subfigure}[t]{0.24\textwidth}
        \centering
        \begin{subfigure}[t]{\textwidth}
            \centering
            \includegraphics[width=\textwidth]{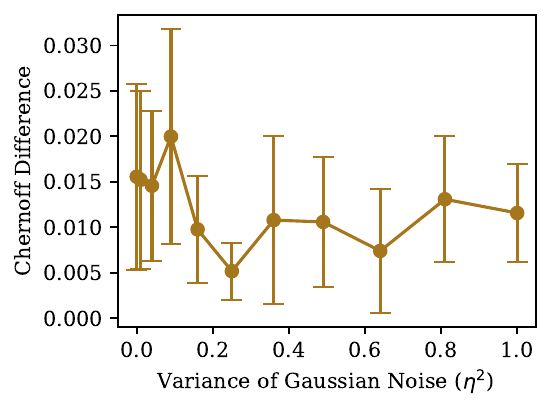}
            \caption{\footnotesize Adult: CD}
        \label{fig:3e}
        \end{subfigure}
        
        \begin{subfigure}[t]{\textwidth}
            \centering
            \includegraphics[width=\textwidth]{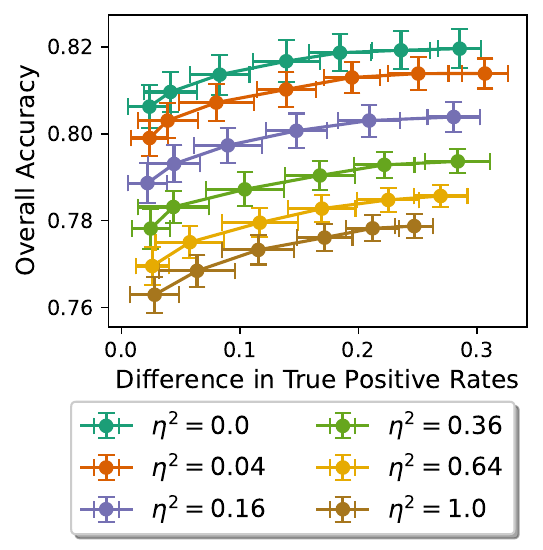}
            \caption{\footnotesize Adult: Fairness-Acc}
        \label{fig:3f}
        \end{subfigure}
        
    \end{subfigure}
    \begin{subfigure}[t]{0.24\textwidth}
        \centering
        \begin{subfigure}[t]{\textwidth}
            \centering
            \includegraphics[width=\textwidth]{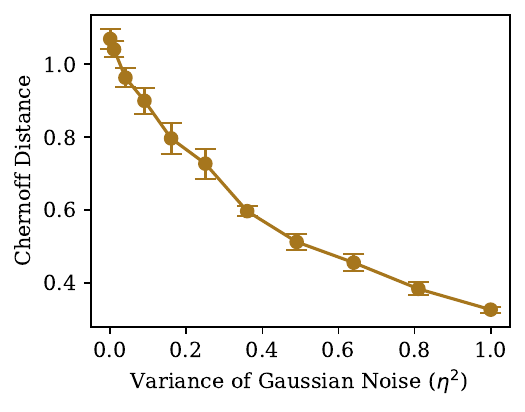}
            \caption{\footnotesize MNIST: CD}
        \label{fig:3g}
        \end{subfigure}
        
        \begin{subfigure}[t]{\textwidth}
            \centering
            \includegraphics[width=\textwidth]{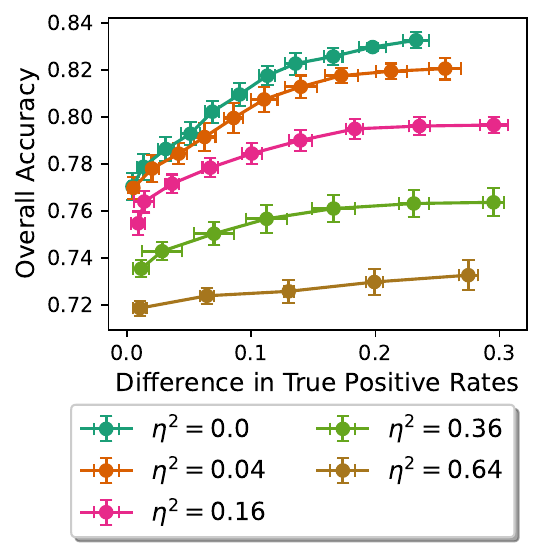}
            \caption{\footnotesize MNIST: Fairness-Acc}
        \label{fig:3h}
        \end{subfigure}
        
    \end{subfigure}

    \caption{\footnotesize \textbf{(Mixture 1)} (a) $\widetilde{\text{CD}}_{\eta^2}$ spikes as $\eta^2$ increases. (b) Fairness-Accuracy curve becomes more steep resembling fairness-accuracy curves from Case 1 from isotropic Gaussians. \textbf{(Mixture 2)} (c) $\widetilde{\text{CD}}_{\eta^2}$ decreasing to 0.  (d) Fairness-Accuracy curves flattening  resembling fairness-accuracy curves from Case 2 from isotropic Gaussians. \textbf{(Adult)} (e) $\widetilde{\text{CD}}_{\eta^2}$ Remains Flat for Adult dataset. (f) Fairness-Accuracy slopes remain stable for Adult dataset. \textbf{(MNIST)} (g) $\widetilde{\text{CD}}_{\eta^2}$ decreases for experiments on MNIST representations. (h) Fairness-Accuracy curves flatten for MNIST representations.} \vspace{-3mm}
\end{figure*}
\textbf{Adult.}
We next apply CINE on the Adult dataset \citep{adult_2}. We consider the standard income classification task and use sex as our protected attribute. We selected all the continuous features from the dataset, giving us 6 dimensions. For full details refer to Appendix \ref{ssec:dataexpdet}.

First, as the variance of the additive Gaussian noise increases, the CD remains relatively flat (Figure \ref{fig:3e}). We observe that there may be slight trends in CD, however, they appear to be too small to make meaningful differences in the fairness, privacy dynamic. This is reflected by the fairness accuracy curves (Figure \ref{fig:3f}). We train split logistic regression classifiers and perturb the class weights for the disadvantaged group to create our fairness-accuracy curves. For more details, please refer to Appendix \ref{ssec:dataexpdet}. As observed in Section $\ref{sec:chernoff_info}$, the CD acts as a proxy for the slope of the fairness accuracy curves. Here, as the input noise varies, the slopes of the fairness accuracy curves remain relatively stable. Further, as the CD is consistently low, we observe very flat fairness accuracy curves. This indicates that privacy does not have a strong effect on the relationship between fairness and accuracy for these features.

\textbf{MNIST.} Next, we apply CINE to an image classification task on MNIST. We construct a synthetic fair classification task on the MNIST dataset by assigning specific digits to represent each subgroup in our experiment. While MNIST does not involve human subjects, this setup allows us to study fairness–accuracy trade-offs in a controlled setting by synthetically defining majority and minority groups. Specifically, we choose the digit $3$ for $P_0$, the digit $4$ for $P_1$, the digit $7$ for $Q_0$, and the digit $9$ for $Q_1$. 

Using these subgroups, we follow a two-stage pipeline. We first perform dimensionality reduction using a learned autoencoder, followed by logistic regression on the resulting low-dimensional image representations. Although this is a two-step procedure, it is effectively equivalent to neural network classification, where the low-dimensional representation corresponds to the final-layer activation. We adopt this pipeline as estimating Chernoff Information directly on high-dimensional data is challenging. In contrast, applying CINE to the final-layer activations provides a meaningful proxy for the intrinsic separability of the input distribution.

For CD, we observe a steady decay as we increase the input perturbation noise (Figure \ref{fig:3g}). This trend is reflected in the fairness-accuracy curves, which exhibit a clear flattening effect (Figure \ref{fig:3h}). In particular, the first and second curves overlap within the standard deviation range, potentially indicating the \textit{free fairness} regime. These results show that CD  effectively captures the fairness-privacy-accuracy trade-off even on more complex data.

\vspace{-2mm}
\section{Conclusion and Future Work}\label{sec:conclusion}
This work proposes a new information-theoretic fairness measure, called \textit{Chernoff Difference},
and shows that it provides a principled framework for analyzing the fundamental fairness–privacy–accuracy trade-off inherent in the input data distribution. While this paper focuses on establishing the core concepts and tools, it naturally opens up several directions for future work. In particular, although we prove consistency of the proposed CINE algorithm and demonstrate strong empirical performance in moderate dimensions ($\sim$30), a deeper sample complexity analysis and further improvements to estimator performance in higher dimensions remain important directions for future study. 
Moreover, our fairness-privacy analysis currently focuses on one notion of fairness (a proxy for equal opportunity) due to its natural connection to Chernoff Information. Studying the  data-dependent relationship between privacy and fairness for  other fairness metrics (e.g., statistical parity, multicalibration) would require different theoretical tools and would be an exciting area of exploration. 
Finally, we believe that the proposed framework can inform fair and private mechanism design,
e.g., Chernoff Difference analysis may be used as a preliminary diagnostic to assess whether privacy mechanisms are likely to adversely affect fairness for a given data distribution, helping practitioners decide when additional fairness constraints are necessary. 
%

\paragraph{Acknowledgments} This work was supported by the National Science Foundation (NSF) under grant number 2341055.

\paragraph{Disclaimer.}
This paper was prepared for informational purposes by the Global Technology Applied Research center of JPMorgan Chase \& Co. This paper is not a product of the Research Department of JPMorgan Chase \& Co. or its
affiliates. Neither JPMorgan Chase \& Co. nor any of its affiliates makes any explicit or implied representation
or warranty and none of them accept any liability in connection with this paper, including, without limitation,
with respect to the completeness, accuracy, or reliability of the information contained herein and the potential
legal, compliance, tax, or accounting effects thereof. This document is not intended as investment research
or investment advice, or as a recommendation, offer, or solicitation for the purchase or sale of any security,
financial instrument, financial product or service, or to be used in any way for evaluating the merits of participating in any transaction.

\bibliography{refs.bib}

\clearpage
\appendix

\onecolumn

\section*{Appendix}

Appendix \ref{sec:fairpriv} provides background on privacy, the input perturbation notion, and how noise connects to differential privacy. Appendix \ref{sec:ci} provides more background on Chernoff Information and its connection to fairness. Appendix \ref{sec:proofs} reiterates useful Theorems and Lemmas from the Isotropic Gaussian examples as well as presents their proofs. Appendix \ref{appx:figures} provides a comparison between the fairness-accuracy curves and the log-fairness-accuracy curves for the initial Gaussian experiments. Additionally it holds a more comprehensive overview of Case 1 and 2 in the Gaussian setting. Appendix \ref{sec:consist} contains the proof of consistency for the Chernoff Information Neural Estimator. Appendix \ref{sec:exp_det} holds experimental details. Appendix \ref{sec:hsls} contains additional experiments on the HSLS dataset. Appendix \ref{sec:suppdat} provides a comparison of the fairness-accuracy curves and the log-fairness-accuracy curves for real world data. Finally, Appendix \ref{sec:ablation} provides a brief ablation over hyperparameter choices.

\renewcommand{\thesection}
{\Alph{section}}
\setcounter{section}{0}

\section{Background on Privacy} \label{sec:fairpriv}


\counterwithin{figure}{section}
\setcounter{figure}{0}

\counterwithin{theorem}{section}
\setcounter{theorem}{0}


In this work, we utilize the common notion of differential privacy.

\begin{definition} [Differential Privacy]
    A randomized learning algorithm \(\mathcal{A}\) is \((\varepsilon,\delta)\)-differentially private if, for every pair of neighboring datasets \(D\) and \(D'\) that differ in at most one element, and for every measurable subset \(S\) of the output space of \(\mathcal{A}\),
    
\begin{equation}
\Pr\!\bigl[\mathcal{A}(D) \in S\bigr]
\;\le\;
e^{\varepsilon}\,
\Pr\!\bigl[\mathcal{A}(D') \in S\bigr]
+
\delta.
\end{equation}
\end{definition}

Throughout this work, we utilize the input perturbation notion of privacy (Algorithm \ref{alg:noisy-sgd}).

\begin{algorithm}[h]
\caption{SGD with Gaussian Input Perturbation }
\label{alg:noisy-sgd}
\begin{algorithmic}[1]
\State \textbf{Input:} Dataset $D$, batch size $m$, learning rate $\alpha$, noise variance $\eta^2$, initial parameters $\bm{\theta_0}$
\State Create noisy dataset $\widetilde{D} =\{\mathbf{\tilde{x}_i}, y\}_{i=1}^n$ where $\mathbf{\tilde{x}_i} = \mathbf{x_i} + \bm{\xi_i}$ with $\bm{\xi_i} \sim \mathcal{N}(\bm{0}, \eta^2 \mathbf{I})$
\For{$t = 1, 2, \dots, T$}
    \State Sample a mini-batch $\{\mathbf{x_i}, y_i\}_{i=1}^m \sim \widetilde{D}$
    \State Compute stochastic gradient: \\
    ~~~~~~~~~$\mathbf{g_t} = \frac{1}{m} \sum_{i=1}^m \nabla_\theta \ell(\bm{\theta_{t-1}}; \mathbf{\tilde{x}_i}, y_i)$
    \State Update parameters: $\bm{\theta_t} \gets \bm{\theta_{t-1}} - \alpha \mathbf{g_t}$
\EndFor
\State \textbf{Return:} $\theta_T$
\end{algorithmic}
\end{algorithm}

For input perturbation, the naive bound on privacy stems from the feature level notion of neighboring datasets.

\begin{definition}[Neighboring Datasets -- Feature Level] \label{def:feature_neighbor}
Two datasets $D = \{(x_i,y_i)\}_{i=1}^n$ and $D' = \{(x'_i,y_i)\}_{i=1}^n$ are neighboring at the \emph{feature-level} if they differ in the features of a single element only. That is,

$$\exists\, j \in [n] \text{ such that } x_j \neq x'_j \text{ but } y_j=y'_j,
$$
and for all $i \neq j$, $x_i = x'_i$ and $y_i = y'_i$.
\end{definition}

The input perturbation notion of privacy leverages the Gaussian Mechanism to achieve differential privacy.

\begin{definition}[Gaussian Mechanism \citep{dwork2014algorithmic}]
Given a function $f : \mathcal{D} \to \mathbb{R}^d$ with $\ell_2$-sensitivity 
$\Delta_2(f) = \sup_{D, D'} \| f(D) - f(D') \|_2$ over all neighboring datasets $D, D'$, 
the Gaussian Mechanism with parameter $\eta > 0$ outputs
$$
\mathcal{M}(D) = f(D) + \mathcal{N}(0, \eta^2 \bm{I}_d),
$$
where $\mathcal{N}(0, \sigma^2 \bm{I}_d)$ is the $d$-dimensional Gaussian distribution with mean $0$ and covariance $\eta^2 \bm{I}_d$.
\end{definition}

It follows that the Gaussian Mechanism can provide differential privacy

\begin{theorem} [Theorem A.1 \citep{dwork2014algorithmic}]
     Let $\varepsilon \in (0,1)$ be arbitrary. For $c^2 \geq 2 \ln (1.25/\delta)$ the Gaussian Mechanism with parameter $\eta \geq c\Delta_2(f)/\varepsilon$ is $(\varepsilon, \delta)$-differentially private.
\end{theorem}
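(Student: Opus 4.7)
The plan is to reduce the $d$-dimensional claim to a one-dimensional tail estimate using rotational symmetry of the isotropic Gaussian, then upper bound the privacy loss random variable via a Gaussian tail inequality, and finally convert that tail bound into an $(\varepsilon,\delta)$-DP statement. First I would fix neighboring datasets $D,D'$, set $v = f(D')-f(D)$ so that $\|v\|_2 \leq \Delta_2(f)$, and observe that $\mathcal{M}(D) = f(D) + Z$ and $\mathcal{M}(D') = f(D) + v + Z$ with $Z \sim \mathcal{N}(0,\eta^2 \bm{I}_d)$. Because $Z$ is rotationally invariant and the two output densities only differ by a translation by $v$, an orthogonal change of coordinates aligning $v$ with $e_1$ collapses the log-density ratio to a one-dimensional problem with shift $s := \|v\|_2 \in (0,\Delta_2(f)]$; the remaining $d-1$ coordinates cancel exactly in the ratio.

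Next I would write down the privacy loss random variable $L(x) := \ln\bigl(p_{\mathcal{M}(D)}(x)/p_{\mathcal{M}(D')}(x)\bigr)$ explicitly. Plugging in the 1-D Gaussian density and simplifying gives
\begin{equation}
L(X) \;=\; \frac{2sX - s^2}{2\eta^2}, \qquad X \sim \mathcal{N}(0,\eta^2),
\end{equation}
so the event $\{L(X) \leq \varepsilon\}$ is equivalent to $\{X \leq \eta^2 \varepsilon/s + s/2\}$. The required control therefore reduces to bounding $\Pr[X > \eta^2\varepsilon/s + s/2]$ by $\delta$. Normalizing $X/\eta$ to a standard normal and applying the Mills-type tail bound $\Pr[\mathcal{N}(0,1) > t] \leq \tfrac{1}{t\sqrt{2\pi}} e^{-t^2/2}$ at $t = \eta\varepsilon/s + s/(2\eta)$, then worst-casing over $s \leq \Delta_2(f)$ and using the hypothesis $\eta \geq c\,\Delta_2(f)/\varepsilon$ with $\varepsilon < 1$, reduces the bound to an inequality whose explicit resolution yields the stated constant $c^2 \geq 2\ln(1.25/\delta)$.

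Finally I would convert the pointwise bound on the privacy loss into the $(\varepsilon,\delta)$-DP inequality by the standard partitioning argument. Let $B = \{y : L(y) > \varepsilon\}$ in the output space; by the preceding step $\Pr[\mathcal{M}(D) \in B] \leq \delta$, while on $B^c$ we have $p_{\mathcal{M}(D)}(y) \leq e^{\varepsilon} p_{\mathcal{M}(D')}(y)$ pointwise. Hence for any measurable $S$,
\begin{equation}
\Pr[\mathcal{M}(D)\in S] \;\leq\; \Pr[\mathcal{M}(D)\in S\cap B^c] + \Pr[\mathcal{M}(D)\in B] \;\leq\; e^{\varepsilon}\Pr[\mathcal{M}(D')\in S] + \delta,
\end{equation}
which is the required $(\varepsilon,\delta)$-DP guarantee (and the symmetric direction follows identically by swapping $D$ and $D'$).

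The main obstacle will be the tail calculation that produces the explicit constant $1.25$: naively applying $\Pr[\mathcal{N}(0,1)>t] \leq e^{-t^2/2}$ is loose and does not close the inequality when $\varepsilon$ is close to $1$, so one must carry the $1/(t\sqrt{2\pi})$ factor from the Mills ratio and monotonicity-check that $t$ stays bounded below over the relevant regime. The rest of the argument is a routine manipulation of Gaussian densities combined with the partitioning lemma that turns a high-probability bound on the privacy loss into an $(\varepsilon,\delta)$ statement.
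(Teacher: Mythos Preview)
The paper does not give its own proof of this statement: it is quoted verbatim as Theorem~A.1 of \citet{dwork2014algorithmic} and used as a black box to derive Lemma~1 (the $\eta^2 \geq 8\log(1.25/\delta)/\varepsilon^2$ bound for Algorithm~1). There is therefore nothing in the paper to compare your argument against.

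That said, your outline is exactly the classical Dwork--Roth proof: rotate to reduce the $d$-dimensional shift to a scalar shift of size $\|v\|_2 \le \Delta_2(f)$, write the privacy loss as the affine function $(2sX - s^2)/(2\eta^2)$ of a one-dimensional Gaussian, bound the tail $\Pr[L(X)>\varepsilon]$ via the Mills-ratio inequality, and then invoke the partitioning lemma to pass from ``privacy loss $\le \varepsilon$ except with probability $\delta$'' to $(\varepsilon,\delta)$-DP. You have also correctly flagged the one genuinely delicate step, namely that the crude bound $\Pr[\mathcal{N}(0,1)>t]\le e^{-t^2/2}$ is not sharp enough and one must retain the $1/(t\sqrt{2\pi})$ prefactor and use $\varepsilon<1$ to extract the constant $1.25$. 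Nothing is missing.
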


The input perturbation notion of privacy is equivalent to applying the Gaussian Mechanism to the identity function on our dataset and allows us to derive a relationship between privacy parameters $(\varepsilon, \delta)$ and the noise parameter $\eta^2$.

\begin{lemma}[\citet{dwork2014algorithmic}] \label{thm:dwork}
    Assume $\| \mathbf{x} \|_2 \leq 1$ for all $\mathbf{x} \in \mathbb{R}^d$. When $\eta^2 \geq \frac{8\log(1.25/\delta)}{\varepsilon^2}$, Algorithm 1 is $(\varepsilon, \delta$)-DP with respect to feature-level neighboring datasets (Definition \ref{def:feature_neighbor}).
\end{lemma}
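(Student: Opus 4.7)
The plan is to reduce this statement to a direct application of the Gaussian mechanism bound (Theorem A.1 of \citet{dwork2014algorithmic}, restated just above the lemma) combined with the post-processing property of differential privacy. The key observation is that Algorithm~\ref{alg:noisy-sgd} can be decomposed into two stages: (i) the construction of the noisy dataset $\widetilde{D}$ by adding independent Gaussian noise $\mathcal{N}(\mathbf{0}, \eta^2 \mathbf{I})$ to each feature vector $\mathbf{x}_i$, and (ii) the subsequent SGD loop, which only accesses $D$ through $\widetilde{D}$. Since post-processing cannot weaken privacy, it suffices to show that the map $D \mapsto \widetilde{D}$ is itself $(\varepsilon,\delta)$-DP at the feature level.

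First I would formalize stage (i) as an instance of the Gaussian mechanism applied to the identity query $f(D) = (\mathbf{x}_1,\dots,\mathbf{x}_n)$, which takes values in $\mathbb{R}^{nd}$. For two feature-level neighbors $D$ and $D'$ (Definition~\ref{def:feature_neighbor}), the vectors agree on every coordinate except one index $j$, so
\begin{equation}
\Delta_2(f) \;=\; \sup_{D \sim D'} \|f(D) - f(D')\|_2 \;=\; \sup_{\mathbf{x}_j, \mathbf{x}'_j} \|\mathbf{x}_j - \mathbf{x}'_j\|_2 \;\le\; \|\mathbf{x}_j\|_2 + \|\mathbf{x}'_j\|_2 \;\le\; 2,
\end{equation}
using the triangle inequality and the assumed norm bound $\|\mathbf{x}\|_2 \le 1$. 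The noise in stage (i) corresponds exactly to drawing $\mathcal{N}(\mathbf{0}, \eta^2 \mathbf{I}_{nd})$ and adding it to $f(D)$, so the Gaussian mechanism theorem applies with scale parameter $\eta$. Plugging $\Delta_2(f) \le 2$ into the condition $\eta \ge c\,\Delta_2(f)/\varepsilon$ with $c^2 \ge 2\log(1.25/\delta)$ yields
\begin{equation}
\eta^2 \;\ge\; \frac{c^2 \, \Delta_2(f)^2}{\varepsilon^2} \;\ge\; \frac{2\log(1.25/\delta)\cdot 4}{\varepsilon^2} \;=\; \frac{8\log(1.25/\delta)}{\varepsilon^2},
\end{equation}
which is exactly the hypothesis of the lemma; so stage (i) is $(\varepsilon,\delta)$-DP at the feature level.

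Finally I would invoke the post-processing property of differential privacy to extend the guarantee from $\widetilde{D}$ to the full output $\bm{\theta}_T$ of Algorithm~\ref{alg:noisy-sgd}: since steps 2--6 depend on $D$ only through $\widetilde{D}$, the randomized map $D \mapsto \bm{\theta}_T$ is $(\varepsilon,\delta)$-DP as well. There is no real obstacle here; the only subtle point worth flagging in the write-up is the sensitivity factor of $2$ (as opposed to $1$), which arises because the feature-level neighboring relation allows an adversary to substitute an arbitrary unit-norm vector rather than merely perturb one, and it is precisely this factor of $4$ inside the square that produces the constant $8$ in the stated noise-variance bound.
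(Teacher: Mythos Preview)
Your proposal is correct and matches the paper's own proof essentially line for line: the paper likewise bounds the $\ell_2$-sensitivity of the identity map by $2$, invokes the Gaussian mechanism theorem to obtain the stated noise bound, and then appeals to post-processing to carry the guarantee through to Algorithm~\ref{alg:noisy-sgd}. Your write-up is simply more explicit about the triangle-inequality step and the origin of the constant $8$.
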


\begin{proof}
    Since $\| \mathbf{x}\|_2 \leq 1$, the $\ell_2$-sensitivity of the identity function is bounded by $2$. Thus, by Theorem \ref{thm:dwork}, we get $(\varepsilon, \delta)$-differential privacy with noise $\eta^2 \geq \frac{8\log(1.25/\delta)}{\varepsilon^2}$. This $(\varepsilon, \delta)$-differential privacy extends to Algorithm 1 by the postprocessing property of differential privacy.
\end{proof}

Recent works have explored tightening this bound and generalizing to the more common notion of neighboring datasets.

\begin{definition}[Neighboring Datasets] \label{def:neighbor}
Two datasets $D = \{(x_i,y_i)\}_{i=1}^n$ and $D' = \{(x'_i,y'_i)\}_{i=1}^n$ are \emph{neighboring} if they differ in the data of a single element. That is,

$$\exists\, j \in [n] \text{ such that } (x_j,y_j) \neq (x'_j,y'_j),$$
and for all $i \neq j$, $(x_i,y_i) = (x'_i,y'_i)$.
\end{definition}

For $T$ training steps and $n$ samples, \citet{kang2020inputperturbationnewparadigm} show that the noise bound becomes $O(\frac{G^2T\log}{n(n-1)\sqrt{\Delta}\varepsilon^2})$ for a $G$-Lipschitz and $\Delta$-strongly convex loss function . \citet{fukuchi2017differentially}, show that for a quadratic loss function the noise bound becomes $O(\frac{G^2\log(16/\delta)) + \varepsilon}{n\varepsilon^2})$. In all cases, $\eta^2 \propto \frac{\log(1/\delta)}{\varepsilon^2}$. Thus, throughout this paper we utilize $\eta^2$ as our privacy parameter, since it 
provides a simple and standard expression directly linked to $(\varepsilon,\delta)$ 
via the Gaussian mechanism, allowing us to avoid restricting our analysis with assumptions on the learning algorithm. While tighter bounds can be derived, they do not affect the qualitative behavior of our results.

\section{Background on Chernoff Information} \label{sec:ci}


\counterwithin{figure}{section}
\setcounter{figure}{0}

\counterwithin{theorem}{section}
\setcounter{theorem}{0}


\subsection{Bounding the Error of Bayes Optimal Classifier}

\begin{lemma} 
    [\citep{nielsen2011}] \label{lem:bound2} For hypotheses $P_0(x)$ under $Y=0$ and $P_1(x)$ under $Y=1$, Chernoff Information bounds the error of the Bayes Optimal Classifier $H$:
    $$\Pr[H(X) \neq Y] \leq e^{-C(P_0, P_1)}.$$
\end{lemma}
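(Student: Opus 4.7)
The plan is to establish the bound by reducing the Bayes error to an integral of the minimum of the two densities, and then dominating that minimum by the geometric-mean integrand whose infimum defines Chernoff Information. First, I would recall that under the equal-prior setting of the paper, the Bayes optimal classifier $H$ is the MAP rule $H(x) = \arg\max_{i \in \{0,1\}} P_i(x)$, and its error probability decomposes as
\begin{equation}
\Pr[H(X) \neq Y] \;=\; \tfrac{1}{2}\!\int P_0(x)\,\mathbf{1}\{P_1(x) > P_0(x)\}\,dx \;+\; \tfrac{1}{2}\!\int P_1(x)\,\mathbf{1}\{P_0(x) \geq P_1(x)\}\,dx \;=\; \tfrac{1}{2}\!\int \min\bigl(P_0(x),P_1(x)\bigr)\,dx.
\end{equation}

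Second, I would invoke the elementary pointwise inequality that for any $a,b \geq 0$ and any $u \in (0,1)$,
\begin{equation}
\min(a,b) \;\leq\; a^{\,1-u}\,b^{\,u},
\end{equation}
which follows by a direct case check (if $a \leq b$, then $(b/a)^u \geq 1$, so $a \leq a\,(b/a)^u = a^{1-u}b^u$, and symmetrically when $b \leq a$). Applying this pointwise to the densities and integrating gives, for every $u \in (0,1)$,
\begin{equation}
\int \min\bigl(P_0(x),P_1(x)\bigr)\,dx \;\leq\; \int P_0(x)^{1-u}\,P_1(x)^u\,dx.
\end{equation}

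Third, since the left-hand side does not depend on $u$, I would take the infimum of the right-hand side over $u \in (0,1)$, then exponentiate the definition
\begin{equation}
C(P_0,P_1) \;=\; -\inf_{u \in (0,1)} \log\!\int P_0(x)^{1-u}P_1(x)^u\,dx
\end{equation}
to obtain $\int \min(P_0,P_1)\,dx \leq e^{-C(P_0,P_1)}$. Combining this with the decomposition above yields
\begin{equation}
\Pr[H(X) \neq Y] \;\leq\; \tfrac{1}{2}\,e^{-C(P_0,P_1)} \;\leq\; e^{-C(P_0,P_1)},
\end{equation}
which is the desired bound.

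I do not expect a significant obstacle here, since this is a classical consequence of the $\min$-versus-geometric-mean inequality. The only minor subtlety is bookkeeping the prior factor: the equal-prior assumption used throughout the paper produces the $\tfrac{1}{2}$ prefactor, which is absorbed harmlessly into the stated bound; for unequal priors a nearly identical argument with $\min(\pi_0 P_0, \pi_1 P_1) \leq \pi_0^{1-u}\pi_1^u P_0^{1-u}P_1^u$ would recover the same exponential rate, losing at most a bounded prefactor.
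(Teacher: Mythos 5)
Your proposal is correct and follows essentially the same route as the paper's proof: both reduce the Bayes error to $\int \min\{w_0 P_0, w_1 P_1\}\,dx$, apply the pointwise inequality $\min(a,b) \leq a^{1-u}b^{u}$, and take the infimum over $u$ to recover $e^{-C(P_0,P_1)}$, absorbing the prior-dependent prefactor (your $\tfrac{1}{2}$, the paper's $w_0^{\alpha}w_1^{1-\alpha}$) since it is at most $1$. The only cosmetic difference is that you specialize to equal priors first and note the general case afterward, whereas the paper carries general priors $w_0, w_1$ throughout.
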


\begin{proof}
    Note: Most of this proof is from \citet{nielsen2011}. We have included it for completeness.

    Consider the binary classification setting. Let $\Pr[Y = 0] = w_0 > 0$ and let $\Pr[Y = 1] = w_1 = 1- w_0 > 0$. The class probabilities can be defined as $p_0(x) = \Pr[x|Y = 0]$ and  $p_1(x) = \Pr[x|Y = 1]$. Now, consider the Bayes decision rule in which $H(x)$ classifies as $\hat{Y} = 0$ if $Pr[Y = 0|x] > Pr[Y = 1|x]$ and $\hat{Y} = 1$ otherwise. Using Bayes rule,  $\Pr[Y = i|x] = \frac{\Pr[Y = i]\Pr[x|Y = i]}{\Pr[x]} = \frac{w_ip_i(x)}{p(x)}$ for $i \in \{0,1\}$. Now, we can obtain the probability of error:

$$
\Pr(\text{Error} \mid x) =
\begin{cases}
\Pr(Y = 0 \mid x) & \text{if } H \text{ wrongly decided } \hat{Y} = 1, \\
\Pr(Y = 1 \mid x) & \text{if } H \text{ wrongly decided } \hat{Y} = 0.
\end{cases}.
$$

Thus, the Bayes decision rule minimizes by principle the average probability of error:

$$
E^* = \int \Pr[\text{Error}|x]p(x)dx = \int \min\{\Pr[Y = 0|x], \Pr[Y = 1|x]\}p(x)dx.
$$

Now, to upper bound this Bayes error, we can utilize knowledge that, for $a,b > 0$, $\min\{a,b\} \leq a^\alpha b^{1-\alpha} ~~\forall\alpha \in (0,1)$. This implies that:

$$
E^* \leq w_0^\alpha w_1^{1-\alpha}\int p_0(x)^\alpha p_1(x)^{1-\alpha}dx. 
$$

Now, let $c_\alpha(p_0 : p_1) = \int p_0(x)^\alpha p_1(x)^{1-\alpha}dx$. This term is referred to as the Chernoff $\alpha$-coefficient. Since the above inequality holds for all $\alpha \in (0,1)$, the best exponent for upper bounding the Bayes error corresponds to the optimal Chernoff $\alpha$-coefficient:

$$
c^*(p_0:p_1) = c_{\alpha^*}(p_0:p_1) = \inf_{\alpha \in (0,1)}\int p_0(x)^\alpha p_1(x)^{1-\alpha}dx
$$

The Chernoff Coefficient gives a measure of similarity which yields CI:

$$
C(p_0,p_1) = -\log \inf_{\alpha \in (0,1)}\int p_0(x)^\alpha p_1(x)^{1-\alpha}dx.
$$

 CI yields the best achievable exponent for a Bayesian probability of error:

$$
E^* \leq w_0^\alpha w_1^{1-\alpha}e^{-C(p_0, p_1)}.
$$

Lemma 1 follows as $w_0, w_1 < 1$.
\end{proof}

\subsubsection{Tightness of Bound} \label{sec:tightness}

CI provides an asymptotically tight bound on the error exponent, i.e.,

$$
\lim_{n \rightarrow \infty} -\frac{1}{n} \log P_e^{(n)} = C (P_0, P_1),
$$

where $n$ is the number of samples \citep{cover1999elements}. For instance, for Gaussian distributions, we can explicitly write:

$$
c_1 e^{-nC(P_0, P_1)} \leq P_e^{(n)} \leq c_2 e^{-nC(P_0, P_1)},
$$

which shows that CI provides an exponent-tight bound ($c_1$ and $c_2$ are constants with a closed-form expression).

\subsection{Connection to Fairness Metrics}

Next, we provide an overview of Chernoff Information and its connection to fairness following results from \citet{dutta2020there}.

Consider the binary classification task ($Y \in \{0,1\}$) with a binary protected attribute ($S \in \{0,1\}$). Consider the standard Bayes optimal classification rule of split classifiers: $H_s(x) = \frac{\Pr[Y = 1|S =s, x]}{\Pr[Y = 0|S =s, x]} \geq \tau_s$ in which the classifier predicts positive if the ratio of the conditionals is greater than a threshold $\tau_s$.

\begin{lemma}
    [Lemma 2 from \citet{dutta2020there}] The Chernoff exponent of the probability of error of the Bayes optimal classifier is given by the Chernoff Information:
    \begin{equation}
        E_e = C(P_0, P_1) = -\inf_{u\in(0,1)} \log(\int P_0(x)^{1-u}P_1(x)^udx). 
    \end{equation}
\end{lemma}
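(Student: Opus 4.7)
The plan is to show that when the Bayes-optimal classifier observes $n$ i.i.d. samples drawn from either hypothesis, the error probability $P_e^{(n)}$ satisfies $\lim_{n\to\infty} -\tfrac{1}{n}\log P_e^{(n)} = C(P_0,P_1)$; this is exactly the statement that the error exponent $E_e$ equals the Chernoff Information. The argument decomposes into an upper bound on $P_e^{(n)}$ (which gives a lower bound on the exponent) and a matching lower bound on $P_e^{(n)}$ (which gives an upper bound on the exponent). The first half is the standard Chernoff-bound calculation already carried out in Appendix \ref{sec:ci}; the second requires a large-deviation/tilting argument.

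For the upper bound, I would start from the Bayes error identity
\begin{equation*}
P_e^{(n)} = \int \min\bigl\{w_0\, P_0^{\otimes n}(\mathbf{x}),\ w_1\, P_1^{\otimes n}(\mathbf{x})\bigr\}\, d\mathbf{x},
\end{equation*}
apply $\min\{a,b\}\le a^{u}b^{1-u}$ valid for any $u\in(0,1)$, and use independence to factor the resulting integral as $\bigl(\int P_0(x)^{1-u}P_1(x)^{u}\,dx\bigr)^{n}$. Taking $-\tfrac{1}{n}\log$, sending $n\to\infty$, and optimizing over $u$ then yields $\liminf_n -\tfrac{1}{n}\log P_e^{(n)} \ge C(P_0,P_1)$, since the prior factor $w_0^{u}w_1^{1-u}$ contributes only $O(1/n)$ to the exponent. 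This is the $n$-sample lift of Lemma \ref{lem:bound2}.

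For the converse, let $u^\star\in(0,1)$ attain the infimum in the definition of $C(P_0,P_1)$ and introduce the exponentially tilted measure $Q^\star(x)\propto P_0(x)^{1-u^\star}P_1(x)^{u^\star}$. The first-order optimality condition for $u^\star$ rearranges to $\mathbb{E}_{Q^\star}[\log(P_1/P_0)]=0$, so under $Q^\star$ the log-likelihood ratio $S_n=\sum_{i=1}^{n}\log(P_1(X_i)/P_0(X_i))$ satisfies a CLT with some positive variance. A change-of-measure computation then shows that the $P_0^{\otimes n}$-probability of the event $\{|S_n|\le \epsilon\sqrt{n}\}$ is bounded below by $c\,e^{-nC(P_0,P_1)}/\sqrt{n}$; on this event the two posterior likelihoods are comparable, so even the Bayes-optimal rule errs with constant probability. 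Consequently $P_e^{(n)}\ge c'\,e^{-nC(P_0,P_1)}/\sqrt{n}$, which gives $\limsup_n -\tfrac{1}{n}\log P_e^{(n)}\le C(P_0,P_1)$.

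Combining the two bounds yields $E_e = C(P_0,P_1)$. The main obstacle I expect is the converse: the Chernoff-style upper bound on $P_e^{(n)}$ by itself does not preclude strictly faster decay, so ruling that out requires the exponential tilt combined with either a local CLT or a Bahadur--Rao-type density estimate for $S_n$ near its critical value. Once that sharp lower bound is established, both directions line up and the lemma follows.
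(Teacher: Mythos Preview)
Your argument is correct and follows the classical Cover--Thomas route: the Chernoff upper bound on $P_e^{(n)}$ gives $\liminf_n -\tfrac{1}{n}\log P_e^{(n)}\ge C(P_0,P_1)$, and the exponential-tilt/change-of-measure argument with the measure $Q^\star\propto P_0^{1-u^\star}P_1^{u^\star}$ gives the matching converse. One small point: restricting to the window $\{|S_n|\le \epsilon\sqrt{n}\}$ means the likelihood ratio $e^{S_n}$ is only bounded by $e^{\pm\epsilon\sqrt{n}}$, so the ``constant probability of error'' phrasing is slightly loose; you pick up an extra $e^{-\epsilon\sqrt{n}}$ factor, which is harmless for the exponent but worth stating explicitly. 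Using a fixed window $\{|S_n|\le K\}$ makes the constant genuinely constant and gives the cleaner $c'e^{-nC}/\sqrt{n}$ you quote.

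As for comparison with the paper: the paper does \emph{not} supply its own proof of this lemma. It is quoted verbatim as Lemma~2 of \citet{dutta2020there} and left uncited beyond that reference. The only related argument the paper carries out is the one-sided $n=1$ Chernoff bound in the proof of Lemma~\ref{lem:bound2}, which is precisely the upper-bound half of your plan specialized to a single observation; the converse direction is nowhere developed in the paper. So your proposal is not merely a different route---it is a full proof where the paper offers none.
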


As highlighted in Lemma \ref{lem:bound}, this Chernoff exponent bounds the probability of error of the Bayes optimal classifier. We now show a connection between false positive and false negative error rates and Chernoff Information.

\begin{definition}
    [Chernoff Exponents---Definition 1 from \citet{dutta2020there}] The Chernoff exponents of the False positive error and False negative error are defined as 
    \begin{align}
         & E_\text{FP}(\tau_s) = \sup_{u >0}(u\tau_s - \Lambda_0(u)) \\
         & E_\text{FN}(\tau_s) = \sup_{u <0}(u\tau_s - \Lambda_1(u))
    \end{align}
    where $\Lambda_0$ and $\Lambda_1$ are the log-generating functions.
\end{definition}

Similarly, these Chernoff exponents bound the probability of false positive and false negative errors.

\begin{lemma}
    [Chernoff Bound - Lemma 1 from \citet{dutta2020there}] 
    The Chernoff exponents of false positive error and false negative error bound the probability of false positive and false negative error
    \begin{align}
        P_\text{FP} \leq e^{-E_{FP}(\tau_s)}\\
        P_\text{FN} \leq e^{-E_{FN}(\tau_s)}
    \end{align}
\end{lemma}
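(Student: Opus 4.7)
The plan is to recognize this as the classical Cramér–Chernoff tail bound applied to the log-likelihood ratio statistic under each of the two class-conditional distributions. Set $Z(x) = \log\frac{P_1(x)}{P_0(x)}$. The split Bayes-style classifier $H_s$ predicts positive exactly when $Z(X) \geq \tau_s$, so a false positive is the event $\{Z(X) \geq \tau_s\}$ when $X \sim P_0$, and a false negative is the event $\{Z(X) < \tau_s\}$ when $X \sim P_1$. The cumulant generating functions $\Lambda_0(u) = \log \mathbb{E}_{X\sim P_0}[e^{uZ(X)}]$ and $\Lambda_1(u) = \log \mathbb{E}_{X\sim P_1}[e^{uZ(X)}]$ then enter naturally.

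For the false positive bound, I would first fix any $u > 0$ and apply Markov's inequality to the non-negative random variable $e^{uZ(X)}$ under $P_0$:
\begin{equation}
P_{\mathrm{FP}} \;=\; \Pr_{X\sim P_0}\!\bigl[Z(X) \geq \tau_s\bigr] \;=\; \Pr_{X\sim P_0}\!\bigl[e^{uZ(X)} \geq e^{u\tau_s}\bigr] \;\leq\; e^{\Lambda_0(u) - u\tau_s}.
\end{equation}
Since this inequality holds for every $u > 0$, I would take the infimum on the right, which is equivalent to the supremum of $u\tau_s - \Lambda_0(u)$, recovering $P_{\mathrm{FP}} \leq e^{-E_{\mathrm{FP}}(\tau_s)}$ by the definition of the Chernoff exponent.

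For the false negative bound, the same idea applies but the sign of $u$ matters. I would take $u < 0$ and use that $u(Z(X) - \tau_s) \geq 0$ on the event $\{Z(X) \leq \tau_s\}$, so Markov's inequality under $P_1$ gives
\begin{equation}
P_{\mathrm{FN}} \;=\; \Pr_{X\sim P_1}\!\bigl[Z(X) \leq \tau_s\bigr] \;=\; \Pr_{X\sim P_1}\!\bigl[e^{uZ(X)} \geq e^{u\tau_s}\bigr] \;\leq\; e^{\Lambda_1(u) - u\tau_s}.
\end{equation}
Optimizing over $u < 0$ yields the stated bound $P_{\mathrm{FN}} \leq e^{-E_{\mathrm{FN}}(\tau_s)}$.

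There is no real obstacle here; the only subtlety worth noting is the sign convention in the definition of $E_{\mathrm{FN}}$, which forces the supremum to be taken over $u < 0$ because we need the inequality $u(Z - \tau_s) \geq 0$ to point in the correct direction when conditioning on $\{Z \leq \tau_s\}$. I would also remark that both bounds are tight in the sense of large-deviations theory (Cramér's theorem), which is what makes these exponents the correct asymptotic rates, matching the role that Chernoff Information plays for the overall Bayes error in Lemma \ref{lem:bound2}.
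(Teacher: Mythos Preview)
Your argument is correct and is precisely the classical Cram\'er--Chernoff derivation: apply Markov's inequality to $e^{uZ(X)}$ under the appropriate class-conditional law, then optimize over the sign-restricted range of $u$ to recover the Legendre--Fenchel exponents $E_{\mathrm{FP}}$ and $E_{\mathrm{FN}}$. The sign handling for the false-negative case is exactly right.

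There is nothing to compare against in this paper: the lemma is stated in Appendix~\ref{sec:ci} purely as a citation of Lemma~1 from \citet{dutta2020there}, with no proof given here. Your write-up is the standard textbook proof one would expect behind that citation, so it is entirely appropriate. The only cosmetic point is that the decision rule in the paper predicts positive on $\{Z(X) \geq \tau_s\}$, so strictly $P_{\mathrm{FN}} = \Pr_{P_1}[Z(X) < \tau_s]$ rather than $\leq$; this has no effect on the bound since $\{Z \leq \tau_s\} \supseteq \{Z < \tau_s\}$ and the Markov step goes through unchanged.
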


Under the equal prior condition, the overall Chernoff exponent can be written as simple function of the false positive and false negative exponents.
\begin{definition}
    [Definition 2 from \citet{dutta2020there}] Suppose $\Pr[S = 0] = \Pr[S=1]$ and $\Pr[Y=1|S =s] = \Pr[Y=0|S=s]$ for all $s$. The Chernoff exponent of the overall probability of error $P_e$ is defined as
    \begin{equation}
        E_e = \min\{E_{FP}(\tau_s), E_{FN}(\tau_s)\}
    \end{equation}
\end{definition}

Thus, Chernoff Difference naturally measures disparities in error exponents across groups (False positive or False negative exponents). 

\begin{equation}
\text{CD} = |\min\{E_{FP}(\tau_P), E_{FN}(\tau_P)\} - \min\{E_{FP}(\tau_Q), E_{FN}(\tau_Q)\}|    
\end{equation}

In the asymptotic regime, CD expresses disparities in terms of the maximum error rate on a log-scale, capturing a geometric difference rather than the arithmetic difference used in common fairness metrics such as Equal Opportunity. Next, we demonstrate how Chernoff Difference relates to the standard arithmetic difference.

\begin{theorem} \label{thm:lipschitz}
    Let $M_p = \max\{P_{FP}^P, P_{FN}^P\}$ and $M_q = \max\{P_{FP}^Q, P_{FN}^Q\}$. For all constants $0< c < C$ with $c \leq M_P, M_Q \leq C$, we have
    \begin{equation}
        c\left|\log M_p - 
    \log M_Q\right| \leq \left|M_p - M_Q\right|\leq C\left|\log M_p - 
    \log M_Q\right|.
    \end{equation}
\end{theorem}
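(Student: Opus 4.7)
The plan is to recognize that the inequality is essentially a quantitative statement that the map $t \mapsto \log t$ is bi-Lipschitz on the compact interval $[c,C]$ (bounded away from $0$), with Lipschitz constants determined by $1/c$ and $1/C$. The proof will reduce to a one-line application of the mean value theorem (or, equivalently, integrating $1/t$) together with elementary manipulation of the resulting constants.

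First I would dispose of the trivial case $M_p = M_q$, in which both sides of the desired inequality are zero. Then, without loss of generality, I would assume $M_p > M_q$ so that the absolute value bars on both sides can be removed and everything is positive. The central step is to invoke the mean value theorem applied to $f(t) = \log t$ on the interval $[M_q, M_p]$: there exists some $\xi \in (M_q, M_p)$ such that
\begin{equation}
\log M_p - \log M_q \;=\; \frac{1}{\xi}\bigl(M_p - M_q\bigr).
\end{equation}
Since $M_p, M_q \in [c,C]$ by hypothesis, the interval $(M_q, M_p)$ is contained in $[c,C]$, and hence $\xi \in [c,C]$. This immediately yields the two-sided bound
\begin{equation}
\frac{1}{C} \;\le\; \frac{1}{\xi} \;\le\; \frac{1}{c}.
\end{equation}
Substituting back into the mean value identity gives $\frac{1}{C}(M_p - M_q) \le \log M_p - \log M_q \le \frac{1}{c}(M_p - M_q)$. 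Rearranging (multiplying through by $c$ on the left and $C$ on the right, and reinstating absolute values) produces the claimed inequality
\begin{equation}
c \left|\log M_p - \log M_q\right| \;\le\; \left|M_p - M_q\right| \;\le\; C \left|\log M_p - \log M_q\right|.
\end{equation}

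There is essentially no hard step here; the whole argument is a single application of the mean value theorem once one notices that the statement is equivalent to bi-Lipschitz continuity of $\log$ on $[c,C]$. An equivalent route would be to write $\log M_p - \log M_q = \int_{M_q}^{M_p} t^{-1}\, dt$ and bound the integrand uniformly by $1/C$ and $1/c$; I would mention this as an alternative presentation since it avoids invoking an auxiliary point $\xi$ and makes the constants visibly sharp. The only minor care required is to ensure the boundary case $M_p = M_q$ is handled separately and that the ordering of $M_p, M_q$ is taken without loss of generality, both of which are routine.
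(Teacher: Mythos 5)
Your proposal is correct and follows essentially the same route as the paper's proof: both apply the Mean Value Theorem to $\log$ to obtain $\log M_p - \log M_q = \frac{1}{\xi}(M_p - M_q)$ with $\xi \in [c,C]$, then read off the two-sided bound. The extra care you take with the trivial case and the integral reformulation is fine but does not change the argument.
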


\begin{proof}
    Since $M_p,M_q \in [c,C]$ with $c>0$, the Mean Value Theorem implies
\begin{equation}
    \log M_p - \log M_q = \log'( \xi )(M_p-M_q)=\frac{1}{\xi}(M_p-M_q)    
\end{equation}

for some $\xi$ between $M_p$ and $M_q$. Thus
\begin{equation}
    |M_p-M_q|=\xi\,|\log M_p-\log M_q|.    
\end{equation}

Since $\xi \in [c,C]$, we have
\begin{equation}
c\,|\log M_p-\log M_q|
\;\le\;
|M_p-M_q|
\;\le\;
C\,|\log M_p-\log M_q|.
\end{equation}
\end{proof}

When we break the equal prior assumption the Chernoff Information continues to capture error exponents, however skewed by the class priors.

\begin{definition}
    [Section E.1 \citep{dutta2020there}] Let $\pi_0 = P[Y=0]$ and $\pi_1 = P[Y=1]$. The Chernoff exponent of the overall probability of error $P_e$ is defined as
    \begin{equation}
        E_e = \min\{E_{FP}(\tau_s) - \log2\pi_0, E_{FN}(\tau_s) - \log 2\pi_1\}
    \end{equation}
\end{definition}

Thus, the Chernoff Difference continues to evaluate the difference of the error exponents across groups. In the case where where the exponents align as false negative exponents, we again achieve a similar value to equal opportunity (although skewed by the log-ratio of the priors).

\begin{equation}
    \text{CD} = |\min\{E_{FP}(\tau_P)  - \log2\pi^{(P)}_0, E_{FN}(\tau_P)- \log 2\pi^{(P)}_1\} - \min\{E_{FP}(\tau_Q)- \log2\pi^{(Q)}_0, E_{FN}(\tau_Q)- \log 2\pi^{(Q)}_1\}|    
\end{equation}

\subsection{Multi-group Fairness}\label{ssec:multigroup}

While we focus on the binary setting, we can generalize our framework to multi-group settings. In these settings, Chernoff Information can play the role of a group statistic (analogous to group accuracy or group false-positive rate), which can be compared across all groups. The most natural generalization is to compute the maximum Chernoff Difference across all groups:
\begin{equation}
\mathrm{CD} = \max_{i \neq j \in \mathcal{S}} \left| C\!\left(P^{(i)}_0, P^{(i)}_1\right) - C\!\left(P^{(j)}_0, P^{(j)}_1\right) \right|,
\end{equation}
where the superscript denotes different sensitive groups.

This worst-case Chernoff Difference then captures the largest disparity in classification difficulty across all groups. This corresponds to the maximum notion of equal opportunity:
\begin{equation}
\mathrm{EO}_{\max} =
\max_{i \neq j \in \mathcal{S}} \lvert \mathrm{FNR}_i - \mathrm{FNR}_j \rvert,
\end{equation}
which is a common way to define equal opportunity for multi-group settings \cite{hardt2016equality, alghamdi2022beyond}.

\section{Isotropic Gaussian Proofs} \label{sec:proofs}

\counterwithin{figure}{section}
\setcounter{figure}{0}

\counterwithin{theorem}{section}
\setcounter{theorem}{0}
\begin{lemma}
    [\textbf{Restatement of Lemma \ref{lemma}}]
    When $P_0(x)\sim\mathcal{N}(\mu_0,\sigma^2\mathbf{I})$, $P_1(x)\sim\mathcal{N}(\mu_1,\sigma^2\mathbf{I})$, $Q_0(x)\sim\mathcal{N}(\zeta_0,\tau^2\mathbf{I})$, and $Q_1(x)\sim\mathcal{N}(\zeta_1,\tau^2\mathbf{I})$, the Chernoff Difference is given as:
    $$CD = \left|\frac{\|\mu_0 - \mu_1\|_2^2}{8\sigma^2} - \frac{\|\zeta_0 - \zeta_1\|_2^2}{8\tau^2}\right|.$$ 
\end{lemma}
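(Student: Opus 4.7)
The plan is to reduce the problem to computing the two Chernoff Informations $C(P_0,P_1)$ and $C(Q_0,Q_1)$ separately; since both involve pairs of isotropic Gaussians with equal covariance within each group, and since Chernoff Difference is defined as their absolute difference, it suffices to show that for Gaussians $\mathcal{N}(\mu_0,\sigma^2\mathbf{I})$ and $\mathcal{N}(\mu_1,\sigma^2\mathbf{I})$,
\begin{equation*}
C(\mathcal{N}(\mu_0,\sigma^2\mathbf{I}),\mathcal{N}(\mu_1,\sigma^2\mathbf{I})) \;=\; \frac{\|\mu_0-\mu_1\|_2^2}{8\sigma^2}.
\end{equation*}
Applying this identity twice and taking the absolute difference of the two expressions yields the claim.

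First, I would start from the definition
\begin{equation*}
C(P_0,P_1) = -\inf_{u\in(0,1)} \log \int P_0(x)^{1-u} P_1(x)^{u}\,dx,
\end{equation*}
substitute the Gaussian densities, and combine the exponents. Because both Gaussians share the same covariance $\sigma^2\mathbf{I}$, the quadratic-form expansion of $(1-u)\|x-\mu_0\|_2^2 + u\|x-\mu_1\|_2^2$ can be completed in $x$ so that the integrand factors into a Gaussian density in $x$ (which integrates to $1$) times an exponential that depends only on $u$ and $\|\mu_0-\mu_1\|_2^2$. Carrying out this completion of the square yields
\begin{equation*}
\int P_0(x)^{1-u} P_1(x)^{u}\,dx \;=\; \exp\!\left(-\frac{u(1-u)\|\mu_0-\mu_1\|_2^2}{2\sigma^2}\right),
\end{equation*}
so that the quantity to maximize over $u$ is the concave function $u(1-u)$.

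Next, the infimum in the definition of $C$ corresponds to the supremum of $u(1-u)$ on $(0,1)$, which is attained uniquely at $u^*=\tfrac{1}{2}$ with value $\tfrac{1}{4}$. Substituting back gives $C(P_0,P_1)=\|\mu_0-\mu_1\|_2^2/(8\sigma^2)$, matching the Bhattacharyya distance between the two equal-covariance Gaussians. The same computation, applied to $Q_0$ and $Q_1$ with variance $\tau^2$, gives $C(Q_0,Q_1)=\|\zeta_0-\zeta_1\|_2^2/(8\tau^2)$. Taking $|C(P_0,P_1)-C(Q_0,Q_1)|$ finishes the proof.

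The only nontrivial step is the completion of the square in the exponent, which must be carried out carefully in $\mathbb{R}^d$ with the isotropic covariance $\sigma^2\mathbf{I}$; the normalization constants from the two densities also need to be combined and checked so that the resulting expression cleanly factors as a Gaussian times an exponential in $u$. Once that algebra is done, identifying the optimal $u^*=\tfrac12$ is immediate from the concavity of $u(1-u)$, and no further machinery is needed. This is the same reduction that appears in the closed-form Chernoff Information for exponential families, as noted in \cite{nielsen2022revisiting}.
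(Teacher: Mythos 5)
Your proposal is correct and follows essentially the same route as the paper: both reduce to computing the Chernoff coefficient integral in closed form for a pair of equal-covariance isotropic Gaussians (the paper via the Gaussian moment-generating function under $P_0$, you via completing the square in the integrand, which is the same algebra), obtain the log-integral $-u(1-u)\|\mu_0-\mu_1\|_2^2/(2\sigma^2)$, optimize at $u^*=\tfrac12$, and take the absolute difference of the two Bhattacharyya-type expressions. No gaps.
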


\begin{proof}
    Recall the definition of Chernoff Difference and Chernoff Information

    \begin{align}
        CD & = |C(P_0, P_1) - C(Q_0, Q_1)| \\ &= |\min_{u\in(0,1)}\log \int P_0(x)^uP_1(x)^{1-u}dx - \min_{v\in(0,1)}\log \int Q_0(x)^vQ_1(x)^{1-v}dx|.
    \end{align}

    Now, following a result from \citet{dutta2020there}, we see that for $P_0\sim\mathcal{N}(\mu_0,\sigma^2\mathbf{I})$, $P_1\sim\mathcal{N}(\mu_1,\sigma^2\mathbf{I})$.
    \begin{align}
        &\log \int P_0(x)^uP_1(x)^{1-u}dx \\&=  \log \int e^{-\frac{u}{2\sigma^2} \left( (x-\mu_1)^T(x-\mu_1) - (x-\mu_0)^T(x-\mu_0) \right)} P_0(x) \, dx \\
&= \log e^{-\frac{u}{2\sigma^2} \left( \mu_1^T \mu_1 - \mu_0^T \mu_0 \right)} 
    \int e^{-\frac{u}{2\sigma^2} \left( -2x^T(\mu_1 - \mu_0) \right)} P_0(x) \, dx \\
&= \log e^{-\frac{u}{2\sigma^2} \left( \mu_1^T \mu_1 - \mu_0^T \mu_0 \right)} 
    e^{-\frac{u}{2\sigma^2} \left( -2\mu_0^T(\mu_1 - \mu_0) \right)} 
    e^{\frac{u^2}{2\sigma^2} \left( \|\mu_1 - \mu_0\|_2^2 \right)} \\
&= \log e^{-\frac{u}{2\sigma^2} \left( \|\mu_1 - \mu_0\|_2^2 \right)} 
    e^{\frac{u^2}{2\sigma^2} \left( \|\mu_1 - \mu_0\|_2^2 \right)}\\& = \frac{u(u-1)}{2\sigma^2}\|\mu_1 - \mu_0\|_2^2.
    \end{align}

    Now, we can compute the derivative to minimize with respect to $u$:

    \begin{align}
        \frac{d}{du}\frac{u(u-1)}{2\sigma^2}\|\mu_1 - \mu_0\|_2^2 = \frac{2u-1}{2\sigma^2}\|\mu_1 - \mu_0\|_2^2.
    \end{align}

    The derivative is 0, when $u = 0.5$. This critical point is a minimum following a first derivative test. Thus, 

    \begin{align}
        C(P_0, P_1) = \frac{\|\mu_0 - \mu_1\|_2^2}{8\sigma^2}.
    \end{align}

    When these distributions are 1-dimensional, they reduce to the closed form derived in \citet{nielsen2022revisiting}. A similar approach can be done for $Q_0(x)$ and $Q_1(x)$. Thus, the CD for Gaussian distributions is defined as

    \begin{align}
        CD = |\frac{\|\mu_0 - \mu_1\|_2^2}{8\sigma^2} - \frac{\|\zeta_0 - \zeta_1\|_2^2}{8\tau^2}|.
    \end{align}
    
\end{proof}

\begin{theorem}[ \textbf{Restatement of Theorem \ref{thm:central}}]
    Suppose $P_0(x)\sim\mathcal{N}(\mu_0,\sigma^2\mathbf{I})$, $P_1(x)\sim\mathcal{N}(\mu_1,\sigma^2\mathbf{I})$, $Q_0(x)\sim\mathcal{N}(\zeta_0,\tau^2\mathbf{I})$, and $Q_1(x)\sim\mathcal{N}(\zeta_1,\tau^2\mathbf{I})$. Without loss of generality, we assume that $\|\mu_0 - \mu_1\|_2 \geq \|\zeta_0 - \zeta_1\|_2$. There are three behaviors of the Noisy Chernoff Difference ($\widetilde{\text{CD}}_{\eta^2}$): (i) $\widetilde{\text{CD}}_{\eta^2}$ has a maximum point, (ii) $\widetilde{\text{CD}}_{\eta^2}$ has a maximum point \underline{and} a reflection point (where $\widetilde{\text{CD}}_{\eta^2} = 0$), (iii) $\widetilde{\text{CD}}_{\eta^2}$ is non-increasing.\footnote{When $\|\mu_0 - \mu_1\|_2 = \|\zeta_0 - \zeta_1\|_2$, $\widetilde{\text{CD}}_{\eta^2}$ will always fall into this case.} The respective conditions for these three cases are given as follows:
    \begin{enumerate}
    \item[]
        \item[(i)] $\frac{\|\zeta_0 - \zeta_1\|^2_2}{\|\mu_0 - \mu_1\|^2_2} <\frac{\tau^2}{\sigma^2} < \frac{\|\zeta_0 - \zeta_1\|_2}{\|\mu_0 - \mu_1\|_2} < 1$,  \hfill \textbf{(Case 1: Maximum Point)}

    \item[]     
        \item [(ii)] $\frac{\tau^2}{\sigma^2} < \frac{\|\zeta_0 - \zeta_1\|^2_2}{\|\mu_0 - \mu_1\|^2_2} < 1$, \hfill \textbf{(Case 2: Maximum and Reflection)}
        
    \item[]  
        \item [(iii)] Neither condition (i) or (ii) hold. \hfill \textbf{(Case 3: Non-increasing)}
    \end{enumerate}    
\end{theorem}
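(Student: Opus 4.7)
The plan is to work entirely with the closed form $8\widetilde{\text{CD}}_{\eta^2} = |f(\eta^2)|$ supplied by the preceding lemma, where, after introducing the shorthand $a := \|\mu_0-\mu_1\|_2^2$, $b := \|\zeta_0-\zeta_1\|_2^2$, $s := \sigma^2$, $t := \tau^2$ (with $a \geq b$ by the WLOG assumption), $f(x) := \tfrac{a}{s+x} - \tfrac{b}{t+x}$. The three qualitative shapes of $|f|$ on $[0,\infty)$ are entirely controlled by two questions: where is $f$'s unique zero on the positive axis (if any), and where is its unique critical point? Combining fractions gives $f(x) = \tfrac{(a-b)x + (at-bs)}{(s+x)(t+x)}$, from which I read off $f(0) = \tfrac{at-bs}{st}$, the limit $f(x) \to 0^+$ as $x \to \infty$ (using $a \geq b$), and the candidate zero $x_0 = \tfrac{bs-at}{a-b}$, which lies in $(0,\infty)$ iff $\tfrac{t}{s} < \tfrac{b}{a}$. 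Computing $f'(x) = \tfrac{b}{(t+x)^2} - \tfrac{a}{(s+x)^2}$ and solving $f'(x)=0$ by taking positive square roots yields the critical point $x_* = \tfrac{\sqrt{b}\,s - \sqrt{a}\,t}{\sqrt{a}-\sqrt{b}}$, positive iff $\tfrac{t}{s} < \tfrac{\sqrt{b}}{\sqrt{a}}$.

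Since $\tfrac{b}{a} \leq \tfrac{\sqrt{b}}{\sqrt{a}} \leq 1$ (with equality throughout iff $a=b$), these two thresholds on $\tfrac{t}{s}$ partition the parameter space into exactly the three cases the theorem names, and the rest is a sign check. For Case 1, $\tfrac{b}{a} < \tfrac{t}{s} < \tfrac{\sqrt{b}}{\sqrt{a}} < 1$: $f(0) > 0$, no zero of $f$ is positive, and the positive critical point $x_*$ is a maximum (since $f'(0) > 0$ and $f \to 0^+$), so $|f| = f$ rises to a single maximum before decaying, yielding behavior (i). For Case 2, $\tfrac{t}{s} < \tfrac{b}{a} < 1$: $f(0) < 0$, and both $x_0$ and $x_*$ are positive; a brief rationalization gives $x_* - x_0 = \tfrac{\sqrt{ab}\,(s-t)}{a-b}$, which is positive because $\tfrac{t}{s} < \tfrac{b}{a} < 1$ forces $t < s$. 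Thus $f$ climbs from $f(0)<0$, crosses zero at the reflection point $x_0$, peaks at $x_*$, and decays to $0$, matching behavior (ii). For Case 3, the complementary region $\tfrac{t}{s} \geq \tfrac{\sqrt{b}}{\sqrt{a}}$ (which naturally subsumes the footnote's $a=b$ edge case): this inequality rearranges to $\sqrt{b}\,(s+x) \leq \sqrt{a}\,(t+x)$ for every $x \geq 0$, equivalently $f'(x) \leq 0$ on $[0,\infty)$; since it also implies $\tfrac{t}{s} \geq \tfrac{b}{a}$ and hence $f(0) \geq 0$, $f$ is nonnegative and non-increasing, and so is $|f|$.

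The computations are essentially bookkeeping, and the anticipated obstacle is purely organizational: keeping track of the algebraic equivalences between $at$ vs.\ $bs$, $\sqrt{b}\,s$ vs.\ $\sqrt{a}\,t$, and the two stated ratios $\tfrac{b}{a}$ and $\tfrac{\sqrt{b}}{\sqrt{a}}$, so that the three named regions correctly correspond to the three qualitative shapes. The subtlest moment is Case 2, where one must verify both that $x_*$ really lies to the \emph{right} of $x_0$ (so the reflection precedes the local maximum of $|f|$ rather than getting cancelled by it) and that $f'(0) > 0$ in this sub-regime; the former follows from the closed form $x_* - x_0 = \sqrt{ab}\,(s-t)/(a-b)$ derived above, and the latter from the strict containment $\tfrac{t}{s} < \tfrac{b}{a} < \tfrac{\sqrt{b}}{\sqrt{a}}$. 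Once these are in place, the three cases fall out cleanly from the signs of $f(0)$ and $f'$ together with the monotone decay $f(x) \to 0^+$ at infinity.
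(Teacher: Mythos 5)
Your proposal is correct and follows essentially the same route as the paper's proof: both start from the closed form, locate the critical point $\eta_0^2=\frac{\sigma^2\|\zeta_0-\zeta_1\|_2-\tau^2\|\mu_0-\mu_1\|_2}{\|\mu_0-\mu_1\|_2-\|\zeta_0-\zeta_1\|_2}$ and the zero $\frac{\sigma^2\|\zeta_0-\zeta_1\|_2^2-\tau^2\|\mu_0-\mu_1\|_2^2}{\|\mu_0-\mu_1\|_2^2-\|\zeta_0-\zeta_1\|_2^2}$, read off the positivity conditions, and handle Case~3 by a sign check on the derivative; the only difference is that you identify the maximum via first-derivative sign analysis and the limit at infinity where the paper uses the second derivative together with the sign of the signed difference. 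Your explicit verification that $x_*-x_0=\sqrt{ab}\,(s-t)/(a-b)>0$ in Case~2 is a small detail the paper leaves implicit, but it does not change the argument.
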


\begin{proof}

 Recall the definition of noisy Chernoff Difference. We define a signed noisy Chernoff Difference $s\widetilde{\text{CD}}_{\eta^2}$ such that $|s\widetilde{\text{CD}}_{\eta^2}| = \widetilde{\text{CD}}_{\eta^2}$. Thus,

\begin{align}
      s \widetilde{\text{CD}}_{\eta^2}=
     \frac{1}{8(\tau^2 + \eta^2)}\|\zeta_0 - \zeta_1\|^2_2 - \frac{1}{8(\sigma^2+\eta^2)}\|\mu_0 - \mu_1\|^2_2.
\end{align}

\paragraph{Case 1 and 2.}Let $p = \|\mu_0 - \mu_1\|_2$ and let $q = \|\zeta_0 - \zeta_1\|_2$. First, we can analyze the positive $\eta^2$ regime for a critical point. To find potential critical points, consider the derivative of $s\widetilde{\text{CD}}_{\eta^2}$.

\begin{align}
    \frac{\partial}{\partial\eta^2} s \widetilde{\text{CD}}_{\eta^2}  = \frac{p^2}{8(\sigma^2 + \eta^2)^2} - \frac{q^2}{8(\tau^2 + \eta^2)^2} = \frac{p^2(\tau^2 + \eta^2)^2-q^2(\sigma^2 + \eta^2)^2}{8(\sigma^2 + \eta^2)^2(\tau^2 + \eta^2)^2}
\end{align}

Now, the potential critical points will be $\eta^2$ where $s\widetilde{\text{CD}}_{\eta^2}' = 0$. That is $0 = p^2(\tau^2 + \eta^2)^2-q^2(\sigma^2 + \eta^2)^2$. Thus, the critical points are:

\begin{align}
    \eta_0^2 = \frac{\sigma^2q-\tau^2p}{p-q} \\\eta_1^2 =\frac{-\sigma^2q-\tau^2p}{p+q}
\end{align}

However, we observe that $\eta_1^2$ is always negative, thus the only useful critical point in the postive $\eta^2$ region is $\eta_0^2$. By our assumption, we know that $p-q \geq 0$. First, we observe that there is no critical point when $p=q$ as $\eta_0^2$ does not exist. Thus, $\eta_0^2$ is positive when the following conditions hold.

\begin{align}
    \sigma^2q-\tau^2p >0 \\
    \frac{\tau^2}{\sigma^2} < \frac{\|\zeta_0 - \zeta_1\|_2}{\|\mu_0 - \mu_1\|_2} < 1
\end{align}

Next, we will show that this critical point is always a maximum in the positive $\eta^2$ regime.

First, consider the second derivative of the signed noisy Chernoff difference.

\begin{align}
    \frac{\partial^2}{\partial (\eta^2)^2}s\widetilde{\text{CD}}_{\eta^2} =  \frac{q^2}{4(\tau^2 + \eta^2)^3} - \frac{p^2}{4(\sigma^2 + \eta^2)^3}.
\end{align}

Plugging in the relevant critical point we observe that

\begin{align}
    \frac{\partial^2}{\partial (\eta^2)^2 }s  \widetilde{\text{CD}}_{\eta_0^2} &=  \frac{q^2}{4(\tau^2 + \eta_0^2)^3} - \frac{p^2}{4(\sigma^2 + \eta_0^2)^3} = \frac{q^2}{4(\frac{q(\sigma^2-\tau^2)}{p-q})^3} -  \frac{p^2}{4(\frac{p(\sigma^2-\tau^2)}{p-q})^3} \\&=  \frac{q^2(p-q)^3}{4q^3(\sigma^2-\tau^2)^3} - \frac{p^2(p-q)^3}{4p^3(\sigma^2-\tau^2)^3} = \frac{(p-q)^4}{4p^3q^3(\sigma^2-\tau^2)^3}.
\end{align}
Now, we know that $\sigma^2 > \tau^2$ so the second derivative of  this critical point of $s\widetilde{\text{CD}}_{\eta^2}$ must be a minimum. However, the goal is to examine the behavior of $\widetilde{\text{CD}}_{\eta^2}$. So, we can show that this is critical point is a maximum of $\widetilde{\text{CD}}_{\eta^2}$, by showing that $s\widetilde{\text{CD}}_{\eta_0^2}$ is negative. By plugging in the critical point, we can observe that it is a maximum of $\widetilde{\text{CD}}_{\eta^2}$.

\begin{align}
    s\widetilde{\text{CD}}_{\eta_0^2} &= \frac{q^2}{8(\tau^2 + \eta_0^2)} - \frac{p^2}{8(\sigma^2 + \eta_0^2)} = \frac{q^2}{8(\tau^2 + \frac{\sigma^2q-\tau^2p}{p-q})} - \frac{p^2}{8(\sigma^2 + \frac{\sigma^2q-\tau^2p}{p-q})} \\&= \frac{(p-q)}{8}(\frac{q}{\sigma^2-\tau^2}-\frac{p}{\sigma^2-\tau^2}) = \frac{(p-q)(q-p)}{8(\sigma^2 - \tau^2)}
\end{align}

Now, this value is always negative as we know $p > q$ and $\sigma^2 > \tau^2$. Thus, the positive critical point is a maximum. Now, we can analyze the positive $\eta^2$ regime for a reflection point. 

\begin{align}
    &s\widetilde{\text{CD}}_{\eta^2} = \frac{q^2}{8(\tau^2 + \eta^2)} - \frac{p^2}{8(\sigma^2 + \eta^2)} \\ \notag
    \\
    &8q^2(\sigma^2 + \eta^2) - 8p^2(\tau^2 + \eta^2) = 0 \\ \notag
    \\
    &\eta^2 = \frac{q^2\sigma^2-p^2\tau^2}{p^2-q^2} = \frac{\|\zeta_0 - \zeta_1\|_2^2\sigma^2-\|\mu_0 - \mu_1\|_2^2\tau^2}{\|\mu_0 - \mu_1\|_2^2-\|\zeta_0 - \zeta_1\|_2^2}.
\end{align}

Now, the denominator of this is always positive, thus, $\eta^2$ is positive when the following holds:

\begin{align}
    \|\zeta_0 - \zeta_1\|_2^2\sigma^2-|\mu_0 - \mu_1\|_2^2\tau^2 > 0
    \\ \frac{\tau^2}{\sigma^2} < \frac{\|\zeta_0 - \zeta_1\|^2_2}{\|\mu_0 - \mu_1\|^2_2} < 1.
\end{align}

\paragraph{Case 3.} Finally, we can examine the behavior when none of these conditions hold. Suppose $\frac{\tau^2}{\sigma^2} \geq \frac{\|\zeta_0 - \zeta_1\|_2}{\|\mu_0 - \mu_1\|_2} = \frac{q}{p}$. We can analyze the sign of the numerator of $s\widetilde{\text{CD}}_{\eta^2}'$ by writing it as

\begin{align}
    &p^2(\tau^2 + \eta^2)^2 - q^2(\sigma^2 + \eta^2)^2 \\
            &= (p^2(\tau^2 + \eta^2) - q^2(\sigma^2 + \eta^2))(p^2(\tau^2 + \eta^2) + q^2(\sigma^2 + \eta^2)).
\end{align}

Thus, to analyze the sign, we can analyze $p^2(\tau^2 + \eta^2) - q^2(\sigma^2 + \eta^2)$. We observe

\begin{align}
    p^2(\tau^2 + \eta^2) - q^2(\sigma^2 + \eta^2) = p^2\tau^2 - q^2\sigma^2 + \eta^2(p^2 - q^2).
\end{align}

Now, from our initial assumption, $p^2 - q^2 \geq 0$. From the assumption that $\frac{\tau^2}{\sigma^2} \geq  \frac{q}{p}$, $p^2\tau^2 - q^2\sigma^2 \geq 0$. Thus, the sign is always positive. Now, to show that $\widetilde{\text{CD}}_{\eta^2}$ is non-increasing, we will show $s\widetilde{\text{CD}}_{\eta^2}$ is not positive. 

\begin{align}
    s\widetilde{\text{CD}}_{\eta^2} &= \frac{q^2}{8(\tau^2 + \eta^2)} - \frac{p^2}{8(\sigma^2 + \eta^2)}  = \frac{q^2(\sigma^2 + \eta^2) - p^2(\tau^2 + \eta^2)}{8((\tau^2 + \eta^2))(\sigma^2 + \eta^2)}
\end{align}

Now, we analyze $q^2(\sigma^2 + \eta^2) - p^2(\tau^2 + \eta^2)$. We can see that this is equivalent to $q^2\sigma^2 - p^2\tau^2 + \eta^2(q^2 - p^2)$. From our initial assumption, $q^2 - p^2 \leq 0$. From the assumption that $\frac{\tau^2}{\sigma^2} \geq \frac{q}{p}$, $q^2\sigma^2 - p^2\tau^2 \leq 0$. Thus, $s\widetilde{\text{CD}}_{\eta^2}$ is always negative and $\widetilde{\text{CD}}_{\eta^2}$ is non-increasing over the positive $\eta^2$ regime.
\end{proof}

\newpage
\section{Supplemental Gaussian Figures} \label{appx:figures}

\subsection{Experimental Details}
\label{ssec:gauss_det}
For the synthetic Gaussian experiments, we use 1-D Gaussian distributions for $P_0(x), P_1(x), Q_0(x),$ and $Q_1(x)$ and get 100,000 i.i.d. samples from each distribution to create a balanced dataset. We train a Gaussian Na\"{i}ve Bayes classifier for each of the groups (this provides the Bayes optimal classifier). We measure the overall accuracy of our classifier and quantify fairness by using the true positive rate disparity between groups (as this corresponds to the dominating exponent). To achieve fairness-accuracy trade-offs, we choose different prior beliefs of our labels for the Gaussian Na\"{i}ve Bayes classifier. By perturbing the prior probabilities for the unprivileged group, we create fairness-accuracy curves for each of our settings\footnote{A proxy for adjusting the threshold of the Bayes optimal classifier \citep{dutta2020there}.}.

\subsection{Fairness-Accuracy vs Log Fairness-Accuracy} \label{ssec:logcurves}

\begin{figure}[!ht]
  \centering

  \begin{subfigure}{0.48\columnwidth}
    \centering
    \includegraphics[width=\linewidth]{imgs/figureA1.pdf}
    \caption{}
    \label{fig:caseb1a}
  \end{subfigure}
  \hfill
  \begin{subfigure}{0.48\columnwidth}
    \centering
    \includegraphics[width=\linewidth]{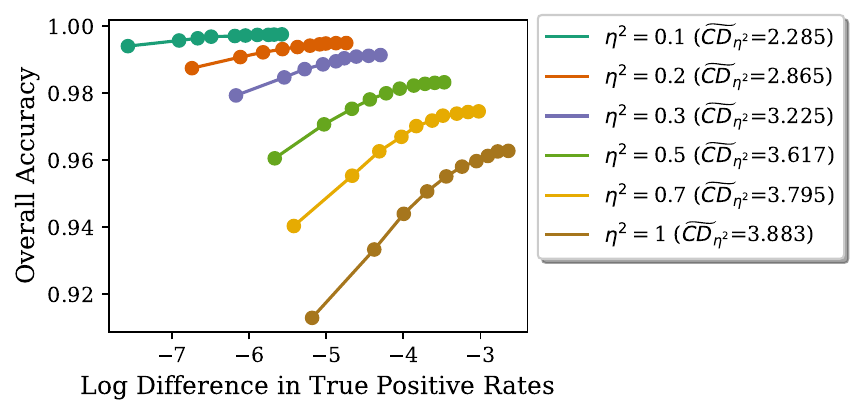}
    \caption{}
    \label{fig:caseb1b}
  \end{subfigure}

  \caption{{\footnotesize \textbf{(Case 1: Privacy Hurts Fairness)} $\mu_0 = 0, \mu_1 = 16.5, \sigma = 2.43$, $\zeta_0 = 0.5, \zeta_1 = 3.8, \tau = 0.55$. (a) Fairness-Accuracy Curve. (b) Log Fairness-Accuracy Curve. We observe a steepening effect.}}
\end{figure}

\begin{figure}[ht]
  \centering
  \begin{subfigure}{0.48\columnwidth}
    \centering
    \includegraphics[width=\linewidth]{imgs/figure2b.pdf}
    \caption{}
    \label{fig:caseb2a}
  \end{subfigure}
  \hfill
  \begin{subfigure}{0.48\columnwidth}
    \centering
    \includegraphics[width=\linewidth]{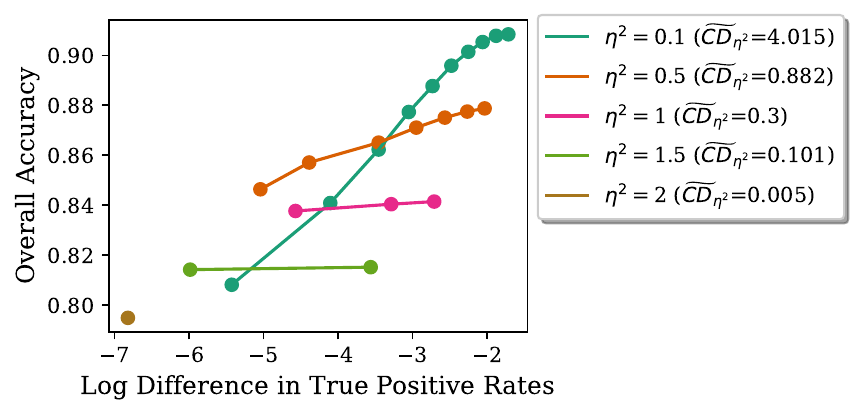}
    \caption{}
    \label{fig:caseb2b}
  \end{subfigure}
  \caption{{\footnotesize \textbf{(Case 2: Privacy Can Give Free Fairness)} $\mu_0 = -4.2, \mu_1 = 1.3, \sigma = 3$, $\zeta_0 = 0.3, \zeta_1 = 2.7, \tau = 0.25$. (a) Fairness-Accuracy Curve. (b) Log Fairness-Accuracy Curve. We observe a flattening effect.}}
\end{figure}

\begin{figure}[!ht]
  \centering
  \begin{subfigure}{0.48\columnwidth}
    \centering
    \includegraphics[width=\linewidth]{imgs/figure3b.pdf}
    \caption{}
    \label{fig:caseb3a}
  \end{subfigure}
  \hfill
  \begin{subfigure}{0.48\columnwidth}
    \centering
    \includegraphics[width=\linewidth]{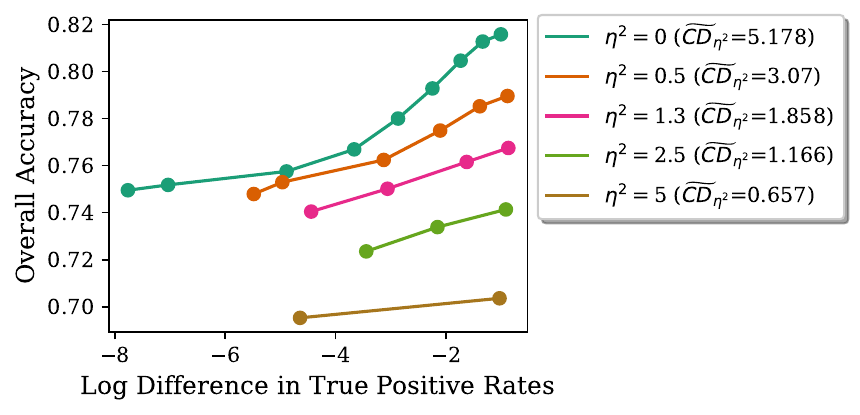}
    \caption{}
    \label{fig:caseb3b}
  \end{subfigure}
  \caption{{\footnotesize \textbf{(Case 3: Triple Trade-off)} $\mu_0 = -4.2, \mu_1 = 1.3, \sigma = 0.85$, $\zeta_0 = 0.6, \zeta_1 = 1.6, \tau = 0.6$. (a) Fairness-Accuracy Curve. (b) Log Fairness-Accuracy Curve. We observe a much slower flattening effect.}}
\end{figure}
\newpage

\subsection{Case 1: Extended plots}\label{ssec:case1ext}

\begin{figure}[!ht]
  \centering
  \begin{subfigure}{0.48\columnwidth}
    \centering
    \includegraphics[width=\linewidth]{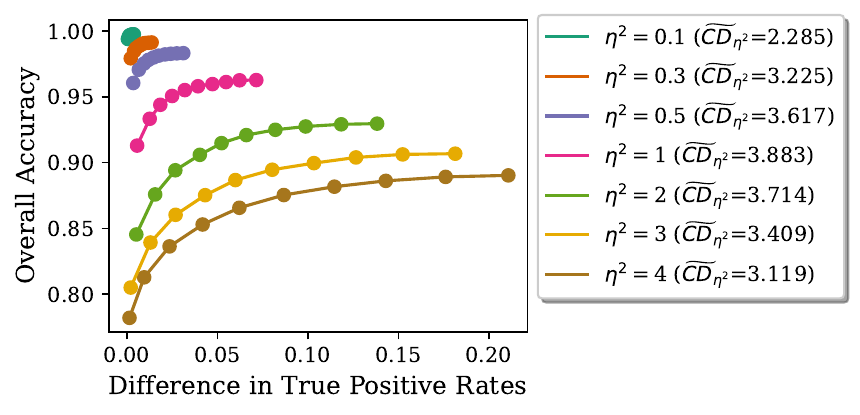}
    \caption{}
    \label{fig:a1appx}
  \end{subfigure}
  \hfill
  \begin{subfigure}{0.48\columnwidth}
    \centering
    \includegraphics[width=\linewidth]{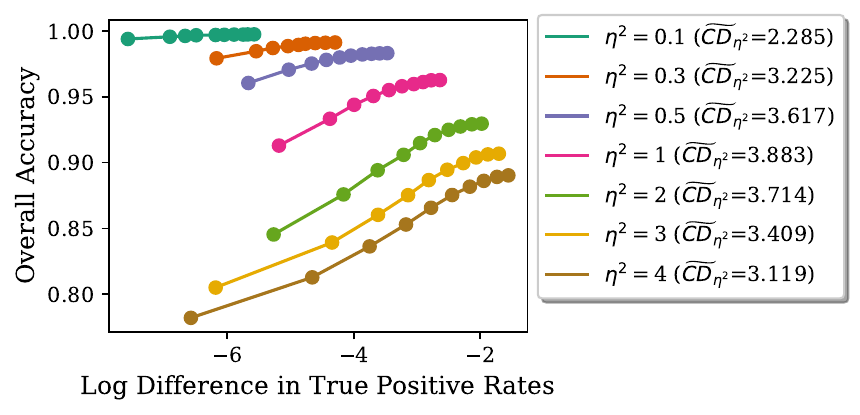}
    \caption{}
    \label{fig:a1logappx}
  \end{subfigure}
  \caption{{\footnotesize \textbf{(Case 1: Extension)} $\mu_0 = 0, \mu_1 = 16.5, \sigma = 2.43$, $\zeta_0 = 0.5, \zeta_1 = 3.8, \tau = 0.55$ (a) Fairness-Accuracy Curve. (b) Log Fairness-Accuracy Curve. After reaching the maximum CD, the fairness-accuracy curves begin to slowly flatten (nearly imperceptible), reflecting the slow decay in CD.}}
\end{figure}
\subsection{Case 2: More detail} \label{ssec:case2_moredetail}

\begin{figure}[!ht]
  \centering
  \begin{subfigure}{0.45\columnwidth}
    \centering
    \includegraphics[width=\linewidth]{imgs/figure2a.pdf}
    \caption{}
    \label{fig:caseb21}
  \end{subfigure}
  \begin{subfigure}{0.45\columnwidth}
    \centering
    \includegraphics[width=\linewidth]{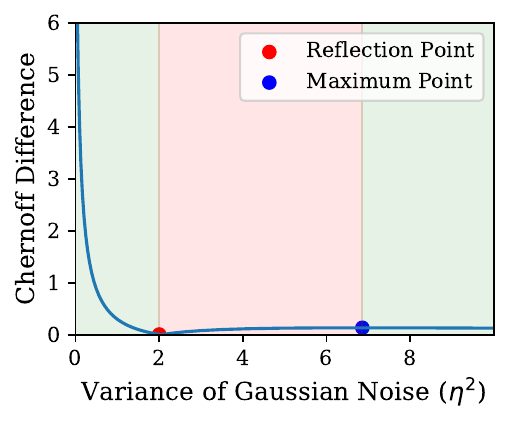}
    \caption{}
    \label{fig:casebb22}
  \end{subfigure}
  \caption{{\footnotesize \textbf{(Case 2: Privacy Can Give Free Fairness)} $\mu_0 = -4.2, \mu_1 = 1.3, \sigma = 3$, $\zeta_0 = 0.3, \zeta_1 = 2.7, \tau = 0.25$. (a) Initial plot of $\widetilde{\text{CD}}_{\eta^2}$. (b) Full plot showing presence of maximum point.}}
\end{figure}

\begin{figure}[!ht]
  \centering
  \begin{subfigure}{0.48\columnwidth}
    \centering
    \includegraphics[width=\linewidth]{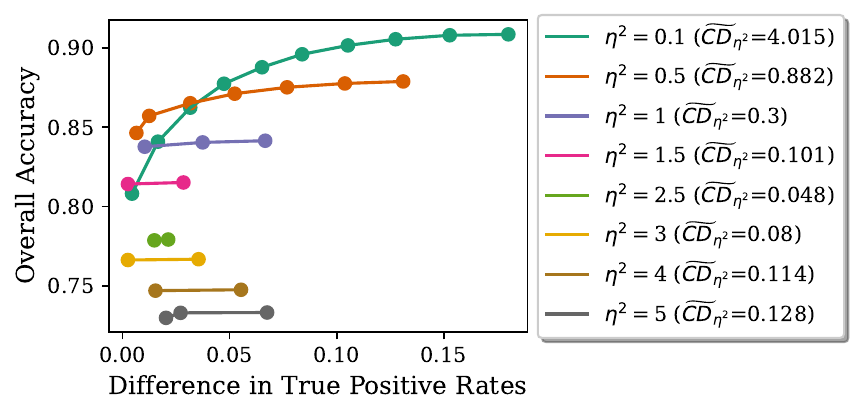}
    \caption{}
    \label{fig:a2appx}
  \end{subfigure}
  \hfill
  \begin{subfigure}{0.48\columnwidth}
    \centering
    \includegraphics[width=\linewidth]{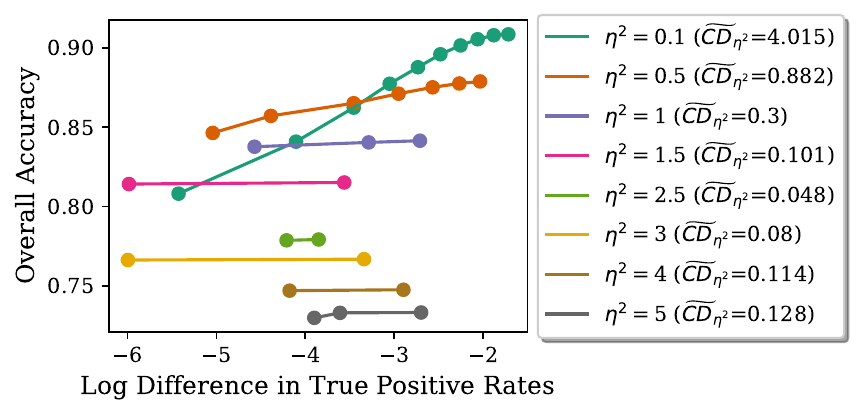}
    \caption{}
    \label{fig:a2logappx}
  \end{subfigure}
  \caption{{\footnotesize \textbf{(Case 2: Extension)} (a) Fairness-Accuracy Curve. (b) Log Fairness-Accuracy Curve. After reaching the reflection point, CD begins to increase. However this spike is so small any change in slope in the fairness-accuracy curve is imperceptible.}}
\end{figure}

\section{Consistency}\label{sec:consist}

\subsection{Setting}
Let $r(x) = \frac{P_1(x)}{P_0(x)}$  be the density ratio and let $f(u) = -\log \mathbb{E}_{x \sim P_0}[r(x)^u]$. We also define the corresponding empirical estimators $r_n(x)$ and $f_n(u) = -\log \frac{1}{n}\sum r_n(x_i)^u$. We operate under the assumption that $P_0$ and $P_1$ share a support, $r(x)$ is bounded i.e., $r(x) \in [c, C]$ for some constants $0 < c< C< \infty$.

We further assume that $r_n(x)$ is a consistent estimator of $r(x)$ and is bounded via a clipping mechanism i.e., $r_n(x) \in [c,C]$. Clipping is typically applied after estimation to prevent numerical instability and prevent estimates from growing too large or small. This clipping does not affect consistency under our assumptions if the clipping bound is wider than the true bounds on the density ratio. Finally, we assume that the estimator $r_n(x)$ is trained on its own samples, independent of the evaluation points.

\subsection{Proofs}

We will use Newey-McFadden's main consistency theorem to show consistency of the Chernoff Information estimator.

\begin{theorem}[\textbf{Newey-McFadden Main Consistency Theorem}, Theorem 2.1 \citep{newey1994large}]\label{thm:newmcf} If there is a function f such that (i) $f(u)$ is uniquely maximized at $u^*$; (ii) $U$ is compact; (iii) $f$ is continuous; (iv) $f_n(u)$ converges uniformly in probability to $f(u)$, then $\hat{u}\rightarrow u^*$.
\end{theorem}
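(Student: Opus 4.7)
The plan is to prove consistency by the classical extremum-estimation argument: combine the maximizer property of $\hat{u}$ with uniform convergence to sandwich $f(\hat{u})$ near $f(u^*)$, and then use continuity, compactness, and uniqueness of the maximizer to promote this value-closeness into closeness of the argument itself. Throughout I write $\hat{u} \in \arg\max_{u \in U} f_n(u)$, so by definition $f_n(\hat{u}) \ge f_n(u^*)$, and I abbreviate $\Delta_n = \sup_{u \in U} |f_n(u) - f(u)|$, which by hypothesis (iv) tends to $0$ in probability.

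First I would establish a deterministic sandwich bound on $f(\hat{u})$. Writing $f(\hat{u}) = f_n(\hat{u}) + [f(\hat{u}) - f_n(\hat{u})] \ge f_n(u^*) - \Delta_n$, and then bounding $f_n(u^*) \ge f(u^*) - \Delta_n$ from the same uniform-convergence term, gives
\[
f(\hat{u}) \;\ge\; f(u^*) - 2\Delta_n.
\]
This already shows $f(\hat{u}) \to f(u^*)$ in probability, but the remaining and harder task is to promote this convergence of function values into convergence of the argument $\hat{u}$ itself.

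For the identification step I would fix an arbitrary open neighborhood $N$ of $u^*$ in $U$ and study $U \setminus N$. By (ii), $U$ is compact, so $U \setminus N$ is closed in a compact set, hence compact; by (iii), the continuous function $f$ attains its supremum on $U \setminus N$; and by the unique-maximizer assumption (i), this supremum is strictly less than $f(u^*)$. Setting
\[
\delta_N \;=\; f(u^*) - \sup_{u \in U \setminus N} f(u),
\]
we obtain $\delta_N > 0$, and crucially $\delta_N$ depends only on $N$ and not on $n$. It is precisely this independence from $n$ that will let me couple $\delta_N$ with the vanishing stochastic term $\Delta_n$.

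Finally I would combine the two halves. On the event $\{\hat{u} \notin N\}$ the definition of $\delta_N$ forces $f(\hat{u}) \le f(u^*) - \delta_N$, which together with the sandwich bound above forces $2\Delta_n \ge \delta_N$. Consequently
\[
\Pr(\hat{u} \notin N) \;\le\; \Pr\!\bigl(\Delta_n \ge \delta_N/2\bigr) \;\longrightarrow\; 0
\]
by (iv), and since $N$ was arbitrary this gives $\hat{u} \to u^*$ in probability. The main obstacle is the identification step itself: producing a uniform positive gap $\delta_N$ outside a neighborhood of $u^*$ genuinely requires all three of compactness, continuity, and unique identification working in concert, since dropping any one of them allows $f$-values arbitrarily close to $f(u^*)$ to persist outside any neighborhood (compactness fails to attain the supremum, discontinuity allows a spike at $u^*$ with nearby values unrelated to it, and non-uniqueness permits a competing maximizer outside $N$). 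Once $\delta_N$ is secured, the coupling with $\Delta_n$ is immediate and the proof is complete.
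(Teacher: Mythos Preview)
Your proof is correct and follows the classical extremum-estimator argument exactly as it appears in \citet{newey1994large}. However, the paper does not actually prove this theorem: it is stated as a cited external result (Theorem~2.1 of \citet{newey1994large}) and then invoked as a black box in the proof of Theorem~2 to establish consistency of the Chernoff Information estimator. So there is no ``paper's own proof'' to compare against; you have simply supplied the standard proof of a result the authors take as given.
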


We begin with a series of lemmas before completing the proof of our theorem. First, we are able to show compactness via: 

\begin{lemma} \label{lem:closed}
    We can write Chernoff Information as $-\inf_{[0,1]}\log(\int P_0(x)^{1-u}P_1(x)^udx)$.
\end{lemma}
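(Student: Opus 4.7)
The plan is to verify that enlarging the infimum from the open interval $(0,1)$ to the closed interval $[0,1]$ leaves the value unchanged. First I would evaluate the skewed Bhattacharyya functional $g(u) = \log\bigl(\int P_0(x)^{1-u}P_1(x)^u\,dx\bigr)$ at the two endpoints. At $u=0$ the integrand collapses to $P_0(x)$, which integrates to $1$, and symmetrically at $u=1$ the integrand is $P_1(x)$, again integrating to $1$. Hence $g(0)=g(1)=0$.

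Next I would invoke Lemma \ref{lemma:convex}, which states that $g$ is convex in $u$. A convex function on $[0,1]$ lies below (or on) the chord connecting its endpoint values, so the fact that $g(0)=g(1)=0$ forces $g(u)\le 0$ for every $u\in[0,1]$. From here the conclusion is immediate: if $\inf_{u\in[0,1]} g(u) < 0$, then the infimum cannot be attained at the endpoints (where $g=0$), so it must coincide with $\inf_{u\in(0,1)} g(u)$; and if $\inf_{u\in[0,1]} g(u) = 0$, then $g\equiv 0$ on $[0,1]$ and again the two infima agree. Either way, replacing $(0,1)$ by $[0,1]$ does not change Chernoff Information, and we gain the compact domain that the subsequent appeal to Theorem \ref{thm:newmcf} requires.

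The only non-trivial input is the convexity claim, which is already supplied by Lemma \ref{lemma:convex}; everything else reduces to an endpoint evaluation and a one-line chord argument. Thus I do not anticipate any real obstacle in carrying out the proof.
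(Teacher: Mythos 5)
Your proof is correct and takes essentially the same route as the paper, which likewise argues that convexity of the skewed Bhattacharyya distance makes the infima over $(0,1)$ and $[0,1]$ coincide (the paper simply defers the boundary analysis to Remark~2 of \citet{nielsen2022revisiting} rather than spelling it out). Your explicit endpoint evaluation $g(0)=g(1)=0$ plus the chord argument is a clean, self-contained way to fill in exactly that cited step, and it only needs convexity rather than the strict convexity the paper invokes.
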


\begin{proof}
    Computing the infimum over the open interval (0,1) and closed interval [0,1] yield the same result following the strict convexity of 
$f$. We point to Remark 2 from \citet{nielsen2022revisiting} for a discussion of the boundary conditions. In fact, in some references, Chernoff Information is presented using this closed interval \citep{cover1999elements}. This provides a compactness property that can be utilized in the following lemmas.
\end{proof}

\begin{lemma} \label{ref:pointwise}
    $\{f_n\}$ converges pointwise in probability for $u \in [0,1]$
\end{lemma}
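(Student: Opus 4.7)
The plan is to reduce the claim to a law-of-large-numbers argument combined with the continuous mapping theorem. Fix $u \in [0,1]$ and write $f_n(u) = -\log g_n(u)$ and $f(u) = -\log g(u)$, where $g_n(u) = \tfrac{1}{n}\sum_{i=1}^n r_n(x_i)^u$ and $g(u) = \mathbb{E}_{x \sim P_0}[r(x)^u]$. Because $r_n$ and $r$ take values in $[c,C]$ with $c>0$, both $g_n(u)$ and $g(u)$ lie in the compact interval $[c^u, C^u] \subset (0,\infty)$, on which $-\log(\cdot)$ is continuous and Lipschitz. By the continuous mapping theorem it therefore suffices to prove that $g_n(u) \to g(u)$ in probability.

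For this I would decompose $g_n(u) - g(u) = A_n + B_n$, where
\begin{align}
A_n &= \frac{1}{n}\sum_{i=1}^n \bigl(r_n(x_i)^u - r(x_i)^u\bigr), \\
B_n &= \frac{1}{n}\sum_{i=1}^n r(x_i)^u \;-\; \mathbb{E}_{x \sim P_0}[r(x)^u].
\end{align}
The term $B_n$ is an average of i.i.d.\ random variables bounded in $[c^u, C^u]$, so the weak law of large numbers gives $B_n \to 0$ in probability. For $A_n$, the map $t \mapsto t^u$ is Lipschitz on $[c, C]$ with finite constant $L_u = u\,c^{u-1}$ (this is precisely where the $c>0$ lower bound matters, since $u-1$ may be negative), so
\[
|A_n| \;\leq\; \frac{L_u}{n}\sum_{i=1}^n |r_n(x_i) - r(x_i)|.
\]

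The main obstacle is controlling $A_n$, since $r_n$ and the evaluation points $\{x_i\}$ are both random. Here I would exploit the stated independence between the training sample used to fit $r_n$ and the evaluation sample: conditioning on $r_n$, the average $\tfrac{1}{n}\sum_i |r_n(x_i) - r(x_i)|$ has conditional expectation $\mathbb{E}_{x \sim P_0}[|r_n(x) - r(x)|]$, which is exactly the $L^1(P_0)$ error of the density ratio estimator. The uniform clipping into $[c,C]$ lets one upgrade the assumed consistency of $r_n$ (pointwise in probability) to convergence of this integrated error to $0$ via the bounded convergence theorem together with Fubini; a Markov inequality then yields $A_n \to 0$ in probability. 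Combining this with $B_n \to 0$ in probability and applying the continuous mapping theorem to $-\log$ on $[c^u, C^u]$ concludes that $f_n(u) \to f(u)$ in probability for every fixed $u \in [0,1]$.
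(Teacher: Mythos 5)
Your proof is correct, but it follows a genuinely different decomposition from the paper's. The paper splits $\frac{1}{n}\sum r_n(x_i)^u - \mathbb{E}[r(x)^u]$ into $\bigl(\frac{1}{n}\sum r_n(x_i)^u - \mathbb{E}[r_n(x)^u]\bigr) + \bigl(\mathbb{E}[r_n(x)^u] - \mathbb{E}[r(x)^u]\bigr)$, handling the first piece by a law of large numbers (implicitly conditional on the training sample, since $r_n$ changes with $n$) and the second by Skorokhod's representation theorem, the continuous mapping theorem, and dominated convergence. You instead compare $r_n$ to $r$ directly on the evaluation sample ($A_n$) and leave a clean i.i.d.\ fluctuation term ($B_n$). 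Your $B_n$ is a genuine weak LLN for the fixed bounded function $r(\cdot)^u$, sidestepping the triangular-array subtlety in the paper's first term, and your treatment of $A_n$ --- the Lipschitz bound $|t^u - s^u| \le u c^{u-1}|t-s|$ on $[c,C]$, then Fubini plus bounded convergence to show the expected $L^1(P_0)$ error vanishes, then Markov --- replaces the paper's Skorokhod argument with a more elementary and arguably more transparent route. Both arguments rest on the same reading of the consistency hypothesis (pointwise-in-$x$ convergence in probability of $r_n(x)$ to $r(x)$, together with the uniform clipping into $[c,C]$ and independence of the training and evaluation samples), and both conclude via continuity of $-\log$ on an interval bounded away from zero. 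The only thing I would tighten is the sentence ``bounded convergence together with Fubini'': spell out that $\mathbb{E}_{\mathrm{train}}\mathbb{E}_x[|r_n(x)-r(x)|] = \mathbb{E}_x\mathbb{E}_{\mathrm{train}}[|r_n(x)-r(x)|]$, that the inner expectation tends to $0$ for each $x$ by boundedness, and that the outer integral then tends to $0$ by a second application of bounded convergence; as written it is a little compressed, but the logic is sound.
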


\begin{proof}

First, we see that we can decompose the following difference into two terms,

\begin{equation}
   \frac{1}{n}\sum r_n(x_i)^u - \mathbb{E}_{x \sim P_0}[r(x)^u] = (\frac{1}{n}\sum r_n(x_i)^u - \mathbb{E}_{x \sim P_0}[r_n(x)^u]) + (\mathbb{E}_{x \sim P_0}[r_n(x)^u]- \mathbb{E}_{x \sim P_0}[r(x)^u])
\end{equation}

Next, we can see that the first term $\frac{1}{n}\sum r_n(x_i)^u - \mathbb{E}_{x \sim P_0}[r_n(x)^u]$ converges to 0 almost surely (a.s.) by the law of large numbers.

 Since $r_n$ is consistent and uniformly bounded in $[c, C]$, there exists $\widetilde{r}_n(x)^u$ and $\widetilde{r}(x)^u$ defined on another space such that $\widetilde{r}_n(x)^u$ has the same distribution as $r_n(x)^u$, $\widetilde{r}(x)^u$ has the same distribution as $r(x)^u$, and $\widetilde{r}_n(x)^u$ converges to $\widetilde{r}(x)^u$ a.s. (via the Continuous Mapping Theorem and Skorohod's Representation Theorem). The Dominated Convergence Theorem then implies $\mathbb{E}[\widetilde{r}_n(x)^u]- \mathbb{E}[\widetilde{r}(x)^u]$ converges to 0. Since the two sequences share the same distributions, convergence in the Skorokhod space implies convergence in the original space. It follows that $\mathbb{E}[r_n(x)^u]- \mathbb{E}[r(x)^u]$ converges to 0.

Since both terms converge to 0, their sum converges a.s. to 0 and hence in probability to 0. Thus, we have $\frac{1}{n}\sum r_n(x_i)^u - \mathbb{E}[r(x)^u] \rightarrow 0$ in probability. By applying the continuous mapping theorem, we have $-\log(\frac{1}{n}\sum r_n(x_i)^u) + \log(\mathbb{E}[r(x)^u]) \rightarrow 0$. Thus, we have $\{f_n\}$ converges pointwise in probability.
\end{proof}

\begin{lemma} \label{lem:equicontinuity}
    $\{f_n\}$ is stochastically equicontinuous on $[0,1]$
\end{lemma}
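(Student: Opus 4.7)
The plan is to prove something stronger than stochastic equicontinuity: that the family $\{f_n\}$ is in fact uniformly (deterministically) Lipschitz on $[0,1]$, with a Lipschitz constant that depends only on the clipping bounds $0 < c < C < \infty$ and not on $n$ or the random sample. Uniform Lipschitz continuity trivially implies stochastic equicontinuity (take $\delta = \varepsilon/L$), so this collapses the stochastic statement to a clean analytic one.

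Concretely, the first step is to observe that since $r_n(x) \in [c,C]$ almost surely by the clipping assumption, the map $u \mapsto r_n(x)^u = e^{u\log r_n(x)}$ is smooth in $u$ with derivative $r_n(x)^u \log r_n(x)$. For $u \in [0,1]$ we have $r_n(x)^u \in [\min(c,1),\max(C,1)]$ and $|\log r_n(x)| \leq \max(|\log c|,|\log C|)$. Consequently
\begin{equation}
\left|\tfrac{\partial}{\partial u} r_n(x)^u\right| \;\le\; \max(C,1)\cdot\max(|\log c|,|\log C|) \;=:\; M,
\end{equation}
uniformly in $x$, in $u \in [0,1]$, and in $n$. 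So $u \mapsto r_n(x)^u$ is $M$-Lipschitz, and therefore $u \mapsto \tfrac{1}{n}\sum_{i=1}^n r_n(x_i)^u$ is also $M$-Lipschitz for every realization of the sample.

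Next, since $\tfrac{1}{n}\sum_{i=1}^n r_n(x_i)^u \geq \min(c,1) > 0$ uniformly, the outer $-\log$ is Lipschitz on the range of the sample mean with constant $1/\min(c,1)$. Composing, we get
\begin{equation}
|f_n(u) - f_n(u')| \;\le\; L\,|u - u'| \qquad \text{for all } u,u' \in [0,1],
\end{equation}
with $L = M/\min(c,1)$, a deterministic constant independent of $n$ and of the data.

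Finally, I would deduce stochastic equicontinuity: given $\varepsilon > 0$, choose $\delta = \varepsilon/L$, and then for every $n$ and every realization,
\begin{equation}
\sup_{\substack{u,u'\in[0,1]\\|u-u'|<\delta}} |f_n(u)-f_n(u')| \;\le\; L\delta \;=\; \varepsilon,
\end{equation}
so the probability in the definition of stochastic equicontinuity is identically zero. The only real issue to watch is the boundary case where $c < 1 < C$ or $C \leq 1$ when bounding $r_n(x)^u$ from above and below; I would handle it by writing both bounds in terms of $\max(c^0,c^1)$ and $\min(C^0,C^1)$ etc., but nothing subtle happens there. The entire argument is a clean consequence of the clipping assumption, and this uniform-Lipschitz strengthening is what allows Lemmas~\ref{ref:pointwise} and~\ref{lem:equicontinuity} to be upgraded to uniform convergence in probability via a standard compactness argument (using Lemma~\ref{lem:closed}) when verifying the hypotheses of Theorem~\ref{thm:newmcf}.
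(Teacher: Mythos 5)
Your proposal is correct and follows essentially the same route as the paper: bound the derivative of $u \mapsto r_n(x)^u$ using the clipping bounds $[c,C]$, conclude the empirical average is uniformly Lipschitz, and compose with the Lipschitz continuity of $-\log(\cdot)$ on a range bounded away from zero to obtain a deterministic Lipschitz constant independent of $n$ and the sample, which immediately gives stochastic equicontinuity. Your version is slightly more explicit about the constants ($M = \max(C,1)\max(|\log c|,|\log C|)$ and $L = M/\min(c,1)$) and about the final $\delta = \varepsilon/L$ step, but the argument is the same.
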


\begin{proof}

First, recall the density ratio $r(x)$ is bounded between $[c, C]$. Via clipping we assume the estimator $r_n(x)$ follows the same bound. Now, for any $x$, the function $u \rightarrow r_n(x)^u$ has a bounded derivative, $r_n(x)^u \log r_n(x)$, that only depends on constants $c$ and $C$, thus is Lipschitz in $u$. It follows that the average 
$$
\frac{1}{n}\sum_{i=1}^n r_n(x_i)^u
$$
is Lipschitz in $u$ again with the constant dependent only on $c$ and $C$. Now, since $-\log(\cdot)$ is Lipschitz on the relevant range (away from 0), it follows from the compositional property of Lipschitz functions that
$$
f_n(u) = -\log\Big(\tfrac{1}{n}\sum_{i=1}^n r_n(x_i)^u\Big)
$$
is Lipschitz in $u$, with constant only dependent on $c$ and $C$. Thus, we have a uniform Lipschitz bound across all elements of the sequenece $\{f_n\}$. It follows that $\{f_n\}$ is stochastically equicontinuous on $[0,1]$.
\end{proof}

\begin{lemma} \label{lem:uniformly}
    $\{f_n\}$ converges uniformly in probability for $u \in [0,1]$ 
\end{lemma}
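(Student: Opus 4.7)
The plan is to use a standard $\epsilon$-net argument that upgrades the pointwise convergence of Lemma \ref{ref:pointwise} to uniform convergence by combining it with the compactness of $[0,1]$ (Lemma \ref{lem:closed}) and the equicontinuity of Lemma \ref{lem:equicontinuity}. The key structural observation is that the Lipschitz constant $L$ produced in Lemma \ref{lem:equicontinuity} is deterministic and depends only on the clipping bounds $c$ and $C$; every $f_n$ therefore shares the same Lipschitz constant, which reduces the argument from stochastic equicontinuity to a deterministic covering calculation.

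First, I would verify that the limit $f$ inherits the Lipschitz bound with the same constant $L$. Since $r(x) \in [c,C]$, the map $u \mapsto r(x)^u$ has derivative $r(x)^u \log r(x)$, which is uniformly bounded on $[0,1]$; dominated convergence then shows that $\mathbb{E}_{P_0}[r(x)^u]$ is Lipschitz in $u$, and composing with $-\log(\cdot)$ on the range $[\min(c,1),\max(C,1)]$ (bounded away from zero) gives the Lipschitz property of $f$. Alternatively, this follows from the pointwise limit of uniformly Lipschitz functions being Lipschitz with the same constant.

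Next, for a given $\epsilon > 0$, I would pick $\delta = \epsilon/(3L)$ and use compactness to cover $[0,1]$ with a finite $\delta$-net $\{u_1,\dots,u_K\}$. For any $u \in [0,1]$ and the nearest $u_k$, the triangle inequality yields
\begin{equation*}
|f_n(u) - f(u)| \;\le\; |f_n(u) - f_n(u_k)| + |f_n(u_k) - f(u_k)| + |f(u_k) - f(u)| \;\le\; \tfrac{2\epsilon}{3} + \max_{k}|f_n(u_k) - f(u_k)|.
\end{equation*}
Taking the supremum over $u$ and applying Lemma \ref{ref:pointwise} to each of the finitely many net points together with a union bound gives $\Pr\bigl[\max_k |f_n(u_k) - f(u_k)| \ge \epsilon/3\bigr] \to 0$, and therefore $\Pr\bigl[\sup_{u \in [0,1]}|f_n(u) - f(u)| \ge \epsilon\bigr] \to 0$.

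The only subtle step is the Lipschitz continuity of $f$; once that is in hand, the rest is routine covering-argument bookkeeping. This lemma then supplies the uniform-convergence hypothesis needed to invoke Newey--McFadden (Theorem \ref{thm:newmcf}), which together with compactness (Lemma \ref{lem:closed}), continuity of $f$, and uniqueness of the minimizer (via the strict convexity of Lemma \ref{lemma:convex}) closes the consistency argument for CINE.
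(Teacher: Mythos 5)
Your proof is correct. The paper's own proof of this lemma is a one-line citation: it invokes Theorem~1 of Newey (1991), which states that pointwise convergence in probability plus stochastic equicontinuity on a compact set yields uniform convergence in probability. You instead prove the implication directly, and your argument is a clean special case of exactly that theorem: because the Lipschitz constant in Lemma~\ref{lem:equicontinuity} is deterministic and depends only on the clipping bounds $c$ and $C$, the equicontinuity here is not merely stochastic but uniform and non-random, so the general machinery collapses to a finite $\delta$-net, a triangle inequality, and a union bound over the net points where Lemma~\ref{ref:pointwise} applies. Your verification that the limit $f$ inherits the same Lipschitz constant (via the bounded derivative $r(x)^u\log r(x)$ and dominated convergence, or as a pointwise limit of uniformly Lipschitz functions) is the one step the citation-based proof leaves implicit, and you handle it correctly. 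What your route buys is a self-contained and elementary argument that makes transparent why uniform convergence holds; what the paper's route buys is brevity and a statement that would still apply if the equicontinuity were genuinely stochastic rather than deterministic. Either is acceptable; yours is arguably the more informative presentation given that the uniform Lipschitz bound is already established.
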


\begin{proof}
    This follows from Theorem 1 from \citet{newey1991uniform} as well as pointwise convergence in Lemma \ref{ref:pointwise} and stochastic equicontinuity in Lemma \ref{lem:equicontinuity}.
\end{proof}
    
Finally, using these lemmas, we can complete the proof of our theorem.

\begin{theorem}[Restatement of Theorem 2] Let $P_0$ and $P_1$ be distributions with common support and bounded density ratio. Then the Chernoff Information estimator, constructed using a consistent density ratio estimator, is itself consistent.
\end{theorem}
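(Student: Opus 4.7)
The plan is to combine the preliminary lemmas B.1--B.4 with the Newey--McFadden consistency theorem (Theorem B.1) to conclude that the Chernoff Information estimator is consistent. Writing $f(u) = -\log \mathbb{E}_{x \sim P_0}[r(x)^u]$ and $f_n(u) = -\log \tfrac{1}{n}\sum_i r_n(x_i)^u$, the population Chernoff Information equals $C(P_0,P_1) = \sup_{u \in [0,1]} f(u)$ by Lemma B.1, and the estimator is $\widehat{C}_n = f_n(\hat{u}_n)$ where $\hat{u}_n \in \arg\max_{u \in [0,1]} f_n(u)$. Establishing $\widehat{C}_n \to C(P_0,P_1)$ in probability is therefore a standard M-estimator consistency problem.

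First I verify the four hypotheses of Newey--McFadden. Compactness of $U = [0,1]$ is immediate from Lemma B.1, and uniform convergence of $f_n$ to $f$ in probability is exactly Lemma B.4. Continuity of $f$ follows from the bounded density ratio assumption: since $r(x)^u$ is continuous in $u$ and uniformly bounded (by $\max(1,C)$ for $u \in [0,1]$), the dominated convergence theorem yields continuity of $u \mapsto \mathbb{E}_{P_0}[r(x)^u]$, and $-\log(\cdot)$ is continuous on the positive reals. Uniqueness of the maximizer $u^*$ reduces to strict concavity of $f$, equivalently strict convexity of the skewed Bhattacharyya distance; Lemma 3 supplies convexity, and strict convexity reduces to the classical strict convexity of the log moment generating function applied to the bounded random variable $\log r(x)$, valid whenever $r$ is not $P_0$-almost-surely constant.

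With all four conditions in hand, Newey--McFadden gives $\hat{u}_n \to u^*$ in probability. To upgrade this to convergence of the value $\widehat{C}_n$, I apply the triangle inequality
\begin{equation}
|\widehat{C}_n - C(P_0,P_1)| \;\leq\; \sup_{u \in [0,1]} |f_n(u) - f(u)| \;+\; |f(\hat{u}_n) - f(u^*)|.
\end{equation}
The first term converges to zero in probability by the uniform convergence in Lemma B.4; the second converges to zero in probability by continuity of $f$ together with consistency of $\hat{u}_n$, via the continuous mapping theorem. Combined, these give $\widehat{C}_n \to C(P_0,P_1)$ in probability.

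The main anticipated obstacle is handling the uniqueness-of-maximizer condition, since Lemma 3 only provides convexity. The degenerate case $P_0 = P_1$, in which $r \equiv 1$ and $f \equiv 0$, must be treated separately: there every $u \in [0,1]$ is a maximizer, but both $C(P_0,P_1)$ and $\widehat{C}_n$ collapse to zero, so consistency follows by a direct argument from uniform convergence alone. Outside this degenerate case, the standard log-MGF argument gives strict convexity, and the remaining ingredients follow essentially from the boundedness of the density ratio and the preliminary lemmas already in place.
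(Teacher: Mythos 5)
Your proposal is correct and follows essentially the same route as the paper: both verify the four hypotheses of the Newey--McFadden theorem using the compactness lemma (closed interval $[0,1]$), the uniform-convergence lemma, and continuity/uniqueness of the maximizer from the convexity of the skewed Bhattacharyya distance. Your write-up is in fact slightly more complete than the paper's, since you explicitly supply the triangle-inequality step upgrading $\hat{u}_n \to u^*$ to convergence of the estimated value $\widehat{C}_n$, and you handle the degenerate case $P_0 = P_1$ where the maximizer is non-unique --- both points the paper leaves implicit or delegates to a citation.
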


\begin{proof}
    This result follows from Theorem \ref{thm:newmcf}. Continuity and the uniqueness of the maximizer are given by \citet{nielsen2022revisiting}. Compactness is given by Lemma \ref{lem:closed}. Uniform Convergence in probability is given by Lemma \ref{lem:uniformly}. Hence, 
    \begin{equation}
        \widehat{\text{CI}} \rightarrow \text{CI}.
    \end{equation}

    Therefore, we have consistency of the Chernoff Information estimator. 
\end{proof}

\section{Experiment Details} \label{sec:exp_det}

\subsection{Estimation} \label{ssec:estexpdet}

To implement the density ratio estimation from \citet{choi2022density}, we utilize the codebase provided by the authors\footnote{https://github.com/ermongroup/dre-infinity}. Following the success in their paper, we choose to implement the ``Joint'' model where the model, in addition to time scores, aims to recover data scores. For our interpolation technique, we utilize $p_\lambda(x) = \lambda P_0(x) + \sqrt{1 - \lambda^2} P_1(x)$. For the model architecture, we utilize the Joint architecture proposed in Appendix F of \citet{choi2022density} which leverages multilayer perceptrons with Exponential Linear Unit (ELU) activations as the modules: 

\begin{enumerate}[label=\arabic*.]
  \item \textbf{Joint (Shared)}:
    \begin{align*}
      &\mathrm{Linear}(3,256)
      \;\to\;\mathrm{ELU}
      \;\to\;\mathrm{Linear}(256,512)
      \;\to\;\mathrm{chunk}(2)
    \end{align*}
    \begin{enumerate}[label=(\alph*)]
      \item \textbf{Time Module}:
        \begin{align*}
          &\mathrm{Linear}(256,256)
          \;\to\;\mathrm{ELU}
          \;\to\;\mathrm{Linear}(256,256)
          \;\to\;\mathrm{ELU}
          \;\to\;\mathrm{Linear}(256,1)
        \end{align*}
      \item \textbf{Data Module}:
        \begin{align*}
          &\mathrm{Linear}(256,256)
          \;\to\;\mathrm{ELU}
          \;\to\;\mathrm{Linear}(256,256)
          \;\to\;\mathrm{ELU}
          \;\to\;\mathrm{Linear}(256,2).
        \end{align*}
    \end{enumerate}
\end{enumerate}

We utilize a Cosine Annealing learning rate scheduler with an initial learning rate of 1e-5 and a batch size of 128. We train all models for 30,000 steps with no importance weighting. All other parameters are standard to the original implementation. All models are trained using NVIDIA L40S GPUs. We highlight an ablation study over hyperparameters in Section \ref{sec:ablation}. For the Monte Carlo method, we utilize all available data; we find that this yields more accurate estimation. To solve the infimum, we utilize Brent's algorithm, although we highlight that this is not required following Lemma~\ref{lemma:convex} (any convex optimization method would suffice).

\subsubsection{Time Complexity}

The overall time complexity of our Chernoff Information estimator can be given as:

\begin{equation}
T_\text{total} = T_\text{DRE} + n \cdot  C_\text{int} + n \cdot C_\text{MC} + T_\text{cvx}
\end{equation}

where $n$ is the number of samples used to compute Chernoff Information, $T_\text{DRE}$ is time spent on density ratio estimation (DRE) in Line 1, $C_\text{int}$ is the time required for one integral operations given in Line 2,  $C_{MC}$ is the time required for simple addition/floating point operations required in Line 3, and $T_\text{cvx}$ is the time required to solve the convex optimization in Line 4, which is a constant time operation. Hence, for other than $T_\text{DRE}$, the rest of the operation is done in $O(n)$ and training a neural network for DRE is often the most time-consuming part, which took approximately 30 minutes on a single NVIDIA L40S GPU.

\subsection{Gaussian Estimation} \label{ssec:gaussexpdet}

For all Gaussian experiments, we estimate Chernoff information using the parameters and algorithm specified in \ref{ssec:estexpdet}. We work with 10,000 samples to perform our estimation. First, to compare to closed form Chernoff Information, we create 2D Gaussians $\mathcal{N}(\textbf{0},  \sigma^2\mathbf{I})$ and $\mathcal{N}(\textbf{1},  \sigma^2\mathbf{I})$ and vary the parameter $\sigma^2$. For each $\sigma^2$, we perform 5 estimates to create error bars.

For more complex 5D Gaussian distributions, we compare to the algorithm provided by \citet{nielsen2022revisiting}. We implement the algorithm directly from the paper and use a stopping criteria of 1e-7. We compute Chernoff Information across different levels of noise added to $\mathcal{N}(\textbf{0}, \frac{1}{2}\mathbf{I})$ and $\mathcal{N}(\textbf{1},\textbf{I})$. We repeat each trial 5 times to obtain error bars.

For the final experiment, we repeat the previous example; however instead of adding noise, we scale up the dimension. We again compare to Gaussian estimation algorithm for distributions $\mathcal{N}(\textbf{0}, \frac{1}{2}\mathbf{I})$ and $\mathcal{N}(\textbf{1},\textbf{I})$ and repeat each trial 5 times to obtain error bars.

\subsection{Datasets} \label{ssec:dataexpdet}

\paragraph{Adult.} For the Adult dataset, we select the continuous features, ``age", ``fnlwgt", ``education-num", ``capital-gain", ``capital-loss", ``hours-per-week", for our experiments. We apply a standard scaler, transforming the distributions so that they are centered at 0 with standard deviation of 1. We consider the standard income classification task and use Sex as the sensitive attribute. To obtain Chernoff Information estimates, we train our model as specified in Section \ref{ssec:estexpdet}. We train each estimate 5 times to create error bars. We compute Chernoff Difference from these Chernoff Information values.

For fairness-accuracy curves, we train split logistic regression models using an 80/20 train-test split for each model. We fix the model for the dominating group (Male) and sweep class weights for the other group, dropping non-Pareto optimal points. We find that the False Negative Rate disparity dominates and thus utilize that in our plot.

\paragraph{HSLS.}

Following \citet{jeong2022fairness}, we select a subset of features from the HSLS dataset. We further restrict to continuous features, giving: ``X1MTHID",``X1MTHUTI", ``X1MTHEFF", ``X1FAMINCOME", ``X1SCHOOLBEL". We apply a standard scaler, giving distributions centered at 0 with standard deviation of 1. For our sensitive attribute create groups using the Race attribute, giving Asian/White and URM (Other). We utilize the ``X1TXMSCR" column to create a binary classification task (top 50th percentile or not). For Chernoff Information and fairness-accuracy curves, we replicate the experiments we conduct on the Adult dataset. We find that the False Positive Rate disparity dominates and thus utilize that in our plot.

\paragraph{MNIST}

We replicate the previous experiments using 2D embeddings of MNIST data which were obtained via a trained autoencoder with latent dimension $d_z=2$. 

\begin{table}[h]
\centering
\caption{Convolutional autoencoder architecture used for MNIST experiments.}
\begin{tabular}{lccc}
\toprule
Layer & Kernel / Units & Stride & Output Shape \\
\midrule
Input & -- & -- & $1 \times 28 \times 28$ \\
Conv2D + ReLU & $3\times3$, 32 ch. & 2 & $32 \times 14 \times 14$ \\
Conv2D + ReLU & $3\times3$, 64 ch. & 2 & $64 \times 7 \times 7$ \\
Flatten & -- & -- & $64\cdot7\cdot7$ \\
Linear & $d_z=2$ & -- & $2$ \\
\midrule
Linear & $64\cdot7\cdot7$ & -- & $64 \times 7 \times 7$ \\
ConvTranspose2D + ReLU & $4\times4$, 32 ch. & 2 & $32 \times 14 \times 14$ \\
ConvTranspose2D + Sigmoid & $4\times4$, 1 ch. & 2 & $1 \times 28 \times 28$ \\
\bottomrule
\end{tabular}
\end{table}

We train the autoencoder for 15 epochs, with a batch size of 256, using the Adam optimizer and mean squared error loss function. We choose digit $3$ for $P_0$, digit $4$ for $P_1$, digit $7$ for $Q_0$, and digit $9$ for $Q_1$. 

\paragraph{Dataset Licenses} All datasets used in this work (Adult, HSLS, MNIST) are publicly available and approved for non-commercial research use under their respective data use agreements..

\subsection{Mixture of Gaussian}

For mixture of Gaussian experiments, we design the distributions as follows. We work with 10,000 samples from each group for our experiments. We utilize the Chernoff Information estimator with the same parameters from Section \ref{ssec:estexpdet} and obtain the fairness accuracy curves by replicating the experimental procedure from Section \ref{sec:chernoff_info}.

\paragraph{Mixture 1}

\begin{align}
P_0 &= \tfrac{1}{2}\,\mathcal{N}\!\left(
  \begin{bmatrix}10.5 \\ 10.5\end{bmatrix},
  6.56\,\textbf{I}
\right)
+ \tfrac{1}{2}\,\mathcal{N}\!\left(
  \begin{bmatrix}10.8 \\ 11.2\end{bmatrix},
  6.43\,\textbf{I}
\right), \\[6pt]
P_1 &= \tfrac{1}{2}\,\mathcal{N}\!\left(
  \begin{bmatrix}0 \\ 0\end{bmatrix},
  1.89\,\textbf{I}
\right)
+ \tfrac{1}{2}\,\mathcal{N}\!\left(
  \begin{bmatrix}-1 \\ 2\end{bmatrix},
  \begin{bmatrix}9.43 & 3.3 \\ 3.3 & 9.43\end{bmatrix}
\right), \\[6pt]
Q_0 &= \tfrac{1}{2}\,\mathcal{N}\!\left(
  \begin{bmatrix}0.5 \\ 0.5\end{bmatrix},
  0.55\,\textbf{I}
\right)
+ \tfrac{1}{2}\,\mathcal{N}\!\left(
  \begin{bmatrix}0.5 \\ 0.5\end{bmatrix},
  0.45\,\textbf{I}
\right), \\[6pt]
Q_1 &= \tfrac{1}{2}\,\mathcal{N}\!\left(
  \begin{bmatrix}3.8 \\ 3.8\end{bmatrix},
  \begin{bmatrix}0.2 & 0.1 \\ 0.1 & 0.2\end{bmatrix}
\right)
+ \tfrac{1}{2}\,\mathcal{N}\!\left(
  \begin{bmatrix}2.8 \\ 2.8\end{bmatrix},
  0.55\,\textbf{I}
\right).
\end{align}

\begin{figure}[!ht]
  \centering
  \begin{subfigure}{0.45\columnwidth}
    \centering
    \includegraphics[width=\linewidth]{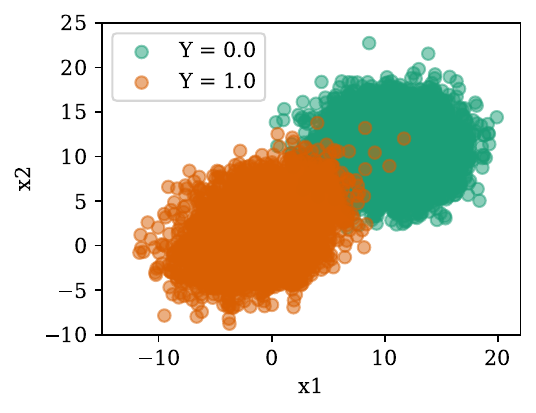}
    \caption{Group $P$}
  \end{subfigure}
  \begin{subfigure}{0.45\columnwidth}
    \centering
    \includegraphics[width=\linewidth]{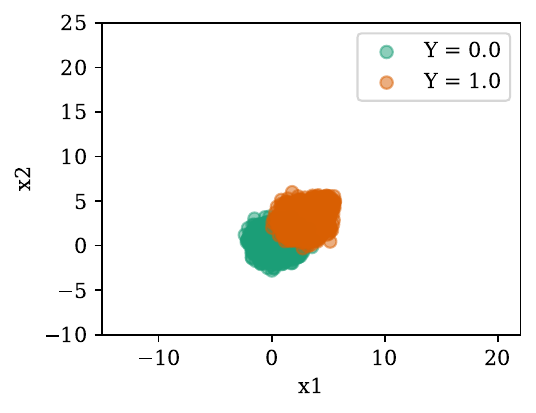}
    \caption{Group $Q$}
  \end{subfigure}
  \caption{{\footnotesize \textbf{(Mixture 1)}}}
\end{figure}

\paragraph{Mixture 2}
\begin{align}
P_0 &= \tfrac{1}{2}\,\mathcal{N}\!\left(
  \begin{bmatrix}0.2 \\ 0.1\end{bmatrix},
  0.2\,\textbf{I}
\right)
+ \tfrac{1}{2}\,\mathcal{N}\!\left(
  \begin{bmatrix}0.5 \\ 0.2\end{bmatrix},
  \begin{bmatrix}0.25 & 0.1 \\ 0.1 & 0.25\end{bmatrix}
\right), \\[6pt]
P_1 &= \tfrac{1}{2}\,\mathcal{N}\!\left(
  \begin{bmatrix}3.1 \\ 2.4\end{bmatrix},
  \begin{bmatrix}0.22 & 0.19 \\ 0.19 & 0.22\end{bmatrix}
\right)
+ \tfrac{1}{2}\,\mathcal{N}\!\left(
  \begin{bmatrix}2.9 \\ 3.2\end{bmatrix},
  0.24\,\textbf{I}
\right), \\[6pt]
Q_0 &= \tfrac{1}{2}\,\mathcal{N}\!\left(
  \begin{bmatrix}-4.2 \\ -5.2\end{bmatrix},
  9\,\textbf{I}
\right)
+ \tfrac{1}{2}\,\mathcal{N}\!\left(
  \begin{bmatrix}-6.2 \\ -3.2\end{bmatrix},
  \begin{bmatrix}9 & 5 \\ 5 & 9\end{bmatrix}
\right), \\[6pt]
Q_1 &= \tfrac{1}{2}\,\mathcal{N}\!\left(
  \begin{bmatrix}1.3 \\ 1.3\end{bmatrix},
  \begin{bmatrix}9 & 4 \\ 4 & 9\end{bmatrix}
\right)
+ \tfrac{1}{2}\,\mathcal{N}\!\left(
  \begin{bmatrix}1.3 \\ 1.3\end{bmatrix},
  8\,\textbf{I}
\right).
\end{align}

\begin{figure}[!ht]
  \centering
  \begin{subfigure}{0.45\columnwidth}
    \centering
    \includegraphics[width=\linewidth]{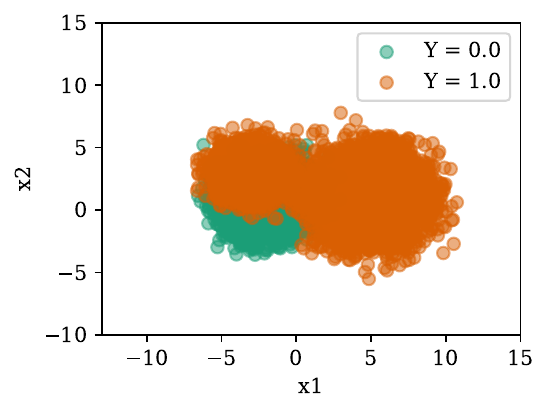}
    \caption{Group $P$}
  \end{subfigure}
  \begin{subfigure}{0.45\columnwidth}
    \centering
    \includegraphics[width=\linewidth]{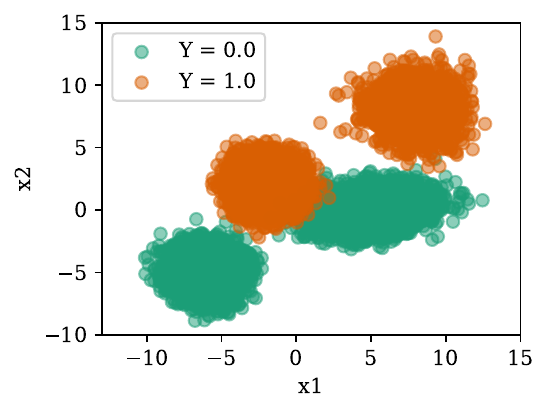}
    \caption{Group $Q$}
  \end{subfigure}
  \caption{{\footnotesize \textbf{(Mixture 2)}}}
\end{figure}

\newpage
\section{HSLS} \label{sec:hsls}

\begin{figure*}[h]
  \centering
  \begin{subfigure}{0.302\linewidth}
    \centering
\includegraphics[width=\linewidth]{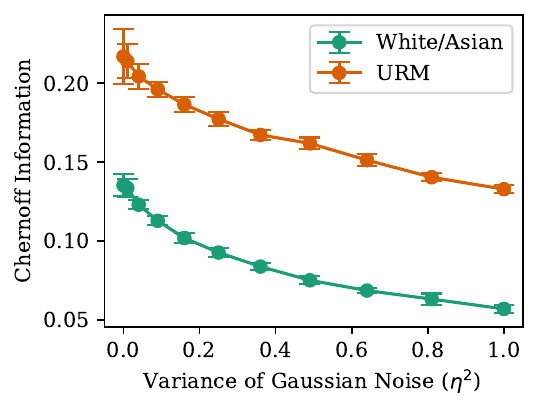}
    \caption{}
    \label{fig:hslsa}
  \end{subfigure}
  \begin{subfigure}{0.302\linewidth}
    \centering    \includegraphics[width=\linewidth]{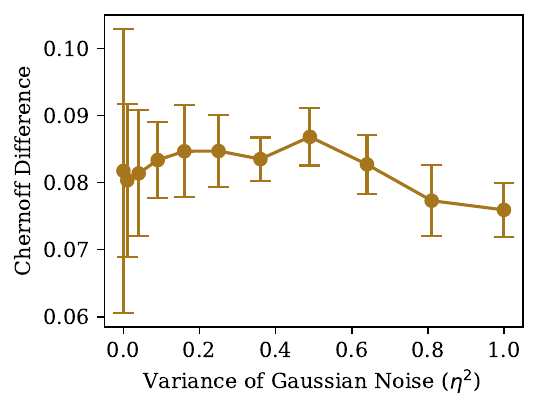}
    \caption{}
    \label{fig:hslsb}
  \end{subfigure}
  \begin{subfigure}{0.382\linewidth}
    \centering
\includegraphics[width=\linewidth]{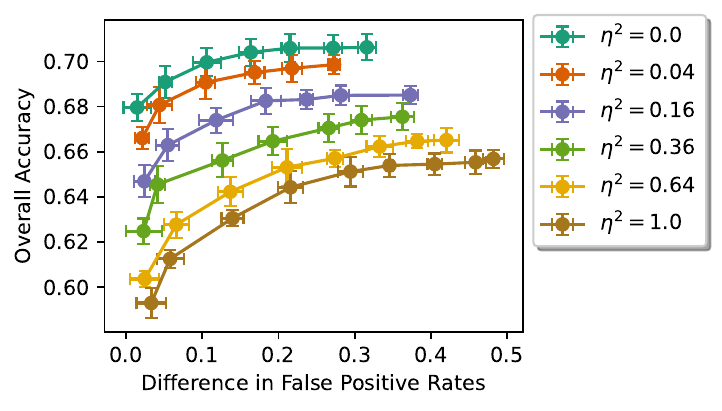}
    \caption{}
    \label{fig:hslsc}
  \end{subfigure}
  \caption{{\footnotesize \textbf{(HSLS Experiments)} (a) Chernoff Information for White/Asian and URM groups decrease as noise ($\eta^2$) increases. (b) Chernoff Difference values are larger with a small increase but still remain flat (Case 1). (c) Following the low Chernoff Difference values, are steeper and grow slightly more steep as noise is added.}
  }
\end{figure*}

\paragraph{HSLS.} We examine the behavior of Chernoff Information on the HSLS dataset \citep{ingels2011hsls}. 
Following \citet{jeong2022fairness}, we select a subset of the features (further filtering to include only continuous features) to perform a binary classification task (top 50th percentile mathematics test score). This gives a 5 dimensional dataset on which we perform our analysis. For the binary groups, we split the data into two groups based on Race, Asian/White and Under Represented Minorities (URM). We follow a similar approach to the Adult dataset to create the Chernoff Difference plots. Additionally, we follow a similar approach to generate fairness-accuracy plots, however, we find that the false positive rate dominates the error and the Chernoff Exponent, using that as our fairness metric.

Again, the Chernoff Information decays as noise is added, however as opposed to the Adult dataset, there is a much larger disparity between the value across groups. This is reflected as we see much larger values of Chernoff Difference in Figure \ref{fig:hslsb} and steeper fairness accuracy curves (Figure \ref{fig:hslsc}). 

Similar to the Adult dataset, we observe the Chernoff Difference remains relatively stable as the variance of noise ($\eta^2$) increases, however, we do note a small increase before the Chernoff Difference begins to slightly decrease (indicative of Case 1). We see that this trend is somewhat reflected, by a very subtle steepening of the fairness accuracy curve (we point the reader to Appendix \ref{sec:suppdat} for the log-fairness accuracy curve where this very subtle trend can be more easily examined). These observations could resemble Case 1, however, we point out that this trend is very subtle and for the most part, the slopes remain flat. Overall, these experiments suggest that in this dataset, privacy and fairness may be slightly less compatible, although the effect is minor.

\newpage
\section{Supplemental Real Data Figures} \label{sec:suppdat}

\begin{figure}[ht]
  \centering
  \begin{subfigure}{0.48\columnwidth}
    \centering
    \includegraphics[width=\linewidth]{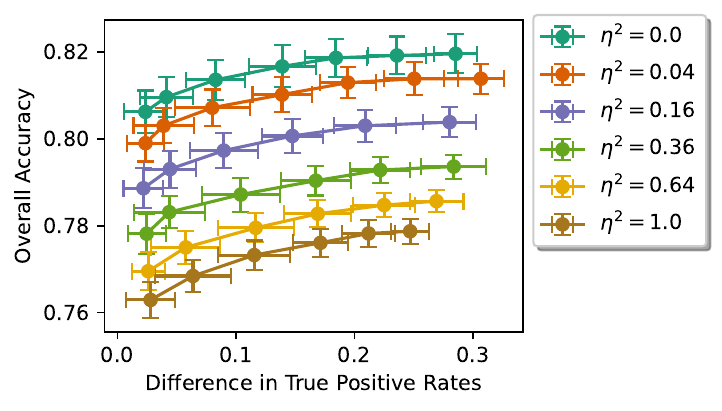}
    \caption{}
    \label{fig:caseBAdult}
  \end{subfigure}
  \hfill
  \begin{subfigure}{0.48\columnwidth}
    \centering
    \includegraphics[width=\linewidth]{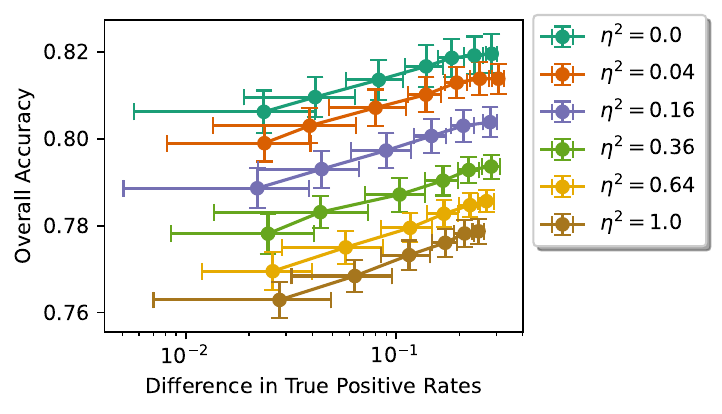}
    \caption{}
    \label{fig:caseBAdultB}
  \end{subfigure}
  \caption{{\footnotesize \textbf{(Adult Dataset)} (a) Fairness-Accuracy Curve. (b) Log Fairness-Accuracy Curve. We find that the curves remain very stable, reflected by the small changes in Chernoff Difference.}}
\end{figure}

\begin{figure}[ht]
  \centering
  \begin{subfigure}{0.48\columnwidth}
    \centering
    \includegraphics[width=\linewidth]{imgs/figure6c.pdf}
    \caption{}
    \label{fig:caseBhsls}
  \end{subfigure}
  \hfill
  \begin{subfigure}{0.48\columnwidth}
    \centering
    \includegraphics[width=\linewidth]{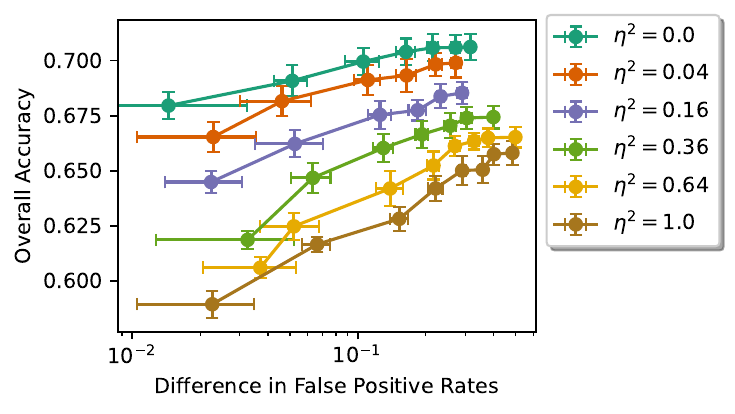}
    \caption{}
    \label{fig:caseBhslsB}
  \end{subfigure}
  \caption{{\footnotesize \textbf{(HSLS Dataset)} (a) Fairness-Accuracy Curve. (b) Log Fairness-Accuracy Curve. We observe a very slight steepening of the fairness accuracy curves, further emphasized by the log-fairness accuracy curves.}}
\end{figure}

\newpage
\section{Ablation Study} \label{sec:ablation}

All experiments are performed on 5D Gaussians $\mathcal{N}(\textbf{0},  \frac{1}{2}\mathbf{I})$ and $\mathcal{N}(\textbf{1},  \mathbf{I})$, with the standard setup mentioned in Section \ref{ssec:estexpdet} unless otherwise stated. All plots are Chernoff Information estimates across steps of the training procedure for the density ratio estimation.

\begin{figure*}[!htp]
    \centering
    \includegraphics[width=0.75\linewidth]{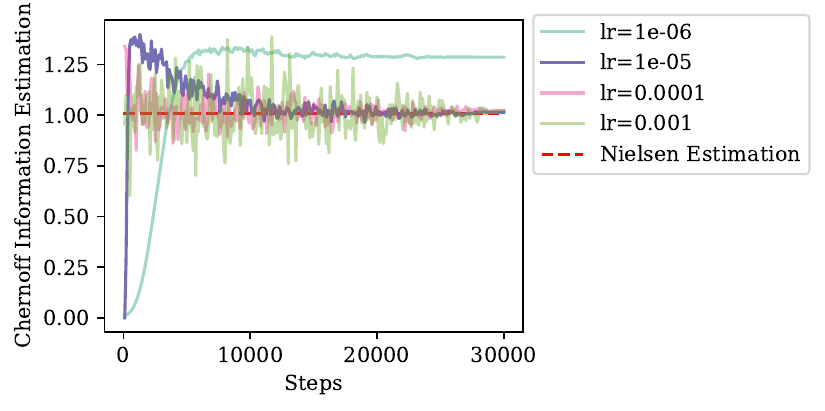}
    \caption{\textbf{(Learning Rate)}: Comparison of Chernoff Information Estimation for different learning rates. We find learning rate is the most sensitive hyperparameter and may converge to an incorrect value if too small or be unstable if too large. We explore a selection of learning rates and find that 1e-5 works well under Gaussian and tabular settings.}
\end{figure*}

\begin{figure*}[t!h]
    \centering
    \includegraphics[width=0.75\linewidth]{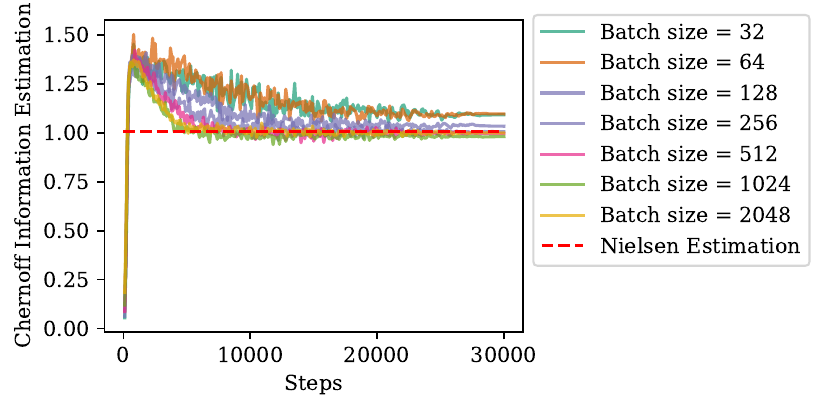}
    \caption{\textbf{(Batch Size)} Comparison of Chernoff Information Estimation for different batch sizes. We find that batch size has a more subtle effect on the estimation outcome. Larger batch sizes tend to be slightly more beneficial for estimation. However, the sample sizes of the tabular datasets we use are smaller than the synthetic Gaussian experiments, causing us to scale the batch size down to 128 for our experiments. These larger batch sizes lead to explosion of the density ratio estimates in tabular settings. We find that this slightly smaller value performs well on both the Gaussian and Tabular settings and prevents the density ratio from exploding in the later case.}
\end{figure*}
\clearpage
\vspace*{0pt}
\begin{figure*}[h!]
    \centering
    \includegraphics[width=0.75\linewidth]{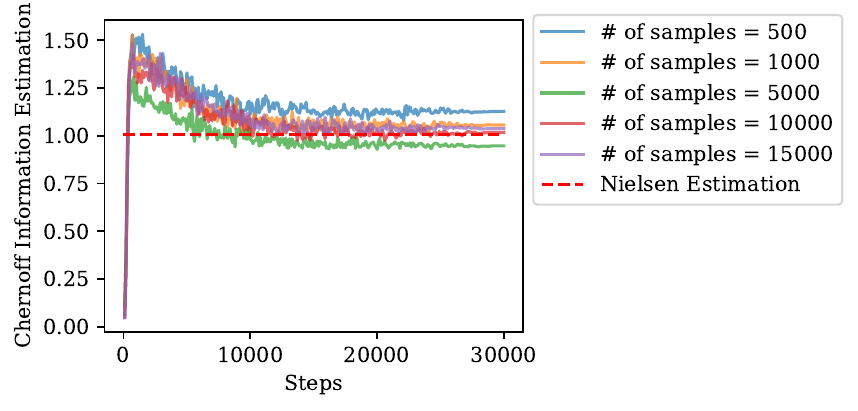}
    \caption{\textbf{(Sample Size)} Comparison of Chernoff Information Estimation for sample sizes. We find that even down to 1000 samples (sample sizes similar to the smallest group in the tabular datasets), the estimation remains fairly accurate. When there are not enough samples, the estimate begins to degrade, as we can begin observing with 500 samples. This sample size and dimension relationship is further highlighted in Figure \ref{fig:gaussc}.}
\end{figure*}

\end{document}